\newcommand{\norm}[1]{\left\lVert#1\right\rVert}
\newcommand{\tr}{{\operatorname{tr}}}
\newcommand{\cE}{\mathcal{E}}
\newcommand{\cM}{\mathcal{M}}
\newcommand{\cN}{\mathcal{N}}
\newcommand{\cO}{\mathcal{O}}
\newcommand{\cP}{\mathcal{P}}
\newcommand{\cS}{\mathcal{S}}
\newcommand{\cV}{\mathcal{V}}
\newcommand{\cX}{\mathcal{X}}
\newtheorem{theorem}{Theorem}
\newtheorem{lemma}{Lemma}
\newtheorem{proposition}{Proposition}
\newtheorem{assumption}{Assumption}
\def \RR {\mathbb{R}}
\def \Pr {\mathbb{P}}
\def \SS {\mathbb{S}}
\newcommand{\EE}{\mathbb{E}}
\DeclareMathOperator*{\argmin}{argmin}
\DeclareMathOperator*{\col}{\mathbf{col}}
\def \Sstar {\mathcal{S}^\star}
\def \rstar {r^\star}
\def \Sigmastar {\Sigma^\star}
\def \gammastar {\gamma^\star}
\def \Ustar {U^\star}
\def \Dstar {D^\star}
\def \median {\mathrm{median}}
\title{RANSAC Revisited: An Improved Algorithm for Robust Subspace Recovery under Adversarial and Noisy Corruptions}
\author{
Guixian Chen\\
University of Michigan\\
\texttt{gxchen@umich.edu}\\
\and 
Jianhao Ma\\
University of Michigan\\
\texttt{jianhao@umich.edu}
\and
Salar Fattahi\\
University of Michigan\\
\texttt{fattahi@umich.edu}
}
\begin{document}

\maketitle

\begin{abstract}
    In this paper, we study the problem of robust subspace recovery (RSR) in the presence of both strong adversarial corruptions and Gaussian noise. Specifically, given a limited number of noisy samples---some of which are tampered by an adaptive and strong adversary---we aim to recover a low-dimensional subspace that approximately contains a significant fraction of the uncorrupted samples, up to an error that scales with the Gaussian noise. Existing approaches to this problem often suffer from high computational costs or rely on restrictive distributional assumptions, limiting their applicability in truly adversarial settings.
To address these challenges, we revisit the classical random sample consensus (RANSAC) algorithm, which offers strong robustness to adversarial outliers, but sacrifices efficiency and robustness against Gaussian noise and model misspecification in the process. We propose a two-stage algorithm, RANSAC+, that precisely pinpoints and remedies the failure modes of standard RANSAC. Our method is provably robust to both Gaussian and adversarial corruptions, achieves near-optimal sample complexity without requiring prior knowledge of the subspace dimension, and is more efficient than existing RANSAC-type methods. 
\end{abstract}

\section{Introduction}
\label{sec:intro}
Modern datasets are generated in increasingly high dimensions and vast quantities. A fundamental approach to analyzing such large-scale, high-dimensional datasets is to represent them using a potentially low-dimensional subspace—a process known as {\it subspace recovery}. By projecting data onto this subspace, we can significantly reduce its dimensionality while preserving its essential structure and information.
Motivated by this, we focus on the fundamental problem of {\it robust subspace recovery} (RSR): \vspace{2mm}

\noindent{\it Given a collection of $n$ points in $\mathbb{R}^d$---some of which may be corrupted by adversarial contamination, and all of which are perturbed by small noise---how can we efficiently and provably identify a subspace $\cS^\star$ of dimension $\dim(\cS^\star) = r^\star\ll d$ that nearly contains a significant portion of the uncorrupted samples?}\vspace{2mm}

In this context, we consider one of the most challenging corruption model, namely the \textit{adversarial and noisy contamination model}, formally defined as follows.

\begin{assumption}[Adversarial and noisy contamination model]
\label{dfn:adversarial_contamination}
Given a clean distribution $\cP$, a corruption parameter $\epsilon \in (0, \epsilon_0)$\footnote{Here $\epsilon_0$ is the breakdown point, which varies for different estimators. Roughly speaking, breakdown point is the largest fraction of contamination the data may contain when the estimator still returns a valid estimation.}, and a noise covariance $\Sigma_\xi$, the $(\epsilon,\Sigma_\xi)$-corrupted samples are generated as follows: (i) \( n \) i.i.d. samples are drawn from \( \mathcal{P} \). These samples are referred to as \textit{clean samples}. (ii) The adversary adds i.i.d. noise to each sample, drawn from a zero-mean Gaussian distribution with covariance \( \Sigma_\xi \). The resulting samples are referred to as \textit{inliers}. (iii) An arbitrarily powerful adversary inspects the samples, removes \( \lfloor \epsilon n \rfloor \) of them, and replaces them with arbitrary points, referred to as \textit{outliers}. 
\end{assumption}

The above contamination model generalizes a variety of existing models, including Huber's contamination model \cite[Section 5-7]{huber1964robust}, and the noiseless adversarial contamination model \cite[Definition 1.6]{diakonikolas2023algorithmic} (corresponding to $\Sigma_\xi = 0$).

Specifically, we consider scenarios where the clean samples drawn from the distribution \( \mathcal{P} \) reside in a low-dimensional space. Formally, this is captured by the following assumption.

\begin{assumption}[Distribution of clean samples]
\label{dfn:inliers}
The clean probability distribution $\cP$ is absolutely continuous, with a mean of zero\footnote{The zero-mean assumption on $\cP$ is made with minimal loss of generality. Specifically, if the samples $\cX = \{x_1,\dots, x_{2n}\}\subset \mathbb{R}^d$ are $(\epsilon, \Sigma_\xi)$-corrupted and generated from a distribution $\cP$ with an unknown, potentially nonzero mean, the samples $\widehat{\cX} = \{x_1-x_2,\dots, x_{2n-1}-x_{2n}\}\subset \mathbb{R}^d$ will be $(2\epsilon, 2\Sigma_\xi)$-corrupted from a zero-mean distribution $\cP'$. Furthermore, the clean samples will lie in the same $\rstar$-dimensional subspace as those of $\cX$. } and an unknown covariance matrix $\Sigma^\star \in \mathbb{R}^{d \times d}$, where the rank of $\Sigma^\star$, denoted as $\mathrm{rank}(\Sigma^\star) = \rstar\leq d$, is unknown.
\end{assumption}

Specifically, given an \( (\epsilon, \Sigma_\xi) \)-corrupted sample set as per \Cref{dfn:adversarial_contamination}, where the clean distribution satisfies \Cref{dfn:inliers}, our goal is to design an efficient method to recover the \( r^\star \)-dimensional subspace spanned by the clean samples, with a recovery error proportional to the magnitude of the Gaussian noise while remaining independent of the nature of the outliers.

\subsection{Failure of the existing techniques}\label{subsec::failure} 
One of the earliest algorithms for RSR is the \textit{random sample consensus} (RANSAC) algorithm, originally introduced in the 1980s~\cite{fischler1981random}. Despite its provable robustness to adversarial contamination, RANSAC becomes computationally prohibitive for large-scale instances~\cite{maunu2019robust}. This high computational cost is expected, as RSR under the adversarial contamination model is known to be NP-hard~\cite{haastad2001some, hardt2013algorithms}.
 To mitigate these computational costs, more recent approaches trade robustness for scalability. Specifically, they rely---either explicitly or implicitly---on additional assumptions about the distribution of outliers, thereby making them ineffective under truly adversarial corruption models.

While a detailed discussion of these methods is provided in \Cref{sec:related_works}, we illustrate their failure mechanisms with a toy example. Consider a scenario where the clean samples are drawn from a Normal distribution $N(0, \Sigmastar)$, where the covariance $\Sigmastar \in \mathbb{R}^{d\times d}$ has $\mathrm{rank}(\Sigmastar) = 10$, and the ambient dimension is $d=100$.  An adversary randomly replaces $\epsilon$-fraction of the clean samples with samples independently drawn from $N(0, \widehat{\Sigma})$, where $\mathrm{rank}(\widehat{\Sigma}) = 2$, and the eigenvectors corresponding to nonzero eigenvalues of $\widehat{\Sigma}$ are orthogonal to those of $\Sigmastar$. 

In \Cref{fig:toy_example}, we show the performance of five widely-used RSR algorithms: {\it Tyler's M-estimator} \cite{zhang2016robust}, {\it Fast Median Subspace} \cite{lerman2018fast}, {\it Geodesic Gradient Descent} \cite{maunu2019well}, {\it Randomlized-Find} \cite{hardt2013algorithms}, and RANSAC \cite{maunu2019robust}. A Python implementation of these algorithms (including our proposed method), along with a detailed discussion, is available at:
\begin{center}
\url{https://github.com/Christ1anChen/RSR_under_Adversarial_and_Noisy_Corruptions.git}.
\end{center}

As shown, most existing RSR methods fail rapidly as $\epsilon$ increases, largely due to their reliance on specific assumptions about outliers. In contrast, the classical RANSAC demonstrates greater stability under this adversarial setting. Moreover, despite the desirable robustness of RANSAC to adversarial corruption, its performance is hindered by three major challenges. First, it requires the exact value of the dimension $\rstar$ as input, which is rarely known in practice. \Cref{fig:RANSAC_compare} (left) demonstrates the performance of RANSAC when the dimension $r$ is overestimated by just one, i.e., $r=\rstar+1$. As shown, even this slight overestimation causes RANSAC to fail drastically, highlighting its extreme sensitivity to the choice of $r$. Second, as shown in \Cref{fig:RANSAC_compare} (middle), the addition of Gaussian noise to the samples significantly deteriorates the performance of RANSAC. Finally, as demonstrated in \Cref{fig:RANSAC_compare} (right), while RANSAC is computationally efficient for small values of $\rstar$, it becomes prohibitively expensive to run as $\rstar$ increases, resulting in a 1000-fold slow-down when $\rstar$ is raised from $10$ to $40$.

\begin{figure}
\centering
\includegraphics[width=10cm]{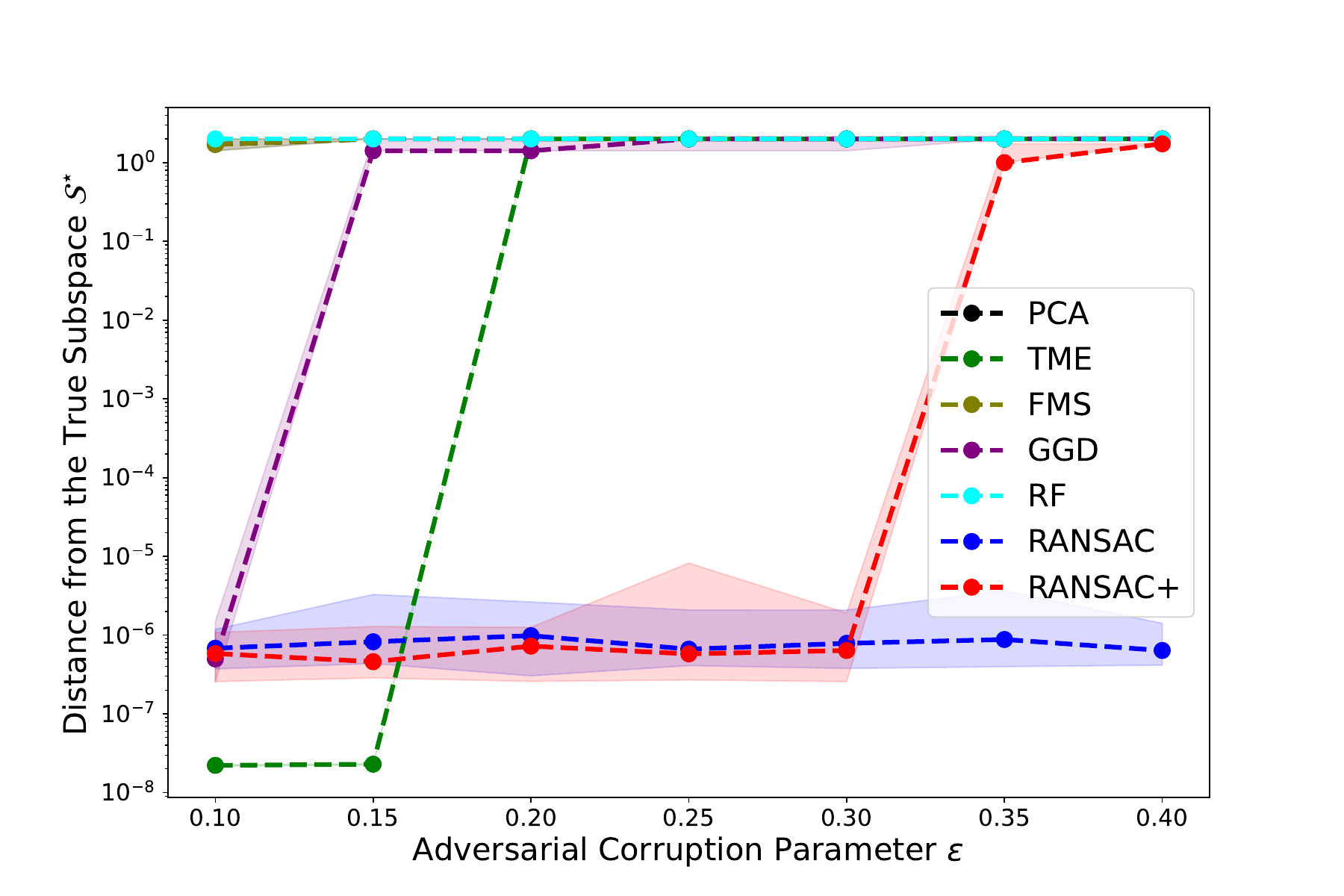}
\caption{The performance of various RSR methods across different corruption levels $\epsilon$. The considered methods are Tyler's M-estimator (TME) \cite{zhang2016robust}, Fast Median Subspace (FMS) \cite{lerman2018fast}, Geodesic Gradient Descent (GGD) \cite{maunu2019well}, Randomized-Find (RF) \cite{hardt2013algorithms}, and the classic RANSAC algorithm \cite{maunu2019robust}. The clean samples are drawn from $N(0, \Sigmastar)$ with $\mathrm{rank}(\Sigmastar) = 10$, while outliers are drawn from $N(0, \widehat{\Sigma})$ with $\mathrm{rank}(\widehat{\Sigma}) = 2$. The Gaussian noise covariance is set to zero in these experiments. The subspace spanned by the outliers are chosen to be orthogonal to $\Sstar$. All nonzero eigenvalues of $\Sigmastar$ are set to $1$, and the nonzero eigenvalues of $\widehat{\Sigma}$ are set to $10$. In all cases, the ambient dimension is set to $d = 100$ and the total sample size to $n=500$.}
\label{fig:toy_example}
\end{figure}

\begin{figure}
\centering
\begin{minipage}[t]{0.32\textwidth} \centering
\includegraphics[width=5.9cm]{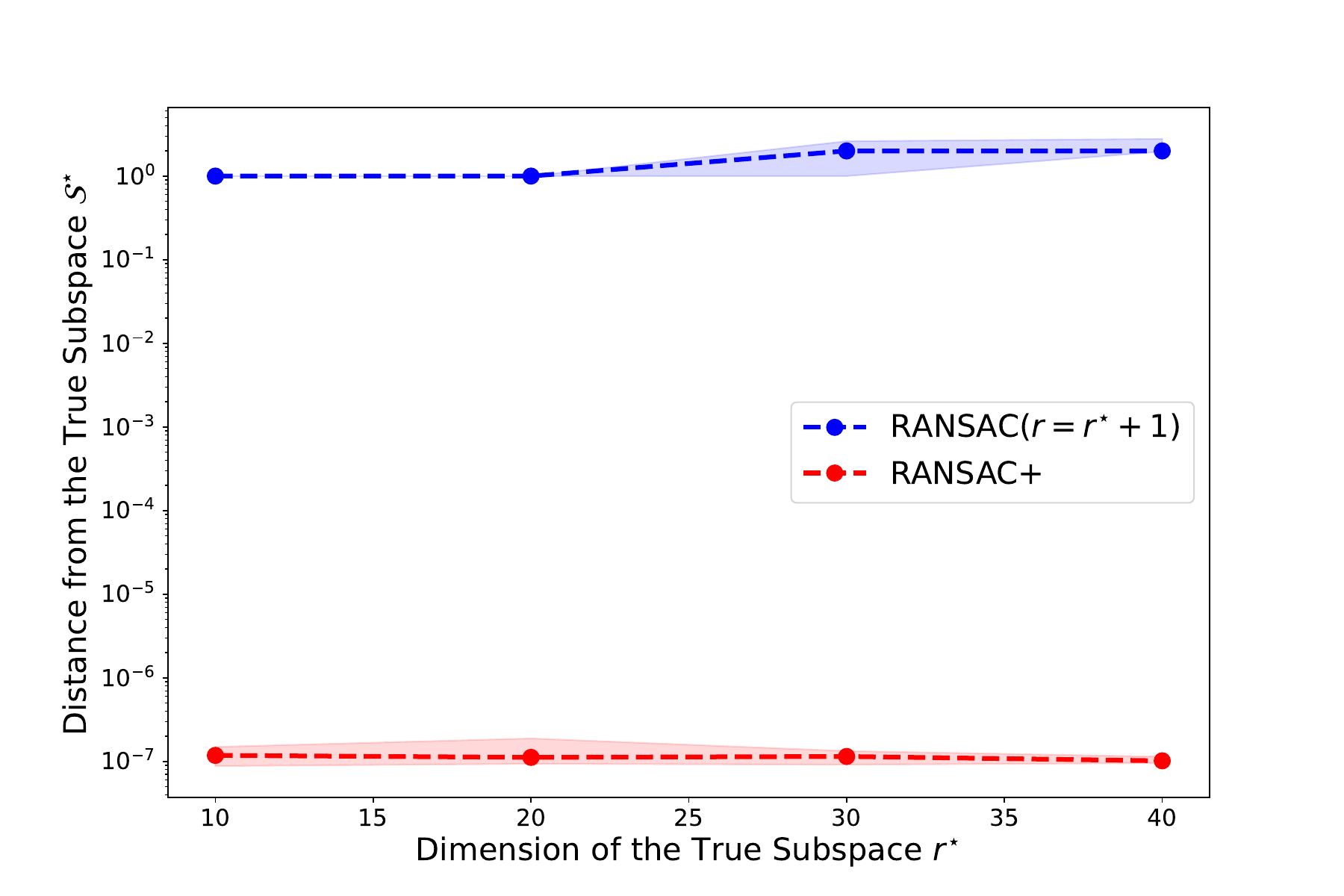}
\end{minipage}
\begin{minipage}[t]{0.32\textwidth} \centering
\includegraphics[width=5.9cm]{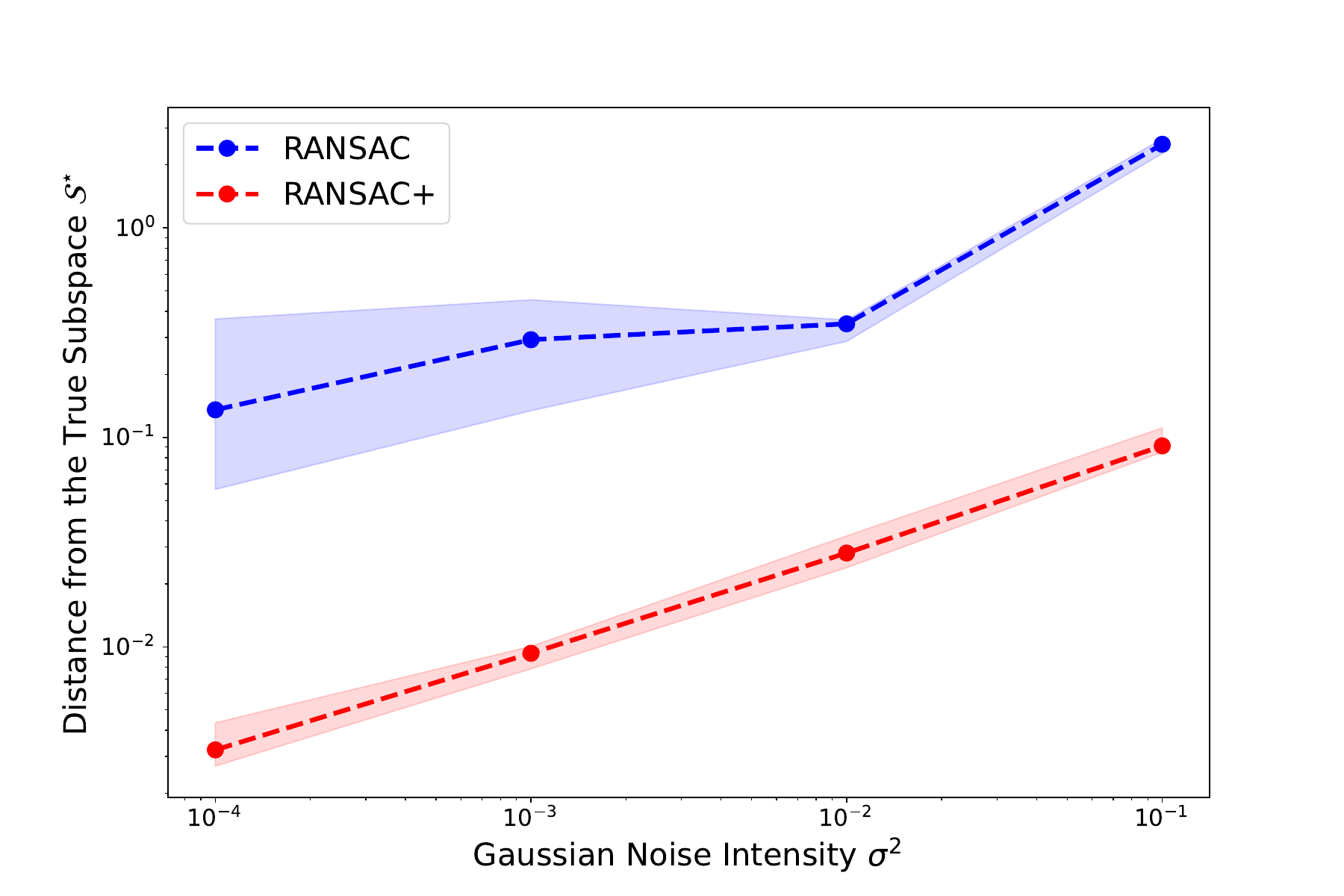}
\end{minipage}
\begin{minipage}[t]{0.32\textwidth} \centering
\includegraphics[width=5.9cm]{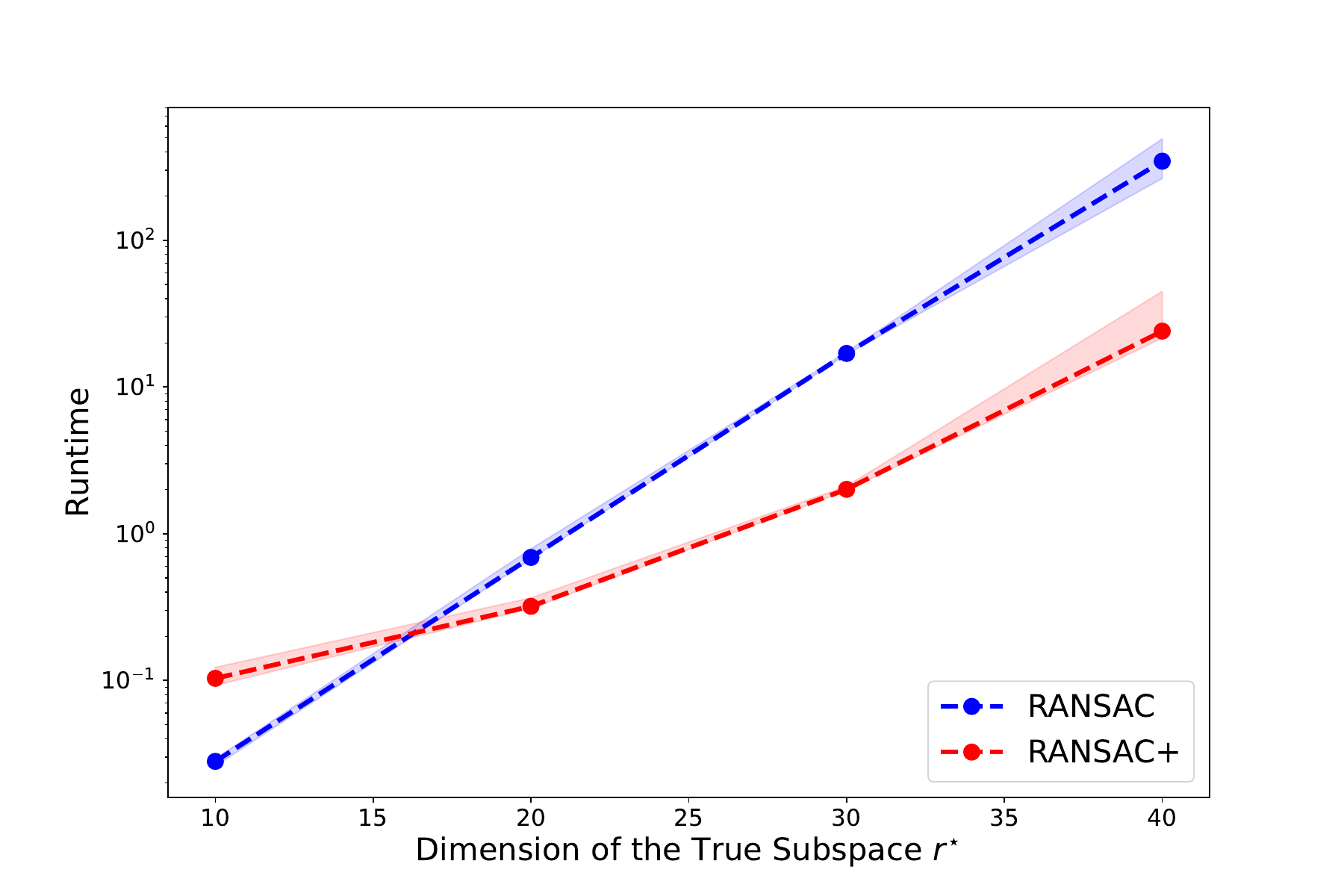}
\end{minipage}
\caption{{\bf (Left)} Performance comparison of RANSAC and RANSAC+ across varying subspace dimensions $\rstar$. The search dimension $r$ for RANSAC is overestimated by one, while RANSAC+ operates without any prior knowledge of $\rstar$. {\bf (Middle)} Performance of RANSAC and RANSAC+ under Gaussian noise with varying variance $\sigma^2 / d$. {\bf (Right)} Runtime of RANSAC (with an exact prior knowledge of $\rstar$) and RANSAC+ for different subspace dimensions $\rstar$. The data generation process follows that of Figure~\ref{fig:toy_example}. In all cases, the total sample size to $n = 500$ and the adversarial corruption parameter to $\epsilon = 0.2$. For the left and middle plots, the ambient dimension is $d=100$, while for the right figure, it is $d=1000$.}
\label{fig:RANSAC_compare}
\end{figure}

\subsection{Summary of contributions}

In this work, we revisit the classical RANSAC algorithm, precisely remedying the scenarios where it breaks down. In particular, our goal is to leverage the robustness of RANSAC against adversarial corruptions while mitigating its sensitivity to Gaussian noise, rank misspecification, and its prohibitive computational cost.

Our proposed approach, which we call {\it two-stage RANSAC} (RANSAC+), consists of two key stages.
In the first stage, we perform an efficient {coarse-grained} estimation of the underlying subspace, yielding an initial estimate of $\Sstar$, denoted as $\cV$. While $\cV$ may not exactly coincide with $\Sstar$, it satisfies two important properties: (1) it nearly contains $\Sstar$ with an error scaling with the noise level, and (2) the dimension $\hat r$ of $\cV$ matches the true dimension $\rstar$ up to a constant factor. Despite its promising performance, our proposed algorithm does not require any prior knowledge of \( r^\star \) and remains highly efficient. Specifically, it requires a sample size that scales near-linearly with \( r^\star \), while its runtime scales linearly with the sample size \( n \) and the ambient dimension \( d \), and near-linearly with \( r^\star \).
An immediate consequence of this result is that the entire sample set can be safely projected onto the subspace $\cV$, effectively reducing the ambient dimension from $d$ to $\hat r$.

In the second stage, we refine the subspace \( \cV\) by applying a robustified variant of the classical RANSAC algorithm for finer-grained estimation. A key advantage of this refinement is that, by operating on the projected sample set within the low-dimensional subspace \(\cV \) identified in the first stage, it enables a provably more accurate and efficient characterization of \( \mathcal{S}^\star \) compared to a direct application of RANSAC.

\Cref{fig:toy_example} illustrates the performance of RANSAC+ compared to the previously discussed RSR methods. As shown, RANSAC+ demonstrates superior robustness to adversarial corruption relative to the other methods, with the exception of classical RANSAC. In contrast, \Cref{fig:RANSAC_compare} highlights how RANSAC+ outperforms RANSAC under dimension misspecification, increased noise levels, and in terms of runtime.

\subsection{Related works}
\label{sec:related_works}
In this section, we review relevant works on RSR, with a particular focus on adversarial robustness. A more comprehensive literature review is provided in 
\cite{lerman2018overview}.

\paragraph{RANSAC-type methods}
The RANSAC algorithm, first introduced in \cite{fischler1981random}, operates as follows: it begins by randomly sampling points until they span a $\rstar$-dimensional subspace. It then determines the number of points whose distances to this subspace fall below a specified threshold, classifying them as clean samples. If the number of clean samples exceeds a predefined consensus threshold, the algorithm outputs the corresponding subspace. Otherwise, after a fixed number of iterations, it returns the subspace with the highest consensus count. 

As empirically demonstrated above, RANSAC suffers from dependence on the true dimension \( r^\star \), sensitivity to noise, and high computational cost. 
To address computational challenges of RANSAC, \cite{hardt2013algorithms} proposed the {\it Randomized-Find} (RF) algorithm, a faster variant of RANSAC. The RF algorithm relies on the key assumption that outliers are in general position---that is, they do not lie within a low-dimensional subspace. For a dataset \( \cX \subseteq \mathbb{R}^d \) with size \( n > \rstar \), RF repeatedly selects random subsets \( \widetilde{\cX} \) of size \( d \) from \( \cX \). It then identifies subsets where \( \text{rank}(\widetilde{\cX}) < d \), as these must contain at least \( \rstar+1 \) clean samples. The indices of the clean samples are then determined from the nonzero elements of a vector in the kernel of \( \widetilde{\cX} \). 
However, RF is not robust against truly adversarial corruptions, as these corruptions typically violate the general location assumption (see Section~\ref{subsec::failure}).
Moreover, extending RF to noisy settings requires unrealistic assumptions, such as the determinant separation condition \cite[Condition 11]{hardt2013algorithms}, which does not align with the adversarial contamination model. 

Inspired by RF, \cite{arias2017ransac} introduced an even more efficient variant of RANSAC that only samples subsets of \( \rstar+1 \) points until a linearly dependent set is identified. However, this method also fails under truly adversarial settings. For instance, if the outliers lie on a one-dimensional subspace, subsets containing two outliers can lead to arbitrarily poor subspace estimates.

\paragraph{Other RSR methods} 
Byond RANSAC-type algorithms, another approach that has shown great promise is Tyler's M-estimator (TME). Originally designed for robust covariance estimation, TME was later shown in \cite{zhang2016robust} to be applicable to RSR. However, this approach also assumes that the outliers are in general positions, thereby making it ineffective against adversarial contamination. TME has notable advantages, such as scale invariance and simplicity of implementation. However, the extension of TME to the noisy RSR setting, as presented in Theorem 3.1 of \cite{zhang2016robust}, is limited. 

Another line of research involves direct optimization of the robust least absolute deviations function over the non-convex Grassmannian manifold $G(d,\rstar)$. The work \cite{lerman2018fast} proposed a {\it fast median subspace} (FMS) algorithm, which employs iteratively re-weighted least squares (IRLS) to minimize an energy function. In a follow-up study, \cite{maunu2019well} developed a geodesic gradient descent (GGD) method to optimize the same energy function on $G(d,\rstar)$, leveraging ideas from \cite{edelman1998geometry}. Additionally, \cite{maunu2019well} analyzed the landscape of this energy function over $G(d,\rstar)$. However, as noted in \cite{maunu2019robust}, this energy function has limitations—it is not robust against adversarial outliers with large magnitudes. 
Despite stronger theoretical guarantees for GGD compared to FMS, neither method effectively handles real adversarial corruption, as previously mentioned.

Finally, another line of research focuses on \textit{robust covariance estimation} by directly minimizing an $\ell_1$-loss~\cite{ma2023global, ding2021rank, ma2021sign, ma2023can}. While accurately estimating the covariance matrix is sufficient to recover the desired subspace, the assumptions required for outliers and inliers in these approaches are significantly more restrictive than those considered in this work.

\subsection{Notation}
We use uppercase letters for matrices, lowercase letters for column vectors, and calligraphic letters for sets. We denote the $d$-dimensional all-zero column vector by $0_d$ and the $d\times d$ identity matrix by $I_d$. For a matrix $A$, $\textbf{col}(A)$ denotes the subspace spanned by the columns of $A$, $\sigma_i(A)$ denotes the $i^{\text{th}}$ largest singular value of $A$, and, if $A$ is square and symmetric, then $\gamma_j(A)$ denotes the $i^{\text{th}}$ largest eigenvalue of $A$. Moreover maximum and minimum singular values are denoted by $\sigma_{\max}(A)$ and $\sigma_{\min}(A)$, respectively. Similarly, the maximum and minimum eigenvalues are defined as $\gamma_{\max}(A)$ and $\gamma_{\min}(A)$, respectively. Depending on the context, $\norm{\cdot}$ denotes the spectral norm of a matrix or the Euclidean norm of a vector. For $r\leq d$, the set of semi-orthogonal $d\times r$ matrices is denoted by $O(d,r) = \{V\in \RR^{d\times r}: V^\top V = I_d\}$. For any subspace $\cS \subseteq \RR^d$, the projection operator onto $\cS$ is denoted as $\mathrm{P}_\cS = \mathrm{P}_U = UU^\top$, where the columns of $U\in O(d, \dim(\cS))$ form an orthonormal basis of $\cS$ in $\RR^d$.  The unit ball of dimension $d$ is denoted by $\mathbb{S}^{d-1}$. The cardinality of a set $\cX$ is denoted by $|\cX|$. The median of a set $\cX\subset \RR$ is denoted by $\mathrm{median}(\cX)$. We define $[n]:=\{1,2,\dots, n\}$. Throughout the paper, the constants $C, C', c_1, c_2, \dots > 0$ denote universal constants. 

\section{Main results}
\label{sec:main}

Recall that, given an $(\epsilon, \Sigma_\xi)$-corrupted sample set obtained according to \Cref{dfn:adversarial_contamination} from a clean distribution $\cP$ that satisfies \Cref{dfn:inliers}, our goal is to recover the subspace spanned by the clean samples, up to an error that is controlled by the noise level of the inliers. 
Let $\Sigmastar = \Ustar \Dstar {\Ustar}^\top$ be the eigen-decomposition of the {\it true covariance} of the clean distribution, where $\Dstar\in \mathbb{R}^{\rstar\times \rstar}$ is a diagonal matrix containing the nonzero eigenvalues of $\Sigmastar$, denoted by $\gammastar_{\max}:=\gammastar_1\geq \dots\geq \gammastar_{\rstar}:=\gammastar_{\min}>0$, and $\Ustar\in O(d,\rstar)$ is the matrix of corresponding eigenvectors.
Given \Cref{dfn:inliers}, the true low-dimensional subspace $\cS^\star$ containing the clean samples coincides with the subspace spanned by the eigenvectors in $\Ustar$, i.e., $\textbf{col}(\Ustar) = \Sstar$. Therefore, it suffices to recover $\Ustar$, as it forms a valid basis for $\Sstar$.

Next, we present our two-stage algorithm in \Cref{alg:RANSAC+}. We defer discussion on different components of this algorithm to \Cref{sec::algorithm}.

\begin{algorithm}[H]
\caption{Two-stage RANSAC (RANSAC+)}
\label{alg:RANSAC+}
\begin{algorithmic}[1]
\Statex {{\bf Require:} The $(\epsilon, \Sigma_\xi)$-corrupted sample set $\cX= \{x_1, \ldots, x_n\}$.}
\State \textbf{Coarse-Grained Estimation:} Use $\cX$ to obtain an orthonormal basis $V$ for a subspace $\cV$ of dimension $\hat{r} \geq \rstar$, approximately containing $\Sstar$, using \Cref{alg:first_stage}.
\State \textbf{Projection:} Project the data points onto the reduced subspace $\cV$, obtaining $\widehat{\cX} = \left\{V^\top x_1, \ldots, V^\top x_n\right\} \subset \mathbb{R}^{\hat{r}}$.
\State \textbf{Fine-Grained Estimation:} Use $\widehat{\cX}$ to obtain an approximate orthonormal basis $\widehat{U}$ for $\Sstar$ with dimension $\rstar$, using \Cref{alg:second_stage}.
\end{algorithmic}
\end{algorithm}

Our first result demonstrates that it is possible to efficiently obtain a coarse estimate of $\Sstar$, i.e., a subspace $\cV$ that approximately contains $\Sstar$ and has a dimension of at most $\cO(\rstar)$.

\begin{theorem}
\label{thm:alg_1}
Given an $(\epsilon, \Sigma_\xi)$-corrupted sample set with a sample size $n = \Omega(\rstar \log (\rstar))$ and Gaussian noise covariance satisfying $\sqrt{\rstar \norm{\Sigma_\xi}} + \sqrt{\tr(\Sigma_\xi)} = \cO\left(\gammastar_{\min}\right)$, there exists an algorithm (\Cref{alg:first_stage}) that runs in $ \cO(nd\rstar \log (\rstar))$ time, and with an overwhelming probability, generates a subspace $\cV$ with dimension $\hat r = \cO(\rstar)$ such that, for each $x \sim \cP$:
\begin{align*}
\norm{\left(I_{d} - \mathrm{P}_{\cV}\right) x} \leq \cO\left(\left( \sqrt{\frac{\rstar\norm{{\Sigma_\xi}}}{\gammastar_{\min}}} + \sqrt{\frac{\tr(\Sigma_\xi)}{\gammastar_{\min}}}\right) \norm{x} \right).
\end{align*}
\end{theorem}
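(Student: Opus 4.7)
The plan is to design \Cref{alg:first_stage} as a robust spectral method built from three basic primitives: (i) a norm-based sample filtering step that caps sample norms to remove gross outliers, (ii) a truncated eigendecomposition of the filtered sample covariance via randomized SVD (delivering the $\cO(nd\rstar \log \rstar)$ runtime), and (iii) an adaptive thresholding rule that chooses $\hat r = \cO(\rstar)$ without knowledge of $\rstar$. The correctness proof hinges on a Wedin-type $\sin\Theta$ bound in which the over-parameterization $\hat r = \cO(\rstar)$ (rather than $\hat r = \rstar$) is crucially exploited to absorb the spurious directions introduced by adversarial outliers.

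For the analysis, the filtering step is safe because, under \Cref{dfn:inliers}, each inlier $x = \Ustar z + \xi$ has $\|x\|^2$ concentrated around $\tr(\Sigmastar) + \tr(\Sigma_\xi)$ by standard Gaussian tail bounds; since fewer than half the samples are adversarial, the sample median norm $M_0$ is of the same order, and thresholding at a constant multiple of $M_0$ retains essentially all inliers while bounding every surviving outlier in norm. Decomposing the filtered sample covariance as $\hat\Sigma = (\Sigmastar + \Sigma_\xi) + E_{\mathrm{in}} + E_{\mathrm{out}}$, matrix concentration with $n = \Omega(\rstar \log \rstar)$ makes $\|E_{\mathrm{in}}\|$ small relative to $\gammastar_{\min}$, while the norm bound from filtering yields $\|E_{\mathrm{out}}\|_* \leq \cO\!\left(\epsilon(\tr(\Sigmastar) + \tr(\Sigma_\xi))\right)$. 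This nuclear-norm bound implies that $E_{\mathrm{out}}$ has only $\cO(\rstar)$ eigenvalues above any fixed fraction of $\gammastar_{\min}$, so setting $\cV$ to be the span of eigenvectors of $\hat\Sigma$ whose eigenvalues exceed a threshold $\tau \asymp \gammastar_{\min}$ yields $\hat r = \cO(\rstar)$ by Weyl's inequality. For the projection-error bound, apply Wedin's $\sin\Theta$ inequality in a form requiring only that $\cV$ approximately \emph{contain} $\Sstar$ (not coincide with it): the relevant eigenvalue gap $\gammastar_{\min} - \tau = \Theta(\gammastar_{\min})$, combined with a careful accounting of $E_{\mathrm{in}}$ and the noise covariance, bounds the principal angles between $\Sstar$ and $\cV$ by $\cO\!\left(\sqrt{\rstar \|\Sigma_\xi\|/\gammastar_{\min}} + \sqrt{\tr(\Sigma_\xi)/\gammastar_{\min}}\right)$, which translates directly into the stated inequality via $\|(I_d - \mathrm{P}_\cV) x\| \leq \sin\Theta(\cV, \Sstar) \|x\|$ for $x \in \Sstar$.

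The main obstacle is the adversarial term $E_{\mathrm{out}}$: its operator norm can far exceed $\gammastar_{\min}$, so a naive Davis--Kahan argument targeting $\hat r = \rstar$ fails outright. The key resolution is that $E_{\mathrm{out}}$ has small \emph{nuclear} (rather than operator) norm after filtering, hence low effective rank; over-parameterizing $\hat r$ to $\cO(\rstar)$ and choosing $\tau$ data-adaptively lets $\cV$ simultaneously absorb every large outlier-induced direction \emph{and} contain the true signal subspace. Making this thresholding automatic---so that $\hat r = \cO(\rstar)$ is guaranteed without any prior knowledge of $\rstar$, and so that the selected eigenvectors capture both the signal and the outlier modes---is the most delicate ingredient, and requires careful alignment of the Gaussian concentration constants with the filtering and spectral thresholds.
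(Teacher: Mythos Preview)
Your proposed algorithm is fundamentally different from the paper's and, more importantly, cannot deliver the stated guarantee under the paper's adversarial model. The paper's \Cref{alg:first_stage} is \emph{not} a spectral method: it is a RANSAC-type doubling procedure that repeatedly draws a random batch $\cX_B$, forms $V=\text{basis}(\col(X_B))$, and checks whether the median residual $\median\{\|x-VV^\top x\|:x\in\cX\setminus\cX_B\}$ falls below a noise-level threshold; if not, it doubles $B$. The analysis has three pieces: (i) for $B<\rstar$, the median residual is $\Omega(\sqrt{\gammastar_{\min}})$ so the algorithm does not stop; (ii) whenever the stopping criterion triggers, $\|V_\perp^\top \Ustar (\Dstar)^{1/2}\|$ is at noise level; (iii) once $B\ge c\rstar$, the batch contains $\ge \rstar$ inliers with high probability and the criterion triggers. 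No covariance is formed and no eigengap is used, which is exactly why the guarantee is condition-number free.

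Your spectral route has a concrete gap at the step ``$\|E_{\mathrm{out}}\|_*\le\cO(\epsilon(\tr(\Sigmastar)+\tr(\Sigma_\xi)))$ implies $E_{\mathrm{out}}$ has only $\cO(\rstar)$ eigenvalues above a fixed fraction of $\gammastar_{\min}$.'' From the nuclear-norm bound you only get at most $\epsilon\,\tr(\Sigmastar)/(c\,\gammastar_{\min})\le (\epsilon\,\kappa/c)\,\rstar$ such eigenvalues, where $\kappa=\gammastar_{\max}/\gammastar_{\min}$; this is $\cO(\rstar)$ only under a bounded-condition-number assumption that the theorem does not make. Worse, filtering by norm does nothing against an adversary who places outliers at inlier-scale norm but in directions that mix with the weakest signal direction. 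For instance, take $\rstar=2$, $\Sigmastar=\kappa\,e_1e_1^\top+e_2e_2^\top$ with $\kappa$ large, $\Sigma_\xi=0$, and let all outliers equal $\sqrt{\kappa/2}\,(e_2+e_3)$ (norm $\sqrt{\kappa}$, comparable to inliers). In the $(e_2,e_3)$ block of $\widehat\Sigma$ the two eigenvectors are $\approx(e_2\pm e_3)/\sqrt{2}$ with eigenvalues of order $\epsilon\kappa$ and $1-\epsilon$. If your threshold $\tau$ keeps only the larger one, then $e_2\notin\cV$ up to an $\Omega(1)$ angle, violating the conclusion (which must be exact when $\Sigma_\xi=0$); if $\tau$ keeps both, the adversary can replicate this across many directions and drive $\hat r$ to order $\epsilon\kappa$, not $\cO(\rstar)$. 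A secondary issue: you invoke ``standard Gaussian tail bounds'' for $\|x\|$, but only the additive noise $\xi$ is Gaussian—the clean part $U^\star z$ comes from $\cP$, which in this paper is assumed only to be absolutely continuous with finite second moment and the small-ball condition, so sample-norm concentration and the claim that filtering ``retains essentially all inliers'' are not justified. These are not patchable details; they are the reason the paper abandons covariance-based estimation in favor of a consensus criterion on randomly sampled spans.
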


The above result characterizes the performance of our proposed first-stage algorithm, formally introduced in \Cref{alg:first_stage}. A detailed discussion on the algorithm and its analysis are provided in \Cref{sec:first_stage}. Specifically, \Cref{thm:alg_1} demonstrates that \Cref{alg:first_stage} identifies a coarse-grained subspace \(\cV\) of dimension \(\mathcal{O}(\rstar)\) that nearly contains the true subspace \(\Sstar\), with an error proportional to the intensity of the Gaussian noise. As a special case, if the contamination model is purely adversarial ($\Sigma_\xi = 0$), the above theorem implies that $\Sstar\subseteq \cV$ with an overwhelming probability.  Remarkably, the sample complexity required for \Cref{alg:first_stage} scales only with the (unknown) dimension of the true subspace \( \mathcal{S}^\star \), which is optimal up to a logarithmic factor. Moreover, the computational complexity of the first-stage algorithm is \( \mathcal{O}(nd r^\star \log (r^\star)) \), which matches that of a single PCA computation, up to logarithmic factors. 

\begin{figure}[H]
    \centering
    \includegraphics[width=0.85\linewidth]{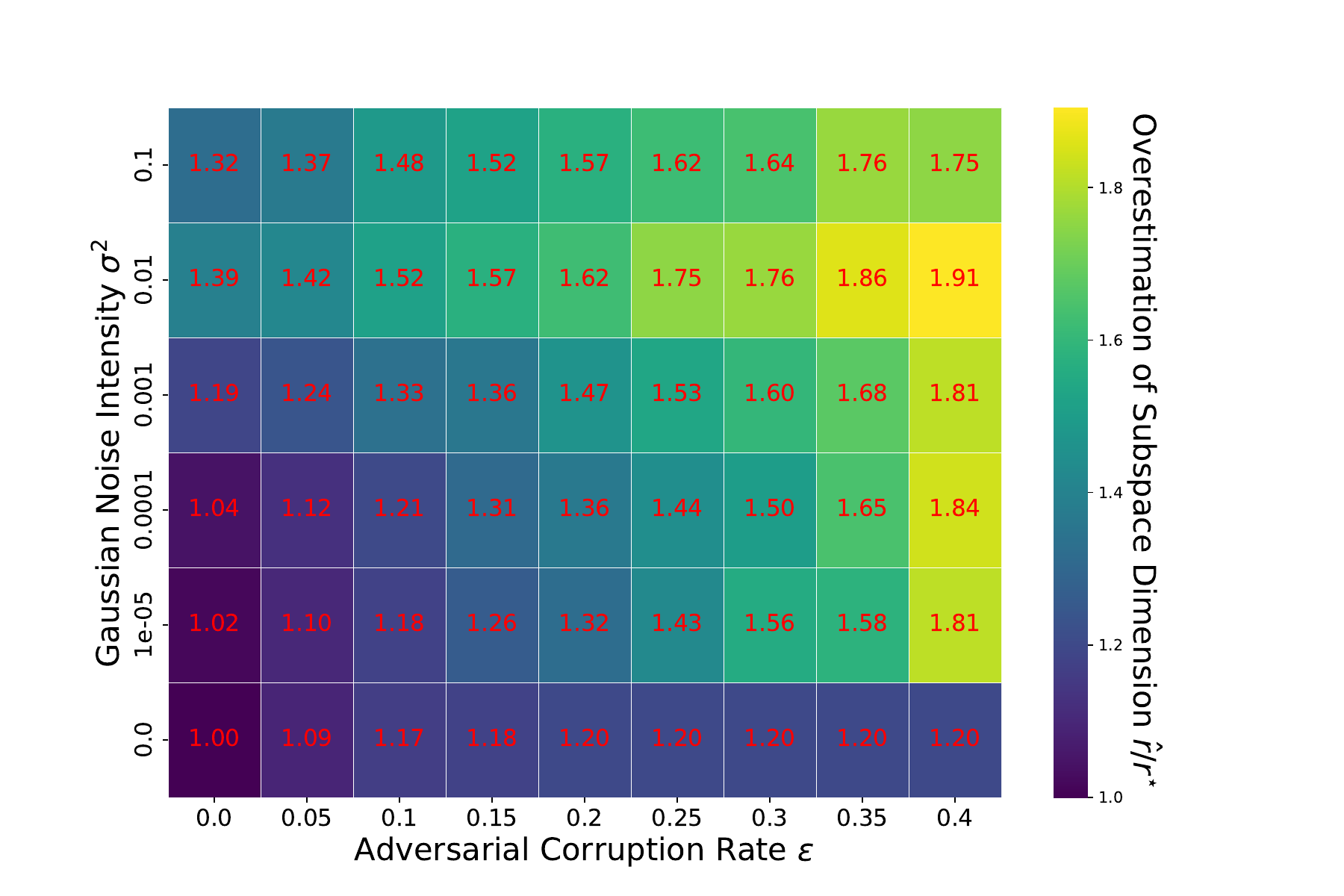}
    \caption{Overestimation of the subspace dimension, measured by \( \hat{r} / \rstar \), for \Cref{alg:first_stage} under varying corruption rates \( \epsilon \) and Gaussian noise levels \( \sigma^2 \). The experimental setup follows the same example described in \Cref{subsec::failure}. For each pair of values \( (\epsilon, \sigma^2) \), we report the average of \( \hat{r} / \rstar \) over 20 independent trials.}
    \label{fig:first_stage}
\end{figure}

Indeed, our proposed first-stage algorithm already provides an efficient method for reliably recovering a subspace that approximately contains the true low-dimensional subspace \( \mathcal{S}^\star \), with its dimension differing from \( \rstar \) by at most a constant factor. Consider the same toy example discussed in \Cref{subsec::failure}. \Cref{fig:first_stage} illustrates the extent of overestimation in the subspace dimension output by the first-stage algorithm, measured as the ratio \( \hat{r}/\rstar \), for varying values of the corruption parameter \( \epsilon \) and the noise variance \( \sigma^2 \). As shown, while increasing both \( \epsilon \) and \( \sigma^2 \) increases the value of \( \hat{r} / \rstar \), the ratio remains upper-bounded by \( 1.91 \), indicating that the overestimated dimension is at most twice the true dimension.
 
 In the following theorem, we show that it is possible to further improve the recovery accuracy and precisely identify the true dimension \( r^\star \) using a finer-grained method.

\begin{theorem}
\label{thm:alg_2}
Suppose that the conditions of \Cref{thm:alg_1} are satisfied and \Cref{alg:first_stage} succeeds. Given the projected sample set $\widehat{\cX} = \left\{V^\top x_1, \ldots, V^\top x_n\right\} \subset \mathbb{R}^{\hat{r}}$, there exists an algorithm (\Cref{alg:second_stage}) that runs in $\cO({\rstar}^3 e^{\rstar})$ time and, with an overwhelming probability, generates a subspace $\widehat{\cS}$ with dimension $\rstar$ that satisfies
\begin{align*}
\norm{\mathrm{P}_{\widehat{\cS}} -\mathrm{P}_{\Sstar} } \leq \cO\left( \sqrt{\frac{\rstar \norm{{\Sigma_\xi}}}{\gammastar_{\min}}} + \sqrt{\frac{\tr(\Sigma_\xi)}{\gammastar_{\min}}}\right).
\end{align*}
\end{theorem}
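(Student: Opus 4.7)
The plan is to perform a RANSAC-style refinement on the projected sample set $\widehat{\cX}\subset \RR^{\hat r}$. Since \Cref{thm:alg_1} guarantees $\hat r = \cO(\rstar)$, the effective ambient dimension is now proportional to the intrinsic dimension, which explains the runtime $\cO({\rstar}^3 e^{\rstar})$: approximately $e^{\cO(\rstar)}$ trials, each performing $\cO({\rstar}^3)$ linear algebra on matrices of size $\hat r \times \hat r$. First I would characterize the projected inliers. For a clean inlier $x+\xi$ with $x=\Ustar w$, $w\sim N(0,\Dstar)$, and $\xi\sim N(0,\Sigma_\xi)$, the projected point $V^\top(x+\xi)$ lies in $\RR^{\hat r}$, with signal component $V^\top \Ustar w$ inside the $\rstar$-dimensional subspace $V^\top \Sstar \subset \RR^{\hat r}$, Gaussian residual $V^\top\xi$ of magnitude $\cO(\sqrt{\tr(\Sigma_\xi)}+\sqrt{\rstar\norm{\Sigma_\xi}})$, and a geometric distortion already bounded by \Cref{thm:alg_1}. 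Hence recovering $V^\top \Sstar$ inside $\RR^{\hat r}$ and lifting back via $V$ yields the desired $\widehat{\cS}\subset\RR^d$ with the same spectral-norm error, by unitary invariance of projector distances.

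Next, I would describe the core subroutine. In each trial, draw a uniformly random subset of $k=\hat r$ points from $\widehat{\cX}$, stack them as columns of $\widehat X\in\RR^{\hat r\times k}$, and inspect the singular value spectrum. The probability that all $k$ sampled points are inliers is at least $(1-\epsilon)^k = e^{-\cO(\rstar)}$, so $e^{\cO(\rstar)}$ repetitions suffice to hit an all-inlier trial with overwhelming probability. On such a trial, $\widehat X = V^\top \Ustar W + E$, where the columns of $W\in \RR^{\rstar\times k}$ are i.i.d.\ $N(0,\Dstar)$ and $E$ aggregates the Gaussian noise together with the Stage-1 distortion. Under the hypothesis $\sqrt{\rstar\norm{\Sigma_\xi}}+\sqrt{\tr(\Sigma_\xi)}=\cO(\gammastar_{\min})$, standard Gaussian matrix concentration on $W$ gives $\sigma_{\rstar}(V^\top \Ustar W)\gtrsim\sqrt{\rstar\,\gammastar_{\min}}$, while standard bounds on anisotropic Gaussian matrices yield $\norm{E}\lesssim \sqrt{\rstar\norm{\Sigma_\xi}}+\sqrt{\tr(\Sigma_\xi)}$. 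These two bounds create a singular-value gap that (a) identifies $\rstar$ by a simple thresholding step and (b) allows Wedin's theorem to bound the principal angle between the top-$\rstar$ left-singular subspace of $\widehat X$ and $V^\top\Sstar$ by exactly the target $\cO(\sqrt{\rstar\norm{\Sigma_\xi}/\gammastar_{\min}}+\sqrt{\tr(\Sigma_\xi)/\gammastar_{\min}})$.

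To pick the correct trial among all candidates, I would use a consensus criterion calibrated to the noise scale: for each candidate $\rstar$-subspace produced by a trial, count how many points of $\widehat{\cX}$ lie within a threshold $\tau$ of it, where $\tau$ is chosen to exceed the typical noise magnitude but remain well below $\sqrt{\gammastar_{\min}}$. Trials born from all-inlier draws achieve consensus at least $(1-\epsilon)n$ because every clean sample nearly lies in $V^\top\Sstar$, whereas trials contaminated by outliers produce misaligned subspaces that miss a constant fraction of the clean points. Returning the maximum-consensus trial and lifting via $V$ then delivers $\widehat{\cS}$ satisfying the stated bound.

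The main obstacle is the spectral analysis in the second paragraph: verifying that a random $\hat r$-column draw of clean latent vectors genuinely yields $\sigma_{\rstar}(V^\top \Ustar W)\gtrsim \sqrt{\rstar\,\gammastar_{\min}}$, so that when combined with $\norm{E}\lesssim \sqrt{\rstar\norm{\Sigma_\xi}}+\sqrt{\tr(\Sigma_\xi)}$ through Wedin's inequality, the $\sqrt{\rstar}$ factors cancel cleanly and produce the claimed ratio. A secondary subtlety is that the Stage-1 basis $V$ itself depends on the corrupted data, so the argument must condition on the success event of \Cref{thm:alg_1}, treat $V$ as a fixed semi-orthogonal matrix with $\norm{(I-V V^\top)\Ustar}$ already controlled, and verify that the perturbation $E$ picks up only a noise-sized contribution from this residual angle rather than a signal-sized one.
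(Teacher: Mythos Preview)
Your error analysis on an all-inlier batch is essentially sound and parallels the paper's: the separation between $\sigma_{\rstar}$ and $\sigma_{\rstar+1}$ follows from the small-ball lower bound (\Cref{thm:lgd}) on the signal side and Gaussian concentration on the noise side, and a Wedin-type argument (the paper uses the equivalent inequality $\norm{(\widehat{U}^{\star}_{\perp})^\top\widehat U_k}^2 \le \hat\gamma_{\rstar+1}/\hat\gamma_{\rstar}$) delivers the stated projector error after lifting by $V$.

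The gap is in the runtime. Your selection rule is a consensus count over all of $\widehat{\cX}$ for each candidate subspace, which costs $\cO(n\hat r)$ per trial and hence $\cO(n\rstar\, e^{\rstar})$ overall, not the claimed $\cO({\rstar}^3 e^{\rstar})$. The paper's point, made explicit at the opening of \Cref{sec::algorithm}, is precisely to decouple the $e^{\rstar}$ factor from $n$: instead of consensus, \Cref{alg:second_stage} computes only the singular values of each batch and forms $\hat\sigma_i := \min_{j\in[T]}\hat\sigma_i^{(j)}$ across \emph{all} batches. The key observation enabling this is that, with high probability, \emph{every} batch contains at least $B/2$ inliers (\Cref{lem:frac_clean}), which forces $\hat\sigma_{\rstar}^{(j)} = \Omega(\sqrt{\gammastar_{\min}})$ uniformly in $j$ (\Cref{lem:eigen_gap}, \Cref{prop:eigen_gap}); meanwhile the single all-inlier batch guarantees $\min_j\hat\sigma_{\rstar+1}^{(j)} = \cO(\sqrt{\norm{\Sigma_\xi}})$. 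The gap in the aggregated sequence $(\hat\sigma_i)_i$ therefore reveals $\rstar$, and the $\argmin$ at index $\rstar+1$ selects the winning batch, all without ever touching the remaining $n-B$ points. This spectral selection mechanism is the missing ingredient in your proposal.

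A secondary issue: your thresholding rule for identifying $\rstar$ is applied per trial, but on contaminated trials the spectrum need not exhibit a gap at the same location, so different trials may report different ranks; you have not explained how to reconcile them before running consensus. The paper's min-aggregation sidesteps this by producing a single sequence with a provable gap exactly at index $\rstar$.
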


By combining the results of \Cref{thm:alg_1} and \Cref{thm:alg_2} with the procedure described in \Cref{alg:RANSAC+}, we conclude that RANSAC+ recovers an accurate estimate of the true subspace $\Sstar$ with the correct dimensionality, while keeping a computational complexity of
\begin{align*}
    \cO\Big(\underbrace{d{\rstar}^2+nd\rstar}_{\text{\Cref{alg:first_stage}}} + \underbrace{nd\rstar}_{\text{projection}}+\underbrace{{\rstar}^3 e^{\rstar}}_{\text{\Cref{alg:second_stage}}}\Big).
\end{align*}
This makes RANSAC+ more efficient than the classic RANSAC algorithm, which incurs a computational cost of \(\mathcal{O}(nd\rstar e^{\rstar})\) \cite{maunu2019robust}. We note that, due to the intrinsic NP-hardness of the RSR problem, it is fundamentally impossible to reduce the dependency of our algorithm on $\rstar$ to a purely polynomial form. Furthermore, our result provides a concrete characterization of the error rate for noisy subspace recovery, providing a substantial improvement over prior works on RANSAC and its variants.

\section{Preliminaries}
\label{sec:asp}
Next, we present the necessary tools for our theoretical analysis.
First, we introduce a classic concentration result on the singular values of a Gaussian ensemble.
\begin{lemma}[Theorem 6.1 in \cite{wainwright2019high}]
\label{thm:bound_singular_value}
Let $X = \begin{bmatrix}
    x_1& x_2& \cdots& x_n
\end{bmatrix} \in \RR^{d \times n}$ be drawn from a $\Sigma$-Gaussian ensemble, meaning that each column $x_i$ is independently drawn from a Gaussian distribution with a zero mean and covariance matrix $\Sigma$. Then, for any $t >0$, we have
\begin{align*}
{\Pr}\left( \frac{\sigma_{\max}(X)}{\sqrt{n}} \geq \gamma_{\max}(\Sigma^{1/2})(1+t) + \sqrt{\frac{\tr(\Sigma)}{n}} \right) \leq e^{-nt^2/2}.
\end{align*}
Moreover, when $n\geq d$, we have
\begin{align*}
{\Pr}\left( \frac{\sigma_{\min}(X)}{\sqrt{n}} \leq \gamma_{\min}(\Sigma^{1/2})(1-t) - \sqrt{\frac{\tr(\Sigma)}{n}} \right) \leq e^{-nt^2/2}.
\end{align*}
\end{lemma}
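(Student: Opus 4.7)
The plan is to obtain both tail bounds via the standard two-step argument: first, use Gaussian concentration of Lipschitz functions to control deviations from the mean, and then use Gordon's Gaussian min--max comparison inequality to bound the mean itself. Write $X = \Sigma^{1/2} G$, where $G \in \RR^{d \times n}$ has i.i.d.\ $\cN(0,1)$ entries, so that $\sigma_{\max}(X) = \sup_{u \in \SS^{d-1},\, v \in \SS^{n-1}} u^\top \Sigma^{1/2} G v$ and, whenever $n \geq d$, $\sigma_{\min}(X) = \inf_{u \in \SS^{d-1}} \|G^\top \Sigma^{1/2} u\|$.

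For the concentration step, I would show that both $F_{\max}(G) := \sigma_{\max}(\Sigma^{1/2} G)$ and $F_{\min}(G) := \sigma_{\min}(\Sigma^{1/2} G)$ are $\gamma_{\max}(\Sigma^{1/2})$-Lipschitz in the Frobenius norm of $G$. For $F_{\max}$, the identity $u^\top \Sigma^{1/2}(G - G') v = \langle \Sigma^{1/2} u v^\top,\, G - G' \rangle_F$ combined with Cauchy--Schwarz and $\|\Sigma^{1/2} u\| \leq \gamma_{\max}(\Sigma^{1/2})$ does the job; for $F_{\min}$, each map $G \mapsto \|G^\top \Sigma^{1/2} u\|$ is $\|\Sigma^{1/2} u\|$-Lipschitz and the pointwise infimum inherits the uniform constant $\gamma_{\max}(\Sigma^{1/2})$. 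Gaussian concentration of Lipschitz functions then gives
\[
\Pr\!\left(\bigl|F(G) - \EE[F(G)]\bigr| \geq \gamma_{\max}(\Sigma^{1/2})\sqrt{n}\,t\right) \leq 2\, e^{-nt^2/2}.
\]

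For the expectation step, I would apply Gordon's Gaussian min--max inequality to the primary process $Z_{u,v} = u^\top \Sigma^{1/2} G v$ paired against a decoupled comparison process $Y_{u,v} = g^\top \Sigma^{1/2} u + \|\Sigma^{1/2} u\|\, h^\top v$ with independent $g \sim \cN(0, I_d)$ and $h \sim \cN(0, I_n)$. After verifying the requisite covariance inequalities on $\SS^{d-1} \times \SS^{n-1}$, the inner supremum decouples as $\sup_v Y_{u,v} = g^\top \Sigma^{1/2} u + \|\Sigma^{1/2} u\|\,\|h\|$, and $\EE\sup_{u,v} Y_{u,v}$ can be bounded using $\EE\|h\| \leq \sqrt{n}$ and $\EE\|\Sigma^{1/2} g\| \leq \sqrt{\tr(\Sigma)}$ to yield $\EE[\sigma_{\max}(X)] \leq \gamma_{\max}(\Sigma^{1/2})\sqrt{n} + \sqrt{\tr(\Sigma)}$; the inf--sup direction of Gordon produces the parallel lower bound $\EE[\sigma_{\min}(X)] \geq \gamma_{\min}(\Sigma^{1/2})\sqrt{n} - \sqrt{\tr(\Sigma)}$. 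Combining these with the concentration bounds above and absorbing the expectation terms into the stated thresholds yields the two claimed inequalities.

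The main obstacle will be the careful verification of Gordon's covariance hypotheses for the anisotropic, $\Sigma^{1/2}$-weighted comparison process, since this is precisely what delivers the sharp additive $\sqrt{\tr(\Sigma)}$ term and not the crude $\gamma_{\max}(\Sigma^{1/2})\sqrt{d}$ one would obtain from the naive submultiplicative estimate $\sigma_{\max}(\Sigma^{1/2} G) \leq \gamma_{\max}(\Sigma^{1/2})\,\sigma_{\max}(G)$. Once the comparison is in place, both bounds follow by a uniform rescaling of the deviation parameter by $\gamma_{\max}(\Sigma^{1/2})\sqrt{n}$, matching the stated exponent $e^{-nt^2/2}$.
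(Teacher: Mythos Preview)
The paper does not prove this lemma at all; it is quoted as a preliminary result from Wainwright's textbook (Theorem~6.1) and used throughout without proof. Your proposed argument---Gaussian concentration for the $\gamma_{\max}(\Sigma^{1/2})$-Lipschitz functionals $G\mapsto\sigma_{\max}(\Sigma^{1/2}G)$ and $G\mapsto\sigma_{\min}(\Sigma^{1/2}G)$, combined with a Slepian/Gordon comparison against the decoupled process $Y_{u,v}=g^\top\Sigma^{1/2}u+\|\Sigma^{1/2}u\|\,h^\top v$ to bound the expectations---is exactly the standard proof in the cited reference, so there is nothing in the paper to compare your route against.

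One small caution on the $\sigma_{\min}$ bound: the Lipschitz constant you correctly identify is $\gamma_{\max}(\Sigma^{1/2})$, which after rescaling yields a deviation of size $\gamma_{\max}(\Sigma^{1/2})\sqrt{n}\,t$ with tail $e^{-nt^2/2}$, whereas the lemma as stated in the paper writes the threshold as $\gamma_{\min}(\Sigma^{1/2})(1-t)-\sqrt{\tr(\Sigma)/n}$, i.e.\ a deviation of size $\gamma_{\min}(\Sigma^{1/2})\sqrt{n}\,t$. These coincide only when $\Sigma$ is a multiple of the identity (which is the only regime in which the paper actually invokes the $\sigma_{\min}$ part, namely for the isotropic process $W_n$ via \Cref{thm:lgd} rather than this lemma). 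If you intend to prove the statement exactly as transcribed for general $\Sigma$, be aware that the concentration step alone will not deliver that particular parameterization of the lower tail; this is a wrinkle in the paper's transcription, not in your argument.
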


Next, we present tail bounds on binomial and hypergeometric distributions, both of which are direct consequences of Hoeffding's inequality.
\begin{lemma}[Proposition 2.5 in \cite{wainwright2019high}]
\label{thm:median}
Let $z_1, \ldots z_n$ be independently drawn from a Bernoulli distribution with a success probability $p > a$, where $a \in (0,1)$ is a constant. Then, we have
\begin{align*}
{\Pr} \left(\frac{1}{n}\sum_{i=1}^n z_i \leq a\right) \leq  e^{ -2n\left(p-a \right)^2 }.
\end{align*}
\end{lemma}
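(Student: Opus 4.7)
The plan is to derive this bound via the standard Chernoff argument, which is essentially Hoeffding's inequality specialized to Bernoulli summands. First I would center the variables: let $Y_i = p - z_i$, so that $\mathbb{E}[Y_i] = 0$ and each $Y_i$ takes values in $[p-1, p]$, an interval of length $1$. The event $\{\frac{1}{n}\sum_i z_i \leq a\}$ coincides with $\{\sum_i Y_i \geq n(p-a)\}$, where $n(p-a) > 0$ by the hypothesis $p > a$, which makes the Chernoff machinery meaningful.

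Next, for any $\lambda > 0$, Markov's inequality applied to the exponential moment gives
\[
\Pr\left(\sum_{i=1}^n Y_i \geq n(p-a)\right) \leq e^{-\lambda n(p-a)} \prod_{i=1}^n \mathbb{E}\left[e^{\lambda Y_i}\right].
\]
I would then invoke Hoeffding's lemma, which states that for a mean-zero random variable supported in an interval of length $\ell$, the moment generating function is bounded by $\exp(\lambda^2 \ell^2 / 8)$. In our setting $\ell = 1$, so each factor is at most $e^{\lambda^2/8}$, yielding
\[
\Pr\left(\sum_{i=1}^n Y_i \geq n(p-a)\right) \leq \exp\!\left(-\lambda n(p-a) + \tfrac{n\lambda^2}{8}\right).
\]
Optimizing the right-hand side over $\lambda > 0$ by setting $\lambda = 4(p-a)$ produces the exponent $-2n(p-a)^2$, which is precisely the desired bound.

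There is essentially no obstacle here beyond invoking Hoeffding's lemma, which itself follows from a short convexity argument: one bounds the cumulant generating function of a bounded mean-zero variable by the symmetric extremal case, from which $\psi(\lambda) \leq \lambda^2 \ell^2 / 8$ drops out. For Bernoullis one could even bypass the generic lemma and compute $\mathbb{E}[e^{\lambda Y_i}] = p\, e^{-\lambda(1-p)} + (1-p)\, e^{\lambda p}$ directly, then verify by differentiating its logarithm twice that the second derivative never exceeds $1/4$, giving the same $e^{\lambda^2/8}$ bound. Either route leads to the stated tail bound $e^{-2n(p-a)^2}$.
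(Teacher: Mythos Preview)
Your proposal is correct and is precisely the standard Hoeffding argument the paper has in mind; the paper does not spell out a proof but simply cites this lemma as a direct consequence of Hoeffding's inequality (Proposition~2.5 in \cite{wainwright2019high}). Your Chernoff/Hoeffding-lemma derivation is exactly that consequence, so there is nothing to add.
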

\begin{lemma}[\cite{chvatal1979tail}]
\label{lem:hypergeometric}
Suppose that $z$ follows a hypergeometric distribution with parameters $(n, k, m)$, where $n$ is the population size, $k$ is the number of success states in the population and $m$ is the number of draws. Then, for any $0\leq t \leq \frac{k}{n}$, we have
\begin{align*}
{\Pr}\left( z \leq \left(\frac{k}{n} - t \right) m \right) \leq  e^{ -2mt^2 }.
\end{align*}
\end{lemma}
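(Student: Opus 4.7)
The plan is to prove the hypergeometric lower-tail bound by combining Hoeffding's classical reduction from sampling without replacement to sampling with replacement with the standard Chernoff argument for a Binomial. Write $z = \sum_{i=1}^m X_i$, where $X_i \in \{0,1\}$ indicates whether the $i$-th sequential draw (without replacement) from the population of $n$ items containing $k$ successes is a success. Each $X_i$ is marginally Bernoulli$(p)$ with $p := k/n$, so $\mathbb{E}[z] = mp$; however, the $X_i$ are negatively dependent rather than independent, which is what rules out a direct appeal to \Cref{thm:median}.

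The central structural lemma is Hoeffding's comparison inequality: for every convex function $\phi : \mathbb{R} \to \mathbb{R}$,
$$
\mathbb{E}\!\left[\phi\!\left(\sum_{i=1}^m X_i\right)\right] \;\leq\; \mathbb{E}\!\left[\phi\!\left(\sum_{i=1}^m Y_i\right)\right],
$$
where $Y_1,\ldots,Y_m$ are i.i.d.\ Bernoulli$(p)$, so that $z' := \sum_{i=1}^m Y_i \sim \mathrm{Binomial}(m,p)$. Specializing to $\phi(x) = e^{-\lambda x}$, which is convex for every $\lambda > 0$, yields the moment generating function domination $\mathbb{E}[e^{-\lambda z}] \leq \mathbb{E}[e^{-\lambda z'}]$.

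Given the MGF comparison, the remainder is routine Chernoff. By Markov's inequality applied to $e^{-\lambda z}$,
$$
{\Pr}\!\left(z \leq (p-t)m\right) \;\leq\; e^{\lambda(p-t)m}\,\mathbb{E}\!\left[e^{-\lambda z}\right] \;\leq\; e^{\lambda(p-t)m}\bigl(1-p+pe^{-\lambda}\bigr)^{m}.
$$
Optimizing over $\lambda > 0$ is precisely the classical derivation of Hoeffding's inequality for i.i.d.\ Bernoulli$(p)$ variables; it produces the Kullback--Leibler rate $-m\,\mathrm{KL}(p-t\,\|\,p)$ in the exponent, and invoking Pinsker's inequality $\mathrm{KL}(p-t\,\|\,p) \geq 2t^2$ (valid for $0 \leq t \leq p$) yields the claimed $e^{-2mt^2}$ bound. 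The hypothesis $t \leq k/n = p$ is used exactly here to ensure that $p - t$ is a valid probability and that Pinsker applies.

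The main, and essentially only nontrivial, obstacle is establishing Hoeffding's reduction lemma itself. The cleanest route is an exchangeability-plus-convexity argument: one inductively replaces one without-replacement draw at a time with an independent copy of its Bernoulli$(p)$ marginal, and shows via Jensen's inequality (conditional on the remaining draws) that this operation can only increase $\mathbb{E}[\phi(\cdot)]$. Equivalently, the without-replacement vector is a convex combination (via permutations of a fixed multiset) of the with-replacement one, so any convex functional is dominated on average. Once this lemma is granted, the hypergeometric tail bound follows by the Chernoff calculation outlined above.
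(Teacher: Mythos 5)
The paper does not actually prove this lemma: it is stated with a citation to Chv\'atal (1979) and used as a black box, so there is no in-paper argument to compare against. That said, your proof proposal is correct, and it follows the standard route. The marginal-Bernoulli decomposition $z=\sum_{i=1}^m X_i$ with $p=k/n$ is right; Hoeffding's comparison theorem (convex $\phi$ of a without-replacement sum is dominated in expectation by $\phi$ of the with-replacement sum) is exactly the right tool to transfer the MGF bound from the hypergeometric to the binomial; the subsequent Chernoff optimization and the Hoeffding/Pinsker inequality $\mathrm{KL}(p-t\,\|\,p)\ge 2t^2$, valid on $0\le t\le p$, give the stated rate. You also correctly identify where the hypothesis $t\le k/n$ enters.

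One point worth noting as a comparison to the cited source: Chv\'atal's original half-page proof does not invoke Hoeffding's general convex-ordering result. Instead he establishes the single needed MGF domination
\begin{align*}
\sum_{j}\binom{k}{j}\binom{n-k}{m-j}e^{\lambda j}\;\le\;\binom{n}{m}\bigl(1-p+pe^{\lambda}\bigr)^m
\end{align*}
by a short direct combinatorial manipulation, and then finishes with the same Chernoff plus entropy calculation. So your approach buys a conceptually cleaner (and more general) reduction step at the cost of invoking a heavier lemma, whereas Chv\'atal's is more elementary and self-contained; the downstream Chernoff argument is identical. Either way, there is no gap: if you flesh out the exchangeability-plus-Jensen proof of Hoeffding's comparison lemma (or simply cite Hoeffding 1963, Theorem 4), the argument is complete.
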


Our next assumption entails that a clean sample $x\sim \cP$ is not concentrated too much around zero, when projected along any direction $v\in \col(\Ustar)$. This assumption, famously known as \textit{small-ball condition}, is extensively used in the literature and is crucial in understanding the concentration and distribution properties of random variables, especially in high-dimensional settings \cite{lecue2017sparse, mendelson2015learning, mendelson2017extending, nguyen2013small}. For a random vector $x\sim\cP$ with a covariance $\Sigmastar = \Ustar \Dstar {\Ustar}^\top$, define its normalized variant as $w = {\Dstar}^{-1/2}{\Ustar}^\top x$. Indeed, it is easy to see that $w$ has a zero mean and an identity covariance $I_{\rstar}$. 

\begin{assumption}[Small-ball condition]
\label{asp:anti_concentration}
Given any fixed vector $v\in \RR^{\rstar}$ with $\norm{v} = 1$, random variable $x\sim \cP$, and $w = {\Dstar}^{-1/2}{\Ustar}^\top x$, we have
\begin{align*}
\Pr \left(\left| v^\top w \right| \geq t_0\right) \geq \frac{3}{4},
\end{align*}
where $0<t_0\leq 1$ is a universal constant.
\end{assumption}

The small-ball condition is known to hold for a broad range of distributions (including heavy-tailed ones) and under mild assumptions \cite{mendelson2014remark,mendelson2015learning,lecue2017sparse}. 
For instance, when $w$ has a standard Gaussian distribution, this condition holds with $t_0=\frac{1}{4}$.

Given $n$ i.i.d. copies of $w$, denoted by $\{w_1, \ldots, w_n\}$, our next lemma shows that the minimum eigenvalue of the empirical covariance of $\{w_1, \ldots, w_n\}$ is bounded away from zero with a high probability under \Cref{asp:anti_concentration}.

\begin{sloppypar}
   {\begin{lemma}[Corollary 2.5 in \cite{lecue2017sparse}]
\label{thm:lgd}
Given \Cref{asp:anti_concentration} and i.i.d. samples $x_1,\ldots, x_n\sim \cP$, define $w_i = {\Dstar}^{-1/2} {\Ustar}^\top x_i$ for $i\in [n]$. There exist universal constants $1\!<\!c_1$ and $0\!<\!c_2\!\leq\! 1/2$ such that if the sample size satisfies $n \geq c_1 \rstar$, then with probability at least $1 - e^{-c_2 n}$, we have
\begin{align*}
\gamma_{\min} \left(\frac{1}{n}W_n W_n^\top \right) \geq \frac{3 t_0^2}{8},
\end{align*}
where $W_n = \begin{bmatrix} w_1 & w_2 & \cdots & w_n \end{bmatrix} \in \RR^{\rstar\times n}$.
\end{lemma}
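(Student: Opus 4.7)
The plan is to attack the minimum eigenvalue
\[
\gamma_{\min}\!\left(\tfrac{1}{n} W_n W_n^\top\right) \;=\; \inf_{v\in \mathbb{S}^{\rstar-1}} \frac{1}{n}\sum_{i=1}^n (v^\top w_i)^2
\]
via the small-ball method of Koltchinskii and Mendelson. The first move is to replace the squares by indicators: since $(v^\top w_i)^2 \geq t_0^2\, \mathbf{1}\{|v^\top w_i|\geq t_0\}$ pointwise, it is enough to show that
\[
\inf_{v\in \mathbb{S}^{\rstar-1}} \frac{1}{n}\sum_{i=1}^n \mathbf{1}\{|v^\top w_i|\geq t_0\} \;\geq\; \frac{3}{8}
\]
holds with the claimed probability. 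By \Cref{asp:anti_concentration}, each such indicator has expectation at least $3/4$ for every $v\in\mathbb{S}^{\rstar-1}$, so the task reduces to controlling the uniform empirical process
\[
\Delta \;:=\; \sup_{v\in\mathbb{S}^{\rstar-1}} \left| \frac{1}{n}\sum_{i=1}^n \mathbf{1}\{|v^\top w_i|\geq t_0\} \;-\; \Pr(|v^\top w|\geq t_0) \right|
\]
and ensuring $\Delta \le 3/8$ on the good event.

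To bound $\Delta$, I would exploit the fact that the class $\mathcal{F} = \{w \mapsto \mathbf{1}\{|v^\top w|\geq t_0\} : v\in\mathbb{S}^{\rstar-1}\}$ consists of indicators of complements of symmetric slabs in $\mathbb{R}^{\rstar}$, a family with VC dimension $O(\rstar)$. Standard VC-type uniform deviation bounds then yield $\mathbb{E}[\Delta]\leq C\sqrt{\rstar/n}$, and McDiarmid's bounded-differences inequality (noting that $\Delta$ changes by at most $1/n$ when a single $w_i$ is perturbed) sharpens this to $\Delta \leq C\sqrt{\rstar/n} + \sqrt{t/(2n)}$ with probability at least $1-e^{-t}$. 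Choosing $c_1$ large enough that $C\sqrt{\rstar/n}\leq 3/16$ whenever $n\geq c_1 \rstar$, and taking $t = c_2 n$ for a sufficiently small $c_2\in(0,1/2]$, drives $\Delta$ below $3/8$ with probability at least $1-e^{-c_2 n}$. Combined with the $3/4$ expectation lower bound, this gives $\frac{1}{n}\sum_i (v^\top w_i)^2 \geq t_0^2\cdot\tfrac{3}{8} = \tfrac{3t_0^2}{8}$ uniformly in $v$, which is precisely the claim.

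The main obstacle is Step~2, the uniform control of $\Delta$ over the unit sphere. The difficulty is that $v\mapsto \mathbf{1}\{|v^\top w|\geq t_0\}$ is neither continuous nor Lipschitz in $v$, so a naive $\varepsilon$-net plus union bound is unavailable, and the Ledoux--Talagrand contraction inequality does not apply directly after symmetrization. The cleanest way around this is the VC route sketched above, which turns the geometry of slabs into a combinatorial dimension of $O(\rstar)$. An alternative is to sandwich the hard indicator between two Lipschitz surrogates---one vanishing unless $|v^\top w|\geq t_0/2$ and one supported on $\{|v^\top w|\geq t_0\}$---then contract to a Rademacher complexity over a linear class and chain over the sphere. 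Either route delivers the required $\sqrt{\rstar/n}$ rate and thus closes the argument.
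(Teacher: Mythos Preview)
The paper does not supply its own proof of this lemma; it simply imports it as Corollary~2.5 of Lecu\'e--Mendelson, so there is no in-paper argument to compare against. Your proposal is a faithful reconstruction of the small-ball method underlying that cited result: the pointwise domination $(v^\top w_i)^2 \ge t_0^2\,\mathbf{1}\{|v^\top w_i|\ge t_0\}$, the $3/4$ expectation lower bound from \Cref{asp:anti_concentration}, uniform control of the indicator process via the VC dimension of (complements of) symmetric slabs in $\RR^{\rstar}$, and McDiarmid for the high-probability tail. All steps are sound and the constants line up to give $3t_0^2/8$ with the stated probability, so the proposal is correct and matches the source the paper invokes.

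One minor remark: the Lecu\'e--Mendelson formulation typically routes the uniform bound through symmetrization plus the Lipschitz-sandwich/contraction argument you sketch as an alternative, rather than through VC theory directly; either route gives the same $\sqrt{\rstar/n}$ rate and thus the same conclusion, so this is a cosmetic difference, not a substantive one.
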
} 
\end{sloppypar}

Next, we prove a useful property of the normalized random variables \( \{w_1, \dots, w_n\} \), which will be used in our analysis. Specifically, we establish that, with high probability, most samples in \( \{w_1, \dots, w_n\} \) satisfy the lower bound \( |v^\top w_i| \geq \frac{t_0}{2} \), uniformly over all unit vectors \( v \in \mathbb{S}^{r^\star - 1} \).

\begin{sloppypar}
    \begin{lemma}
\label{thm:anti_concentrate}
Given \Cref{asp:anti_concentration} and i.i.d. samples $x_1,\ldots, x_n\sim \cP$, define $w_i = {\Dstar}^{-1/2} {\Ustar}^\top x_i$ for $i\in [n]$. There exist universal constants $c_3,c_4>0$ such that if the sample size satisfies $n \geq c_3\rstar\log\left(\frac{\rstar}{t_0}\right)$, with probability at least $1-e^{-c_4n}$, we have
\begin{align*}
\inf_{v\in \mathbb{S}^{\rstar-1}} \left|\left\{i\in[n]: |v^\top w_i | \geq \frac{t_0}{2}\right\}\right| \geq \frac{5}{8}n.
\end{align*}
\end{lemma}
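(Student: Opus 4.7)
The plan is to prove the uniform lower bound via a standard $\delta$-net argument on $\mathbb{S}^{\rstar-1}$ combined with two complementary concentration events. For a fixed $v\in\mathbb{S}^{\rstar-1}$, \Cref{asp:anti_concentration} says that $\mathbf{1}\{|v^\top w_i|\geq t_0\}$ is a Bernoulli random variable with success probability at least $3/4$; invoking \Cref{thm:median} with $p=3/4$ and $a=11/16$ yields
\[
\Pr\!\left( \left|\{i\in[n]: |v^\top w_i|\geq t_0\}\right| < \tfrac{11}{16} n \right) \leq e^{-n/128}.
\]
Separately, since $\mathbb{E}[w_i w_i^\top] = I_{\rstar}$ implies $\mathbb{E}\|w_i\|^2=\rstar$, Markov's inequality gives $\Pr(\|w_i\|\leq M)\geq 31/32$ with $M := \sqrt{32\rstar}$; a second application of \Cref{thm:median} (with $p=31/32$, $a=15/16$) shows that $G_2 := \{i:\|w_i\|\leq M\}$ obeys $|G_2|\geq 15n/16$ with probability at least $1-e^{-n/512}$.

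Next, I would construct a minimal $\delta$-net $\mathcal{N}\subset\mathbb{S}^{\rstar-1}$ of resolution $\delta := t_0/(2M) = t_0/(2\sqrt{32\rstar})$; by a standard covering estimate, $|\mathcal{N}|\leq (3/\delta)^{\rstar}$, so $\log|\mathcal{N}| = O(\rstar\log(\rstar/t_0))$. A union bound over $\mathcal{N}$ guarantees that, with probability at least $1 - |\mathcal{N}|\,e^{-n/128} - e^{-n/512}$, we simultaneously have $|G_1(v')| \geq 11n/16$ for every net point $v'\in\mathcal{N}$, where $G_1(v') := \{i:|v'^\top w_i|\geq t_0\}$, together with $|G_2|\geq 15n/16$.

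To uniformize to all of $\mathbb{S}^{\rstar-1}$, given an arbitrary $v$, I would pick $v'\in\mathcal{N}$ with $\|v-v'\|\leq\delta$. For any index $i\in G_1(v') \cap G_2$, the triangle inequality gives
\[
|v^\top w_i| \;\geq\; |v'^\top w_i| - \|v-v'\|\cdot\|w_i\| \;\geq\; t_0 - \delta M \;=\; \tfrac{t_0}{2},
\]
and inclusion-exclusion shows $|G_1(v')\cap G_2| \geq \tfrac{11}{16}n + \tfrac{15}{16}n - n = \tfrac{5}{8}n$, which is exactly the desired bound. Requiring $|\mathcal{N}|\,e^{-n/128}$ to be exponentially small in $n$ forces $n = \Omega(\rstar\log(\rstar/t_0))$, matching the stated sample complexity; the final failure probability is then at most $e^{-c_4 n}$ for a suitable universal $c_4>0$.

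The main obstacle is the tight budgeting of constants so that the inclusion–exclusion in the final step still delivers the required $\tfrac{5}{8}n$ indices. Taking $M=\sqrt{32\rstar}$ gives $31/32 > 15/16$ with just enough Markov buffer for the norm event, and the $3/4\to 11/16$ slack in the small-ball event is calibrated so that $11/16 + 15/16 - 1 = 5/8$ exactly. The net resolution $\delta = t_0/(2M)$ is then forced by the requirement that the discretization error $\delta M$ match the target margin $t_0/2$, after which everything else—applying \Cref{thm:median} twice, the net covering estimate, and the union bound—is routine bookkeeping.
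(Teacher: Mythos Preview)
Your proposal is correct and follows essentially the same strategy as the paper: the small-ball assumption plus a Markov/Chebyshev bound on $\|w_i\|$ to truncate norms, a $\delta$-net on $\mathbb{S}^{\rstar-1}$ with resolution $\delta \asymp t_0/\sqrt{\rstar}$, and a union bound together with the triangle inequality to pass from net points to all directions. The only cosmetic difference is that the paper combines the small-ball and norm-truncation events at the per-sample probability level (obtaining $\Pr\geq 11/16$ for the joint event) before applying \Cref{thm:median} once, whereas you concentrate the two events separately and intersect at the count level via inclusion--exclusion; both routes deliver the same $\tfrac{5}{8}n$ count and the same $e^{-\Theta(n)}$ failure probability once $n \gtrsim \rstar\log(\rstar/t_0)$.
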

\end{sloppypar}
\begin{proof}
By Chebyshev's inequality, for each $i\in [n]$, we have
\begin{align*}
\Pr(\norm{w_i} \geq 4 \sqrt{\rstar}) \leq \frac{\EE[\norm{w_i}^2]}{16\rstar} = \frac{\EE\left[\tr\left(w_i w_i^\top\right)\right]}{16\rstar} = \frac{1}{16}.
\end{align*}
By \Cref{asp:anti_concentration}, for any unit vector $v \in\SS^{\rstar-1}$ and $i\in[n]$, we obtain
\begin{align*}
\Pr\left(|v^\top w_i | \geq t_0,\, \norm{w_i} \leq 4\sqrt{\rstar}\right) \geq \Pr\left(|v^\top w_i | \geq t_0\right) + \Pr\left(\norm{w_i} \leq 4\sqrt{\rstar}\right) -1 \geq \frac{11}{16}.
\end{align*}
Since $\{w_i\}_{i=1}^n$ are independent, for any fixed $v \in\SS^{\rstar-1}$, the cardinality of $\{i\in[n]: |v^\top w_i| \geq t_0,\, \norm{w_i} \leq 4\sqrt{\rstar}\}$ follows a binomial distribution. Thus, given any fixed $v \in\SS^{\rstar-1}$, by \Cref{thm:median}, we obtain
\begin{align*}
\left| \left\{i\in[n]: |v^\top w_i | \geq t_0,\, \norm{w_i} \leq 4\sqrt{\rstar}\right\}\right| \geq \frac{5}{8}n,
\end{align*}
with probability at least $1-e^{-\frac{n}{128}}$. Given any $\epsilon_0>0$, define an $\epsilon_0$-covering net $\cN_{\epsilon_0}$ of the unit sphere $\SS^{\rstar-1}$ as follows \cite[Definition 5.1]{wainwright2019high}:
\begin{align*}
    \cN_{\epsilon_0} = \{v_0: \text{for any $v\in \SS^{\rstar-1}$, there exists $v_0\in \SS^{\rstar-1}$ such that $\norm{v-v_0}\leq \epsilon_0$}\}.
\end{align*}
It is well known that there exists an \( \epsilon_0 \)-covering net of the unit sphere \( \mathbb{S}^{r^\star - 1} \) with cardinality upper bounded by \( |\mathcal{N}_{\epsilon_0}| \leq \left(1 + \frac{2}{\epsilon_0}\right)^{r^\star} \)~\cite[Example 5.8]{wainwright2019high}. A simple union bound on the elements of this \( \epsilon_0 \)-covering net leads to
\begin{align*}
\inf_{v_0\in \cN_{\epsilon_0}} \left|\left\{i\in[n]: |v_0^\top w_i | \geq t_0,\, \norm{w_i} \leq 4\sqrt{\rstar}\right\}\right| \geq \frac{5}{8}n,
\end{align*}
with probability at least $1 - (1+\frac{2}{\epsilon_0})^{\rstar} e^{-\frac{n}{128}}$. On the other hand, for any unit vector $v\in \SS^{\rstar-1}$, there exists $v_0 \in \cN_{\epsilon_0}$ such that $\norm{v_0 - v} \leq \epsilon_0$. Therefore, we have
\begin{align*}
| v^\top w_i |  \geq |v_0^{\top} w_i | - |(v_0 - v)^\top w_i | \geq |v_0^{\top} w_i| - \epsilon_0 \norm{ w_i}.
\end{align*}
Hence, for any $i\in[n]$ such that $|v_0^\top w_i| \geq t_0$ and $\norm{w_i} \leq 4\sqrt{\rstar}$, we obtain $| v^\top w_i |  \geq t_0 - 4\epsilon_0 \sqrt{\rstar}$.
Upon choosing $\epsilon_0 = \frac{t_0}{8\sqrt{\rstar}}$, we can conclude that $| v^\top w_i |  \geq \frac{t_0}{2}$, which implies
\begin{align*}
    &\inf_{v\in \SS^{\rstar-1}}\left|\left\{i\in[n]: |v^\top w_i | \geq \frac{t_0}{2},\, \norm{w_i} \leq 4\sqrt{\rstar}\right\}\right| \\
    &\geq \inf_{v_0\in \cN_{\epsilon_0}}\left|\left\{i\in[n]: |v_0^\top w_i| \geq t_0,\, \norm{w_i} \leq 4\sqrt{\rstar}\right\}\right| \geq \frac{5}{8}n,
\end{align*}
 with probability at least $1 - \left(1+\frac{16\sqrt{\rstar}}{t_0}\right)^{\rstar} e^{-\frac{n}{128}} = 1 - e^{\rstar\log\left(1+\frac{16\sqrt{\rstar}}{t_0}\right) - \frac{n}{128}}.$
Therefore, when $n \geq 256\rstar\log\left(1+\frac{16\sqrt{\rstar}}{t_0}\right)$, with probability at least $1-e^{-\frac{n}{256}}$, we have
\begin{multline*}
    \inf_{v\in \SS^{\rstar-1}}\left|\left\{i\in[n]: |v^\top w_i | \geq \frac{t_0}{2}\right\}\right| \!\geq\! \inf_{v\in \SS^{\rstar-1}}\left|\left\{i\in[n]: |v^\top w_i | \!\geq\! \frac{t_0}{2},\, \norm{w_i} \leq 4\sqrt{\rstar}\right\}\right| \geq \frac{5}{8}n.
\end{multline*}
\end{proof}

\section{Proposed Algorithm}\label{sec::algorithm}

In this section, we formally introduce our two-stage algorithm (RANSAC+) and provide the intuition behind it.

\begin{sloppypar}
    As previously noted, the computational cost of classic RANSAC, given by \( \mathcal{O}\left( nd\rstar e^{\rstar}\right)\), becomes prohibitive in high-dimensional settings when \(d\) or \(n\) is large, even for moderate values of \(\rstar\). While the exponential dependence on \(\rstar\) appears unavoidable, we demonstrate that the dependency on \(d\) and \(n\) can be reduced from multiplicative to additive. The reduction in $d$-dependence is straightforward: by first reducing the ambient dimension from \(d\) to \(\mathcal{O}(\rstar)\), we defer the exponential cost \( e^{\rstar} \) to a lower-dimensional space. The reduction in $n$-dependence arises from a fundamental difference in how our method and classic RANSAC recover the target subspace. Classic RANSAC evaluates each candidate subspace by checking all samples to count how many lie within a specified angular threshold—an operation linear in $n$. In contrast, our method processes candidate batches by computing their spectrum and then recovers the final subspace by identifying a consistent eigengap across batches. Thanks to the projection step, computing the spectrum of each batch depends only on $\rstar$, not on $d$ or $n$.
\end{sloppypar}

More specifically, the first-stage algorithm (\Cref{alg:first_stage}) proceeds as follows. It takes as input an $(\epsilon, \Sigma_\xi)$-corrupted sample set $\cX$, along with an estimate of $\tr(\Sigma_\xi)$ and $\norm{\Sigma_\xi}$. The algorithm begins by initializing the search batch size to $B = 2$. In each iteration, the algorithm randomly samples a subset \(\mathcal{X}_B\) of size \(B\) from \(\mathcal{X}\) and computes the basis $V$ of the subspace $\col(X_B)$. It then calculates the median residual, defined as the median of the distances $\left\{\norm{x - VV^\top x}: x \in \mathcal{X} \setminus \mathcal{X}_B\right\}$. If this median residual falls below the threshold $\eta_{\mathrm{thresh}}$, the algorithm terminates and outputs the corresponding subspace. Otherwise, it doubles the batch size $B$ and continues to the next iteration.

\begin{algorithm}
\caption{First Stage: Course-grained Estimation}
\label{alg:first_stage}
\begin{algorithmic}[1]
\Statex{{\bf Require:} The $(\epsilon, \Sigma_\xi)$-corrupted sample set $\cX= \{x_1, \ldots, x_n\}$, $\tr(\Sigma_\xi)$, and $\norm{\Sigma_\xi}$}.
\State{$B \leftarrow 2$ and $\eta_{\mathrm{thresh}}\leftarrow C\cdot\frac{5\sqrt{\rstar \norm{\Sigma_\xi}} + \sqrt{\tr(\Sigma_\xi)}}{t_0}$ for a sufficiently large constant $C>0$.}
\While{$B < d$}
\State{Randomly select $B$ samples from \(\mathcal{X}\) and collect them into a matrix \(X_B \in \mathbb{R}^{d \times B}\)}.
\State{Compute the basis \( V \in \mathbb{R}^{d \times B} \) for \(\textbf{col}(X_B)\).}
\State{$\mathbf{MedRes}(V)\leftarrow\median\left(\left\{\norm{x - V V^\top x}:x\in \cX \backslash \cX_B\right\}\right)$.}\label{alg:alg1-medres}
\If{$\mathbf{MedRes}(V) \leq \eta_{\mathrm{thresh}}$}\label{alg:alg1-ifthen}
\State{\textbf{break}}
\Else
\State{$B \leftarrow 2B$.\label{alg:alg1-stepsize}}
\EndIf
\EndWhile
\State{Set $\hat r = B$, and subspace $\cV = \textbf{col}(X_B)$.}
\State\Return $V \in O(d, \hat r)$ and $\cV$.
\end{algorithmic}
\end{algorithm}

Next, we outline the main intuition behind the correctness of \Cref{alg:first_stage}, deferring its rigorous analysis to \Cref{sec:first_stage}. When the batch size is underestimated, i.e., \( B < \rstar \), the sampled subspace \(\col(X_B)\) cannot fully capture the true subspace \(\Sstar\). As a result, a significant portion of inliers will be far from \(\col(X_B)\), with a distance proportional to the smallest nonzero eigenvalue of \(\Sigmastar\). For sufficiently small $\tr(\Sigma_\xi)$ and $\norm{\Sigma_\xi}$, this implies that the median of \( \left\{ \norm{x - V V^\top x} : x \in \mathcal{X} \setminus \mathcal{X}_B \right\} \) will exceed the prespecified threshold, prompting the algorithm to double the batch size.
Conversely, when the batch size is overestimated, i.e., \( B \geq c\cdot\rstar \) for some sufficiently large constant \( c > 1 \), the sampled subspace \(\col(X_B)\) will almost entirely contain \(\Sstar\), as it is spanned by at least \( \rstar \) inliers with high probability. Given that the corruption parameter satisfies \( \epsilon < 1/2 \), more than half of the samples will have a distance from the sampled subspace proportional to the noise level. This ensures that the median of \( \left\{ \norm{x - V V^\top x} : x \in \mathcal{X} \setminus \mathcal{X}_B \right\} \) meets the prespecified threshold. 

Once the coarse-grained subspace $\cV$ is obtained from \Cref{alg:first_stage}, one can safely project the original dataset onto this subspace, obtaining $\widehat{\cX} = \left\{V^\top x_1, \ldots, V^\top x_n\right\}\subset \mathbb{R}^{\hat r}$.

\begin{algorithm}
\caption{Second Stage: Fine-grained Estimation} 
\label{alg:second_stage}
\begin{algorithmic}[1]
\State{{\bf Rquire:} Failure probability $\delta$, corruption parameter $\epsilon$, projected sample set $\widehat{\cX} = \left\{V^\top x_1, \ldots, V^\top x_n\right\}\subset \mathbb{R}^{\hat r}$, basis matrix $V\in O(d, \hat r)$ produced by \Cref{alg:first_stage}.}
\State{$B\leftarrow C'\cdot\max\left\{\hat r, \log\left(\frac{3}{\delta}\log\left(\frac{1}{\delta}\right)\right)\right\}$ for sufficiently large constant $C'>0$.}\label{line::B-second-stage}
\State{$T\leftarrow \left(\frac{1}{1-1.1\epsilon}\right)^B\log\left(\frac{1}{\delta}\right)$.}\label{line::T-second-stage}
\While{$j < T$}
\State{Randomly select $B$ samples from $\widehat{\cX}$ and collected them into a matrix $\widehat{X}_j\in \RR^{\hat r \times B}$.}\label{alg:alg2-batch}
\State{Compute the singular values of $\widehat{X}_j$ as $\hat{\sigma}_1^{(j)}\geq \hat{\sigma}_2^{(j)}\geq \dots\geq \hat{\sigma}_{\hat r}^{(j)}$.}\label{alg:alg2-singvals}
\EndWhile
\State{Compute $\hat{\sigma}_i := \min_{j\in [T]}\hat{\sigma}_i^{(j)}$, for $i\in[\hat r]$.}
\State{Find the smallest index $\tilde{r}\in[\hat r]$ such that $\hat{\sigma}_{\tilde{r}+1}^2 \leq C'\norm{\Sigma_\xi}$ for sufficiently large $C'>0$.}\label{line::success-second-stage}
\State{Find $k = \argmin_{j\in[T]} \hat{\sigma}_{\tilde{r}+1}^{(j)}$.}\label{line::min-assignment}
\State{Compute the top-$\tilde{r}$ left singular vectors of $\widehat{X}_k \in \RR^{\hat r\times B}$ as $\widehat{U}_k \in \RR^{\hat r\times \tilde{r}}$.}
\State\Return $V \widehat{U}_k \in O(d, \tilde{r})$ and $\widehat{\cS} = \col\left(V \widehat{U}_k\right).$
\end{algorithmic}
\end{algorithm}

The second-stage algorithm (\Cref{alg:second_stage}) takes as input the projected sample set $\widehat{\cX}$ and the basis matrix $V\in O(d, \hat r)$ produced by the first-stage algorithm. It runs for a total of $T$ iterations. In each iteration $j$, the algorithm selects $B = \Omega(\hat r)$ samples uniformly at random and arranges them into a data matrix $\widehat{X}_j\in \RR^{\hat{r}\times B}$. The batch size $B = \Omega(\hat r)$ ensures that, with overwhelming probability, each batch contains at least $\rstar$ inliers. For each batch, the algorithm computes the singular values of $\widehat{X}_j$, denoted as $\hat{\sigma}_1^{(j)}\geq \hat{\sigma}_2^{(j)}\geq \dots\geq \hat{\sigma}_{\hat{r}}^{(j)}$. It then determines the smallest $i^{\text{th}}$ singular value across all batches: $\hat{\sigma}_i=\min_{j\in [T]} \hat{\sigma}^{(j)}_{i}$, for every index $i\in [\hat{r}]$. By appropriately choosing $T$, the algorithm can reliably detect a gap in the sequence $\hat{\sigma}_1\geq \hat{\sigma}_2\geq \dots \geq \hat{\sigma}_{\hat{r}}$. Notably, when $i=\rstar+1$, $\hat{\sigma}_i$ becomes significantly smaller than $\hat{\sigma}_{i-1}$, allowing the algorithm to recover the true subspace dimension $\rstar$ with high probability. Finally, it estimates the subspace by combining the subspace from the first-stage algorithm with the one associated with the batch that has the smallest $(\rstar+1)^{\text{th}}$ singular value.

\subsection{Analysis of the First-stage Algorithm}
\label{sec:first_stage}
In this section, we present the theoretical analysis of the first-stage algorithm (\Cref{alg:first_stage}). In particular, our goal is to prove the following theorem, which is the formal version of \Cref{thm:alg_1}. 
\begin{sloppypar}
    \begin{theorem}\label{thm::first-stage-formal}
    Fix $\rstar\!\geq\! 2$, $\epsilon\!\leq\! 0.1$, $C\!\geq\! 2.2$, $n\!\geq\! \max\left\{2c_3\log \left(\frac{\rstar}{t_0}\right),\! 50c_1,\frac{100}{c_2},\!800\right\}\rstar$, and $n \geq \max\left\{\frac{1}{c_4}, 250\right\} \log(\frac{3}{\delta})$, where $c_1,c_2,c_3,c_4$ are the same universal constants in \Cref{thm:lgd} and \Cref{thm:anti_concentrate}. Moreover, suppose that $5\sqrt{\rstar \norm{\Sigma_\xi}} + \sqrt{\tr(\Sigma_\xi)}\leq \frac{t_0^2}{4t_0+4C}\sqrt{\gammastar_{\min}}$. Then, with probability at least $1-\delta$, \Cref{alg:first_stage} terminates within $\cO(dn{\rstar}\log (\rstar))$ time, and satisfies:
    \begin{align*}
    \hat r &\leq \max\left\{2c_1,\frac{4}{c_2},32\right\}\cdot \rstar,\\
    \norm{\left(I_{d} - VV^\top\right) x} &\leq \frac{4C}{t_0^2}\left(5 \sqrt{\frac{\rstar\norm{{\Sigma_\xi}}}{\gammastar_{\min}}} + \sqrt{\frac{\tr(\Sigma_\xi)}{\gammastar_{\min}}}\right) \norm{x}, \quad \text{for}\quad x\sim \cP.
\end{align*}
\end{theorem}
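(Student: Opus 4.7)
The plan is to analyze \Cref{alg:first_stage} through a case analysis on the current batch size $B$. I identify two regimes: the \emph{undershoot} regime ($B<\rstar$) and the \emph{overshoot} regime (where $B$ is a sufficiently large constant multiple of $\rstar$). In the undershoot regime the median residual provably exceeds $\eta_{\mathrm{thresh}}$, so the algorithm continues; in the overshoot regime the sampled subspace $\col(V)$ closely approximates $\Sstar$, which simultaneously forces termination and yields the advertised per-sample residual bound. Since $B$ doubles at each step, this establishes $\hat r = O(\rstar)$ and a runtime of $O(nd\rstar\log\rstar)$ (each iteration costs $O(ndB)$ for the median-residual computation, and the loop runs $O(\log\rstar)$ iterations).

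The preliminary step is to establish concentration statements made uniform across the $O(\log\rstar)$ iterations via a union bound. Using \Cref{lem:hypergeometric}, in every iteration the inlier fraction in both $\cX_B$ and $\cX\setminus\cX_B$ is at least $1-1.1\epsilon\geq 0.89$; using \Cref{thm:bound_singular_value}, I control the Gaussian noise terms $\norm{\Ustar^\top\xi_i}\lesssim\sqrt{\rstar\norm{\Sigma_\xi}}$ and $\norm{\xi_i}\lesssim\sqrt{\tr(\Sigma_\xi)}$ across every sample. This uniform projection bound is precisely where the $\sqrt{\rstar\norm{\Sigma_\xi}}$ factor appearing in $\eta_{\mathrm{thresh}}$ originates.

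For the undershoot regime, since $\dim\col(V)\leq B<\rstar$, I extract a unit vector $v\in\col(\Ustar)\cap\col(V)^\perp$. For each inlier $x_i=\Ustar(\Dstar)^{1/2}w_i+\xi_i$ in $\cX\setminus\cX_B$, the bound $\norm{(I_d-VV^\top)x_i}\geq|v^\top x_i|\geq|\tilde v^\top w_i|-|v^\top\xi_i|$ holds with $\tilde v=(\Dstar)^{1/2}\Ustar^\top v$; since $v\in\col(\Ustar)$, $\norm{\tilde v}\geq\sqrt{\gammastar_{\min}}$. Invoking \Cref{thm:anti_concentrate}---whose uniformity over $v\in\SS^{\rstar-1}$ is critical since $v$ depends on the random batch---at least $5/8$ of the inliers satisfy $|\tilde v^\top w_i|\geq t_0\sqrt{\gammastar_{\min}}/2$. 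Subtracting the noise contribution and invoking the assumed noise-to-signal ratio, these residuals strictly exceed $\eta_{\mathrm{thresh}}$. Since inliers comprise more than half of $\cX\setminus\cX_B$ and at least $5/8$ of them produce residuals above the threshold, so does the median.

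For the overshoot regime, once $B$ is large enough that the batch contains at least $\rstar$ inliers, I write those inliers as $X_{\mathrm{in}}=\Ustar(\Dstar)^{1/2}W+\Xi$. Since $\col(X_{\mathrm{in}})\subseteq\col(V)$, projecting by $I_d-VV^\top$ yields $(I_d-VV^\top)\Ustar(\Dstar)^{1/2}W=-(I_d-VV^\top)\Xi$. Lower-bounding $\sigma_{\min}(W)$ via \Cref{thm:lgd} and upper-bounding the noise spectral norm via \Cref{thm:bound_singular_value}, I obtain $\norm{(I_d-VV^\top)\Ustar(\Dstar)^{1/2}}\leq\norm{\Xi}/\sigma_{\min}(W)$, and dividing by $\sqrt{\gammastar_{\min}}$ gives the subspace-distance bound $\norm{(I_d-VV^\top)\Ustar}$ at the advertised rate. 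For any clean $x\in\Sstar$, $\norm{(I_d-VV^\top)x}\leq\norm{(I_d-VV^\top)\Ustar}\cdot\norm{x}$, establishing the per-sample residual bound, and the same subspace bound forces the median residual below $\eta_{\mathrm{thresh}}$, triggering termination. I expect the main obstacle to be this overshoot step---specifically, the Davis--Kahan-style conversion from the inclusion $\col(X_{\mathrm{in}})\subseteq\col(V)$ to a quantitative bound on $\norm{(I_d-VV^\top)\Ustar}$ in the presence of outliers within the batch---compounded by handling the intermediate batch-size range where neither regime applies verbatim; this range is controlled by verifying that the batch still contains $\geq\rstar$ inliers with high probability, allowing the overshoot analysis to go through with slightly worse constants.
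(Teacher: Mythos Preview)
Your undershoot and overshoot analyses match the paper's \Cref{prop::phase1} and \Cref{prop:residual} (via \Cref{lem:convergence}), but your treatment of the intermediate range $\rstar\leq B<c\rstar$ has a real gap. You propose to cover it by checking that the batch contains $\geq\rstar$ inliers and rerunning the overshoot argument ``with slightly worse constants.'' The overshoot argument, however, needs $B_{\mathrm{in}}\geq c_1\rstar$ with $c_1>1$ to invoke \Cref{thm:lgd}; when $B$ first enters the intermediate range it can equal $\rstar$ (if $\rstar$ happens to be a power of~$2$), giving $B_{\mathrm{in}}\approx 0.9\rstar<c_1\rstar$, so \Cref{thm:lgd} does not apply and $\sigma_{\min}(W)$ has no useful lower bound. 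Nor can the undershoot argument be extended past $B=\rstar$, since it relies on $\col(\Ustar)\cap\col(V)^\perp\neq\{0\}$. So as written your plan leaves open the possibility that the algorithm terminates at some $B\in[\rstar,\,c_1\rstar/(1-\epsilon))$ with a batch too thin on inliers to certify $\norm{(I_d-VV^\top)\Ustar}$ via the overshoot route.

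The paper closes this with a separate step (\Cref{prop::phase1.5}) that derives the quality bound directly from the stopping criterion, independent of $B$ and of batch composition. The argument is the contrapositive of your undershoot step: suppose $\norm{V_\perp^\top\Ustar{\Dstar}^{1/2}}$ exceeds $\frac{4C}{t_0^2}\bigl(5\sqrt{\rstar\norm{\Sigma_\xi}}+\sqrt{\tr(\Sigma_\xi)}\bigr)$; let $q_1$ be its top right-singular vector and apply \Cref{thm:anti_concentrate} to $|q_1^\top w_i|$ over the inliers in $\cX\setminus\cX_B$. This forces $\mathbf{MedRes}(V)>\eta_{\mathrm{thresh}}$, contradicting termination. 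The step uses only samples \emph{outside} the batch, so it works regardless of how many inliers the batch contains. In the paper's logic the overshoot analysis is then needed solely to guarantee that termination \emph{occurs} by the time $B=\max\{2c_1,4/c_2,32\}\rstar$; the per-sample error bound stated in the theorem comes from \Cref{prop::phase1.5} combined with $\norm{V_\perp^\top x}\leq\norm{V_\perp^\top\Ustar{\Dstar}^{1/2}}\cdot\norm{x}/\sqrt{\gammastar_{\min}}$, not from \Cref{lem:convergence}. (A minor aside: the $\sqrt{\rstar\norm{\Sigma_\xi}}$ in $\eta_{\mathrm{thresh}}$ does not arise from $\norm{{\Ustar}^\top\xi_i}$ as you suggest, but from the product $\norm{V_\perp^\top\Ustar{\Dstar}^{1/2}}\cdot\norm{w}$ with $\norm{w}\lesssim\sqrt{\rstar}$ in the overshoot median-residual computation.)
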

\end{sloppypar}

Our strategy to prove \Cref{thm::first-stage-formal} proceeds as follows. 
First, we prove that when \( B < \rstar \), the stopping criterion in Line~\ref{alg:alg1-ifthen} is not triggered with high probability, thereby preventing the algorithm from terminating prematurely. Second, we prove that when the stopping criterion in Line~\ref{alg:alg1-ifthen} is triggered, then the recovered subspace $\col(V)$ approximately contains $\Sstar$ with an error proportional to the noise level. Third, we show that when \( \rstar \leq B \leq c\rstar \) for some universal constant \( c > 1 \), the stopping criterion in Line~\ref{alg:alg1-ifthen} is triggered with high probability.

\subsubsection{Step 1: \texorpdfstring{The condition $B < \rstar$ prevents termination}{under-parameterized}}
\label{sec:underparam}

When $B<\rstar$, our next proposition provides a non-vanishing lower bound on the value of $\mathbf{MedRes}(V)$ computed as the median of $\left\{\norm{x - V V^\top x}:x\in \cX \backslash \cX_B\right\}$ in Line \ref{alg:alg1-medres}.

\begin{sloppypar}
\begin{proposition}
\label{prop::phase1}
Fix \( \epsilon \leq 0.1 \) and \( n \geq \max\left\{2c_3\rstar\log\left(\frac{\rstar}{t_0}\right),\,6B\right\} \).
Assuming \( B < \rstar \), with probability at least \( 1 - e^{-c_4 n} - e^{-\frac{n}{250}} \), $\mathbf{MedRes}(V)$ computed in Line \ref{alg:alg1-medres} satisfies:
\begin{align*}
\mathbf{MedRes}(V) \geq \frac{t_0}{2} \sqrt{\gammastar_{\min}} - \left( \sqrt{\tr(\Sigma_\xi)} + 5 \sqrt{\norm{\Sigma_\xi}} \right).
\end{align*}
\end{proposition}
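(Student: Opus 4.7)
The plan is to show that strictly more than half of the samples in $\cX \setminus \cX_B$ have residual $\norm{x - VV^\top x}$ above the claimed threshold, from which the bound on the median follows. The starting observation is that when $B < \rstar$, the subspace $\col(V)$ has dimension at most $B$ and therefore cannot contain the $\rstar$-dimensional target $\Sstar$; hence $\Sstar \cap \col(V)^\perp$ has dimension at least $\rstar - B \geq 1$ and contains some unit vector $u^\star = \Ustar v^\star$ with $v^\star \in \mathbb{S}^{\rstar-1}$. For any clean sample $x_0 = \Ustar {\Dstar}^{1/2} w$, projecting onto $u^\star$ yields
\[
\norm{x_0 - VV^\top x_0} \;\geq\; |\langle u^\star, x_0\rangle| \;=\; \norm{{\Dstar}^{1/2} v^\star}\,|\tilde v^\top w| \;\geq\; \sqrt{\gammastar_{\min}}\,|\tilde v^\top w|,
\]
where $\tilde v := {\Dstar}^{1/2} v^\star / \norm{{\Dstar}^{1/2} v^\star} \in \mathbb{S}^{\rstar-1}$.

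Crucially, $\tilde v$ depends on $V$ and hence on the random batch $\cX_B$, so one cannot fix it in advance. The remedy is to invoke \Cref{thm:anti_concentrate} in its uniform-in-$v$ form: with probability at least $1 - e^{-c_4 n}$, at least $5 n'/8$ of the normalized inliers $w_i$ satisfy $|v^\top w_i| \geq t_0/2$ \emph{simultaneously} for every $v \in \mathbb{S}^{\rstar-1}$, where $n' := n - \lfloor \epsilon n\rfloor \geq 0.9 n$. Specializing to $v = \tilde v$ yields at least $5n'/8$ inliers with $\norm{x_{0,i} - VV^\top x_{0,i}} \geq \tfrac{t_0}{2}\sqrt{\gammastar_{\min}}$.

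Next, I would control the Gaussian noise. Applying \Cref{thm:bound_singular_value} with $n = 1$, $\Sigma = \Sigma_\xi$, and $t = 4$ to each $\xi_i \sim N(0, \Sigma_\xi)$ gives $\norm{\xi_i} \leq \sqrt{\tr(\Sigma_\xi)} + 5\sqrt{\norm{\Sigma_\xi}}$ with probability at least $1 - e^{-8}$. Since these events are independent across $i$, \Cref{thm:median} implies that all but a vanishing fraction of inliers obey this noise bound, except with probability at most $e^{-n/250}$. Combining with the small-ball count via the triangle inequality, a constant fraction of inliers in $\cX$ (strictly exceeding $n'/2$) simultaneously satisfies
\[
\norm{x_i - VV^\top x_i} \;\geq\; \tfrac{t_0}{2}\sqrt{\gammastar_{\min}} - \sqrt{\tr(\Sigma_\xi)} - 5\sqrt{\norm{\Sigma_\xi}}.
\]

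Finally, I would transfer this count from $\cX$ to $\cX \setminus \cX_B$ by noting that at most $B$ of the ``good'' inliers can be absorbed into the random batch $\cX_B$. Under $\epsilon \leq 0.1$ (so $n' \geq 0.9 n$) and $n \geq 6B$, a direct arithmetic check shows the resulting count in $\cX\setminus\cX_B$ strictly exceeds $(n - B)/2$, so the median computed in Line~\ref{alg:alg1-medres} lies above the threshold. A union bound over the small-ball and noise events gives the failure probability $e^{-c_4 n} + e^{-n/250}$. The main obstacle is the circular dependence between $V$ and the samples used to bound its associated residual; this is broken precisely by the uniform-in-$v$ guarantee of \Cref{thm:anti_concentrate} combined with the independence of the Gaussian perturbation from the clean signal.
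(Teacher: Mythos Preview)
Your approach mirrors the paper's: both pick a unit vector in $\Sstar \cap \col(V)^\perp$ (the paper reaches it via the observation $\norm{V_\perp^\top \Ustar} = 1$ and the SVD of $V_\perp^\top \Ustar$, but the content is identical to your dimension count), invoke the uniform-in-$v$ small-ball bound of \Cref{thm:anti_concentrate} to handle the dependence of $\tilde v$ on the random batch, control the Gaussian perturbation pointwise via \Cref{thm:bound_singular_value} and \Cref{thm:median}, and then count.

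The one place where you diverge is the final bookkeeping, and there your ``direct arithmetic check'' does not close under $n \geq 6B$ alone. You apply \Cref{thm:anti_concentrate} to all $n'=(1-\epsilon)n$ inliers and afterwards subtract $B$ for those possibly absorbed into $\cX_B$; after intersecting with a $\tfrac{19}{20}$ noise count this gives at best $\tfrac{23}{40}n' - B$ good samples in $\cX\setminus\cX_B$. With $n' \geq 0.9n$ this is $\geq 0.5175\,n - B$, and requiring $0.5175\,n - B \geq (n-B)/2$ forces $n \geq 29B$, not $6B$. The paper avoids this loss by applying \Cref{thm:anti_concentrate} directly to $\cN_{\mathrm{in}}$, the inliers already outside $\cX_B$ (valid because, conditional on the batch indices, these samples are still i.i.d.\ and the uniform-in-$v$ guarantee absorbs the dependence of the direction on $V$); this yields $\tfrac{5}{8}|\cN_{\mathrm{in}}| \geq \tfrac{11}{20}(n-B)$ good indices with no subtraction, and the arithmetic then closes exactly at $n = 6B$. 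You can alternatively rescue your version by also invoking $B < \rstar$ together with the hypothesis $n \geq 2c_3 \rstar \log(\rstar/t_0)$, which forces $n \gg B$, but the step as you wrote it (using only $n\geq 6B$) does not go through.
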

\end{sloppypar}
Before providing the proof of this proposition, we note that, due to our assumed upper bound on the noise level in~\Cref{thm::first-stage-formal}, we have 
\begin{align*}
\frac{t_0}{2} \sqrt{\gammastar_{\min}} - \left( \sqrt{\tr(\Sigma_\xi)} + 5 \sqrt{\norm{\Sigma_\xi}} \right)\geq \frac{t_0}{2} \sqrt{\gammastar_{\min}} - \left( \sqrt{\tr(\Sigma_\xi)} + 5 \sqrt{\rstar\norm{\Sigma_\xi}} \right)> \eta_{\mathrm{thresh}}.
\end{align*}

Hence, if \Cref{prop::phase1} holds, then the stopping criterion of \Cref{alg:first_stage} will not be triggered.
Next, we proceed with the proof of \Cref{prop::phase1}. To this goal, we denote the orthogonal complement of $V$ as $V_\perp \in \RR^{d\times (d-B)}$, and show the following lemma.
\begin{lemma}\label{lem::Vperp}
    When $B<\rstar$, we have $\norm{ V_\perp^\top \Ustar} = 1$.
\end{lemma}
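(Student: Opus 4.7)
The plan is to observe that $\|V_\perp^\top U^\star\|\leq 1$ trivially (both matrices have orthonormal columns), so it suffices to exhibit a single unit vector $v\in\RR^{\rstar}$ for which $\|V_\perp^\top U^\star v\|=1$. Since $V V^\top + V_\perp V_\perp^\top = I_d$, for any vector $u\in\RR^d$ we have the identity $\|V_\perp^\top u\|^2 = \|u\|^2 - \|V^\top u\|^2$. In particular, if $u$ is a unit vector lying in $\Sstar$ and orthogonal to $\col(V)$, then $\|V_\perp^\top u\|=1$, and writing $u=U^\star v$ for some $v\in\SS^{\rstar-1}$ yields $\|V_\perp^\top U^\star v\|=1$ as desired.

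The core step is therefore to produce such a vector $u$, and this is a pure dimension count. The matrix $V$ has at most $B$ columns, so $\col(V)$ is a subspace of $\RR^d$ of dimension at most $B$, which means $\ker(V^\top)$ has dimension at least $d-B$. Intersecting with $\Sstar$ (of dimension $\rstar$) inside the ambient space $\RR^d$, the subspace inclusion–exclusion bound gives
\begin{equation*}
\dim(\Sstar \cap \ker(V^\top)) \;\geq\; \rstar + (d-B) - d \;=\; \rstar - B \;>\; 0,
\end{equation*}
using the hypothesis $B<\rstar$. Hence there exists a nonzero vector $u\in\Sstar$ with $V^\top u=0$, which we normalize to unit length and use in the previous paragraph.

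Concretely, I would combine the two observations as follows: take the unit $u$ from the dimension-counting step, set $v = {U^\star}^\top u\in\SS^{\rstar-1}$ (well-defined as a unit vector since $u\in\col(U^\star)$ and $U^\star$ is semi-orthogonal), and compute
\begin{equation*}
\|V_\perp^\top U^\star\| \;\geq\; \|V_\perp^\top U^\star v\| \;=\; \|V_\perp^\top u\| \;=\; \sqrt{\|u\|^2 - \|V^\top u\|^2} \;=\; 1.
\end{equation*}
The matching upper bound $\|V_\perp^\top U^\star\|\leq \|V_\perp\|\cdot\|U^\star\|\leq 1$ then forces equality.

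I do not expect any real obstacle here: the statement is essentially a pigeonhole fact about subspaces, and the only mild subtlety is cleanly reconciling the algorithm's notation (where $V\in\RR^{d\times B}$ is called a "basis" for $\col(X_B)$ even when $X_B$ may be rank-deficient) with the dimension count. This is handled uniformly by the bound $\dim\col(V)\leq B$, which is all that is used.
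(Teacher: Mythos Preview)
Your proposal is correct and follows essentially the same approach as the paper: both arguments use the dimension bound $\dim(\Sstar\cap\ker(V^\top))\geq \rstar-B>0$ to locate a unit vector $u\in\Sstar$ annihilated by $V^\top$, then use it to witness $\|V_\perp^\top U^\star\|\geq 1$ (the paper writes $u=V_\perp y_1=U^\star y_2$ and concludes via $y_1=V_\perp^\top U^\star y_2$, while you use the equivalent identity $\|V_\perp^\top u\|^2=\|u\|^2-\|V^\top u\|^2$). Your remark that only $\dim\col(V)\leq B$ is needed is a nice touch that handles any rank deficiency in $X_B$.
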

\begin{proof}
     One can write
\begin{align*} 
\dim(\textbf{col}(V_\perp)\cap \textbf{col}(\Ustar)) \!\geq\! \dim(\textbf{col}(V_\perp)) \!+\! \dim(\textbf{col}(\Ustar)) \!-\! d = (d-B)+\rstar-d = \rstar-B \geq 1,
\end{align*}
Thus, there exists a vector $x\in \textbf{col}(V_\perp)\cap \textbf{col}(\Ustar)$ with $\norm{x} =1$ such that $x = V_\perp y_1 = \Ustar y_2,$
for some $y_1 \in \RR^{d-B}$ and $y_2 \in \RR^{\rstar}$. This implies that $1 = \norm{x} = \norm{ V_\perp y_1} = \norm{y_1}$.
Similarly, we have $1 = \norm{y_2}$. On the other hand, $V_\perp y_1 = \Ustar y_2$ leads to $y_1 = V_\perp^\top \Ustar y_2$, implying
\begin{align*}
1 = \norm{ y_1 } =  \norm{ V_\perp^\top \Ustar y_2 } \leq \norm{ V_\perp^\top \Ustar } \norm{y_2 } = \norm{ V_\perp^\top \Ustar}.
\end{align*}
On the other hand, $\norm{ V_\perp^\top \Ustar} \leq \norm{V_\perp^\top} \norm{\Ustar} = 1$.
Therefore, we conclude that $\norm{ V_\perp^\top \Ustar} = 1$.
\end{proof}

Now, we are ready to present the proof of \Cref{prop::phase1}.

\begin{proof}[Proof of \Cref{prop::phase1}]
For any $x\in \RR^d$, we have $\norm{x-VV^\top x} = \norm{V_\perp V_\perp^\top x} = \norm{V_\perp^\top x}$. Therefore, $\mathbf{MedRes}(V)=\median\left(\left\{\norm{V_\perp^\top x}:x\in \cX \backslash \cX_B\right\}\right)$.
Recall that, given any inlier $x$, we have $x = \Ustar {\Dstar}^{1/2}w + \xi$, where $\Ustar {\Dstar}^{1/2}w\sim \cP$ and $\xi \sim N(0_d, \Sigma_\xi)$. It follows that 
\begin{align}\label{eq::LB2}
    \norm{V_\perp^\top x} = \norm{V_\perp^\top (\Ustar {\Dstar}^{1/2}w + \xi)}
\geq \norm{V_\perp^\top \Ustar {\Dstar}^{1/2}w} - \norm{V_\perp^\top  \xi}
\geq \norm{V_\perp^\top \Ustar {\Dstar}^{1/2}w} - \norm{\xi}.
\end{align}
Define the compact SVD of $V_\perp^\top \Ustar$ as $V_\perp^\top \Ustar = P \Lambda Q^\top$, where $P\in O(d-B, \rstar)$, $\Lambda \in \RR^{\rstar\times \rstar}$ is a diagonal matrix with diagonal entries $\lambda_1\geq \dots\geq \lambda_{\rstar}\geq 0$, and $Q \in O(\rstar,\rstar)$. We have
\begin{align*}
\norm{V_\perp^\top \Ustar {\Dstar}^{1/2}w} = \norm{P \Lambda Q^\top {\Dstar}^{1/2}w} = \norm{\sum_{i=1}^{\rstar} \lambda_i \left(q_i^\top {\Dstar}^{1/2}w\right) p_i}=\sqrt{ \sum_{i=1}^{\rstar}\left|\lambda_i \left(q_i^\top {\Dstar}^{1/2}w\right) \right|^2 },
\end{align*}
where $p_i \in\RR^{\rstar}$ and $q_i\in \RR^{\rstar}$ denote the $i^{\text{th}}$ column of $P$ and $Q$, respectively. The last equality is due to the fact that the columns of $P$ are pairwise orthogonal. On the other hand, since $\norm{V_\perp^\top \Ustar} = 1$ according to \Cref{lem::Vperp}, we have $\lambda_1= 1$. This yields:
\begin{align}\label{eq::LB}
    \norm{V_\perp^\top \Ustar {\Dstar}^{1/2}w} \geq \left|\lambda_1 \left(q_1^\top {\Dstar}^{1/2}w\right)\right| =  \left|q_1^\top {\Dstar}^{1/2}w\right|.
\end{align}
Upon defining $v = \frac{{\Dstar}^{1/2} q_1}{\norm{ {\Dstar}^{1/2} q_1 }}$ with $\norm{ v } = 1$, we have
\begin{align*}
\norm{V_\perp^\top \Ustar {\Dstar}^{1/2}w} \geq \norm{{\Dstar}^{1/2} q_1} \, \left|v^\top w\right| \geq \sqrt{\gammastar_{\min}} \, \left|v^\top w\right|.
\end{align*}
Combined with \Cref{eq::LB2}, this implies that
\begin{align}\label{eq::LB3}
    \norm{V_\perp^\top x}\geq \sqrt{\gammastar_{\min}}\left|v^\top w\right|-\norm{\xi}.
\end{align}
Let $\cN$ collect the index set of the samples within the set $\cX\backslash\cX_B$. Moreover, let $\cN_{\mathrm{in}}\subseteq \cN$ denote the index set of the inliers within the set $\cX\backslash\cX_B$. Note that $|\cN| = n-B$ and $|\cN_{\mathrm{in}}|\geq n-\epsilon n-B$.
By \Cref{thm:anti_concentrate}, when $|\cN_{\mathrm{in}}| \geq c_3\rstar\log(\rstar/t_0)$, with probability at least $1-e^{-c_4n}$, we have
\begin{align*}
\left|\left\{i\in \cN_{\mathrm{in}}: \left|v^\top w_i\right| \geq \frac{t_0}{2}\right\}\right| \geq \frac{5}{8} \left|\cN_{\mathrm{in}}\right|.
\end{align*}
Since $\left|\cN_{\mathrm{in}}\right|\geq n-\epsilon n-B$, when $\epsilon \leq 0.1$ and  $B \leq n/6$, we have
\begin{align}\label{eq::LB-count}
    \left|\left\{i\in \cN_{\mathrm{in}}: \left|v^\top w_i\right| \geq \frac{t_0}{2}\right\}\right| \geq \left(\frac{1}{2}+\frac{1}{20}\right)(n - B).
\end{align}
Next, for every $i\in \cN$, define the Bernoulli random variable
\begin{align*}
z_i = \begin{cases}
    1 & \text{if } \norm{ \xi_i } \leq \sqrt{\tr(\Sigma_\xi)} + 5 \sqrt{\norm{\Sigma_\xi}},\\
    0 & \text{otherwise.}
\end{cases}
\end{align*}
Invoking \Cref{thm:bound_singular_value} with $t=4$ implies that $\Pr(z_i=1) \geq 1-e^{-8}$. On the other hand, since $\{z_i\}_{i\in \cN}$ are i.i.d., applying \Cref{thm:median} with $a = 1-\frac{1}{20}$, implies that, with probability at least $1 - e^{-\frac{n-B}{205}} \geq 1 - e^{-\frac{n}{250}}$ (since $B \leq n/6$), we have
\begin{align}\label{eq::LB-count2}
    \left|\left\{i\in \cN: \norm{ \xi_i } \leq \sqrt{\tr(\Sigma_\xi)} + 5 \sqrt{\norm{\Sigma_\xi}}\right\}\right| \geq \left(1-\frac{1}{20}\right)(n - B).
\end{align}
By combining \Cref{eq::LB-count} and \Cref{eq::LB-count2} with \Cref{eq::LB3}, we obtain
\begin{align*}
\left|\left\{i\in \cN: \norm{V_\perp^\top x_i} \geq \frac{t_0}{2} \sqrt{\gammastar_{\min}} - \left(\sqrt{\tr(\Sigma_\xi)} + 5 \sqrt{\norm{\Sigma_\xi}}\right) \right\}\right| \geq \frac{1}{2}(n-B) = \frac{1}{2}|\cN|,
\end{align*}
with probability at least $1 - e^{-c_4n} - e^{\frac{n}{250}}$. Therefore, with the same probability, we have
\begin{align*}
\mathbf{MedRes}(V) = \median\left(\left\{\norm{V_\perp^\top x_i}: i\in \cN\right\}\right)\geq \frac{t_0}{2} \sqrt{\gammastar_{\min}} - \left(\sqrt{\tr(\Sigma_\xi)} + 5 \sqrt{\norm{\Sigma_\xi}}\right).
\end{align*}
\end{proof}

\subsubsection{Step 2: Stopping criterion guarantees correct termination}
Our next proposition shows that as long as the stopping criterion in Line \ref{alg:alg1-ifthen} is satisfied, the first-stage algorithm yields an estimate of $\cV$ with provably bounded error.

\begin{sloppypar}
\begin{proposition}
\label{prop::phase1.5}
Fix \( \epsilon \leq 0.1 \) and \( n \geq \max\left\{2c_3\rstar\log(\rstar/t_0),\,6 B\right\} \). Assume that the stopping criterion in Line \ref{alg:alg1-ifthen} is satisfied, i.e. $\mathbf{MedRes}(V) \leq \frac{C}{t_0} \left(5\sqrt{\rstar \norm{\Sigma_\xi}} + \sqrt{\tr(\Sigma_\xi)} \right)$. Then, with probability at least \( 1 - e^{-c_4 n} - e^{-\frac{n}{250}} \), we have
\begin{align*}
\norm{V_\perp^\top \Ustar {\Dstar}^{1/2}} \leq \frac{4C}{t_0^2}\left(5\sqrt{\rstar\norm{\Sigma_\xi}} + \sqrt{\tr(\Sigma_\xi)}\right).
\end{align*}
\end{proposition}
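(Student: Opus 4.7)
The plan is to run the same logic as in the proof of \Cref{prop::phase1}, but in \emph{contrapositive} mode and applied to the full matrix $V_\perp^\top \Ustar {\Dstar}^{1/2}$ rather than just $V_\perp^\top \Ustar$. In \Cref{prop::phase1} we exploited the fact that $\norm{V_\perp^\top \Ustar} = 1$ whenever $B<\rstar$ to get a lower bound of order $\sqrt{\gammastar_{\min}}$ on the median residual. Here we no longer have access to this structural identity, so we treat the quantity $\sigma := \norm{V_\perp^\top \Ustar {\Dstar}^{1/2}}$ as an unknown and derive a lower bound on $\mathbf{MedRes}(V)$ that is linear in $\sigma$; combining this with the hypothesized upper bound on $\mathbf{MedRes}(V)$ immediately yields the desired control on $\sigma$.

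Concretely, I would take the compact SVD $V_\perp^\top \Ustar {\Dstar}^{1/2} = P \Lambda Q^\top$ with ordered singular values $\sigma = \sigma_1 \geq \sigma_2 \geq \dots \geq 0$ and let $q_1$ be the corresponding top right singular vector, which is a unit vector. For any inlier $x_i = \Ustar {\Dstar}^{1/2} w_i + \xi_i$ with $i\in \cN := \cX \setminus \cX_B$, the orthonormality of the columns of $P$ yields
\begin{align*}
\norm{V_\perp^\top x_i} \geq \norm{V_\perp^\top \Ustar {\Dstar}^{1/2} w_i} - \norm{\xi_i} \geq \sigma \, |q_1^\top w_i| - \norm{\xi_i},
\end{align*}
in direct analogy with \Cref{eq::LB3}, where the role of $\sqrt{\gammastar_{\min}}\,|v^\top w_i|$ is now played by $\sigma\,|q_1^\top w_i|$.

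Next I would invoke \Cref{thm:anti_concentrate} with $v = q_1\in \mathbb{S}^{\rstar-1}$ to conclude that, with probability at least $1-e^{-c_4 n}$, at least $5/8$ of the inliers indexed by $\cN_{\mathrm{in}}\subseteq \cN$ satisfy $|q_1^\top w_i|\geq t_0/2$. Combined with the Gaussian tail bound of \Cref{thm:bound_singular_value} (as in \Cref{eq::LB-count2}), with probability at least $1 - e^{-n/250}$ an additional $19/20$-fraction of indices in $\cN$ satisfy $\norm{\xi_i}\leq \sqrt{\tr(\Sigma_\xi)} + 5\sqrt{\norm{\Sigma_\xi}}$. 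Since $\epsilon\leq 0.1$ and $B\leq n/6$, the simple counting argument from the proof of \Cref{prop::phase1} shows that a strict majority of indices in $\cN$ simultaneously fulfill both bounds, which yields
\begin{align*}
\mathbf{MedRes}(V) \geq \frac{\sigma\, t_0}{2} - \bigl(\sqrt{\tr(\Sigma_\xi)} + 5\sqrt{\norm{\Sigma_\xi}}\bigr).
\end{align*}

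The final step is a short algebraic manipulation: combining this lower bound with the hypothesis $\mathbf{MedRes}(V) \leq \frac{C}{t_0}\bigl(5\sqrt{\rstar\norm{\Sigma_\xi}} + \sqrt{\tr(\Sigma_\xi)}\bigr)$ and using the trivial bounds $5\sqrt{\norm{\Sigma_\xi}} \leq 5\sqrt{\rstar\norm{\Sigma_\xi}}$ together with $\frac{C}{t_0}\geq 1$ (from $C\geq 2.2$ and $t_0\leq 1$) to absorb the additive noise term, one obtains $\frac{\sigma\, t_0}{2} \leq \frac{2C}{t_0}\bigl(5\sqrt{\rstar\norm{\Sigma_\xi}} + \sqrt{\tr(\Sigma_\xi)}\bigr)$, which rearranges to exactly the claimed bound. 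I do not anticipate any serious obstacle: the main conceptual point is simply replacing the identity $\norm{V_\perp^\top \Ustar} = 1$ used in the under-parameterized regime with the singular value $\sigma$ that we are now trying to control, and the only place where care is needed is ensuring that the small additive Gaussian-tail correction gets absorbed cleanly into the right-hand side via the assumption $C\geq 2.2$.
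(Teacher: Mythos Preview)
Your proposal is correct and essentially identical to the paper's proof: both take the compact SVD of $V_\perp^\top \Ustar {\Dstar}^{1/2}$, use the top right singular vector $q_1$ together with \Cref{thm:anti_concentrate} and the Gaussian tail bound to lower-bound $\mathbf{MedRes}(V)$ by $\tfrac{t_0}{2}\sigma$ minus a noise term, and finish with the same absorption via $C/t_0\geq 1$. The only cosmetic difference is that the paper packages the argument as a proof by contradiction whereas you solve directly for $\sigma$.
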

\end{sloppypar}

\begin{proof}
Suppose by contradiction that 
\begin{align*}
\norm{V_\perp^\top \Ustar {\Dstar}^{1/2}} > \frac{4C}{t_0^2}\left(5\sqrt{\rstar\norm{\Sigma_\xi}} + \sqrt{\tr(\Sigma_\xi)}\right).    
\end{align*} 
From \Cref{eq::LB2}, for any inlier $x$, we obtain
\begin{align*}
\norm{V_\perp^\top x}  = \norm{V_\perp^\top(\Ustar {\Dstar}^{1/2} w + \xi)} \geq \norm{V_\perp^\top \Ustar {\Dstar}^{1/2} w} - \norm{\xi}.
\end{align*}
From \Cref{eq::LB}, we can further obtain
\begin{align*}
\norm{V_\perp^\top \Ustar {\Dstar}^{1/2} w} \geq \norm{V_\perp^\top \Ustar {\Dstar}^{1/2} } |q_1^\top w|,
\end{align*}
where $q_1 \in \RR^{\rstar}$ denotes the right singular vector corresponding to the largest singular value of $V_\perp^\top \Ustar {\Dstar}^{1/2}$. Therefore, we conclude that
\begin{multline}\label{eq::LB_step2}
    \norm{V_\perp^\top x} \geq \norm{V_\perp^\top \Ustar {\Dstar}^{1/2} } |q_1^\top w| - \norm{\xi}
    > \frac{4C}{t_0^2}\left(5\sqrt{\rstar\norm{\Sigma_\xi}} + \sqrt{\tr(\Sigma_\xi)}\right) |q_1^\top w| - \norm{\xi}.
\end{multline}
Identical to the proof of \Cref{prop::phase1} and using the same notation therein, one can verify that the following inequalities hold with probability at least $1 - e^{-c_4n} - e^{\frac{n}{250}}$:
\begin{align*}
    \left|\left\{i\in \cN_{\mathrm{in}}: \left| q_1^\top w_i\right| \geq \frac{t_0}{2}\right\}\right| &\geq \left(\frac{1}{2}+\frac{1}{20}\right)(n - B),\\
     \left|\left\{i\in \cN: \norm{ \xi_i } \leq \sqrt{\tr(\Sigma_\xi)} + 5 \sqrt{\norm{\Sigma_\xi}}\right\}\right| &\geq \left(1-\frac{1}{20}\right)(n - B).
\end{align*}
Combining the above two inequalities with \Cref{eq::LB_step2} and $\frac{C}{t_0} \geq 1$, we obtain
\begin{align*}
\left|\left\{i\in \cN: \norm{V_\perp^\top x_i} > \frac{C}{t_0} \left(5\sqrt{\rstar \norm{\Sigma_\xi}} + \sqrt{\tr(\Sigma_\xi)} \right) \right\}\right| \geq \frac{1}{2}(n-B) = \frac{1}{2}|\cN|.
\end{align*}
This in turn implies that
\begin{align*}
\mathbf{MedRes}(V) = \median\left(\left\{\norm{V_\perp^\top x_i}: i\in \cN\right\}\right) > \frac{C}{t_0} \left(5\sqrt{\rstar \norm{\Sigma_\xi}} + \sqrt{\tr(\Sigma_\xi)} \right),
\end{align*}
with the same probability, thereby arriving at a contradiction.
\end{proof}


\subsubsection{Step 3: \texorpdfstring{The condition $B \geq c\rstar$ guarantees correct termination}{over-parameterized}}
\label{sec:overparam}
Our next proposition shows that, when $B\geq c\rstar$, for some sufficiently large $c\geq 1$, $\mathbf{MedRes}(V)$ is guaranteed to be small with high probability.
\begin{proposition}
\label{prop:residual}
Fix $\rstar \geq 2$, $\epsilon \leq 0.1$, $B \geq \max\{ 2c_1,\, 4/c_2,\, 32\}\cdot \rstar$ and $n\geq 25 B$ with $c_1, c_2$ described in \Cref{thm:lgd}. Then, with probability at least $1 - e^{-\frac{n}{60}}$, $\mathbf{MedRes}(V)$ computed in Line~\ref{alg:alg1-medres} satisfies
\begin{align*}
\mathbf{MedRes}(V)\leq  \left(\frac{6\sqrt{\rstar}}{t_0} + 5\right) \sqrt{\norm{\Sigma_\xi}} + \left(\frac{1}{t_0}+ 1\right)\sqrt{\tr(\Sigma_\xi)}.
\end{align*}
\end{proposition}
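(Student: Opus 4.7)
The strategy is to show that when the batch size $B$ is a constant multiple of $\rstar$, the random subspace $V = \textbf{col}(X_B)$ almost contains $\Sstar$ up to a noise-controlled error, so that for most of the remaining inliers the residual $\|V_\perp^\top x\|$ is dominated by the noise level appearing in the target bound.

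\textbf{Step 1: enough inliers in the batch.} The random draw of $\cX_B$ of size $B$ from $\cX$ gives a hypergeometric number of inliers. Since there are at least $(1-\epsilon)n$ inliers in $\cX$ and $B \geq \max\{2c_1,\,4/c_2,\,32\}\rstar$, Lemma~\ref{lem:hypergeometric} implies that, with probability at least $1 - e^{-c B}$ for a universal $c > 0$, the batch contains at least $|\mathcal{I}| \geq (1-1.01\epsilon)B \geq \max\{c_1,\,2/c_2,\,16\}\rstar$ inliers. Let $\cI\subseteq [B]$ index them, and write the inlier columns as $X_{\cI} = \Ustar(\Dstar)^{1/2} W_{\cI} + \Xi_{\cI}$, where $W_{\cI}$ collects the normalized latent vectors $w_i$ and $\Xi_{\cI}$ the i.i.d.\ $N(0,\Sigma_\xi)$ noise columns.

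\textbf{Step 2: $V$ approximately contains $\Sstar$.} Every column of $X_B$ lies in $\col(V)$, so $V_\perp^\top X_{\cI} = 0$, which rearranges to
\begin{align*}
V_\perp^\top \Ustar (\Dstar)^{1/2} W_{\cI} \;=\; -\, V_\perp^\top \Xi_{\cI}.
\end{align*}
Since $|\cI| \geq c_1 \rstar$, Lemma~\ref{thm:lgd} yields $\sigma_{\min}(W_{\cI}) \geq t_0 \sqrt{3|\cI|/8}$ with probability $\geq 1 - e^{-c_2 |\cI|}$, so $W_{\cI}$ has full row rank and we may right-multiply by its Moore--Penrose pseudoinverse to obtain $V_\perp^\top \Ustar (\Dstar)^{1/2} = -V_\perp^\top \Xi_{\cI} W_{\cI}^{\dagger}$. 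Combining with Lemma~\ref{thm:bound_singular_value} applied to $\Xi_{\cI}$ with a small deviation parameter $t$ (say $t \leq 4/5$),
\begin{align*}
\bigl\|V_\perp^\top \Ustar (\Dstar)^{1/2}\bigr\| \;\leq\; \frac{\|\Xi_{\cI}\|}{\sigma_{\min}(W_{\cI})} \;\leq\; \frac{(1+t)\sqrt{|\cI|\|\Sigma_\xi\|} + \sqrt{\tr(\Sigma_\xi)}}{t_0\sqrt{3|\cI|/8}} \;\leq\; \frac{3}{t_0}\sqrt{\|\Sigma_\xi\|} + \frac{1}{2t_0\sqrt{\rstar}}\sqrt{\tr(\Sigma_\xi)},
\end{align*}
where the last step uses $|\cI| \geq 16\rstar$ to absorb the $\sqrt{\tr(\Sigma_\xi)}$ term.

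\textbf{Step 3: controlling the median over $\cX\setminus \cX_B$.} For every remaining inlier $x = \Ustar(\Dstar)^{1/2} w + \xi$,
\begin{align*}
\|V_\perp^\top x\| \;\leq\; \bigl\|V_\perp^\top \Ustar (\Dstar)^{1/2}\bigr\| \cdot \|w\| + \|\xi\|.
\end{align*}
By Chebyshev $\Pr(\|w\| \leq 2\sqrt{\rstar}) \geq 3/4$, and by Lemma~\ref{thm:bound_singular_value} with $t=4$, $\Pr(\|\xi\|\leq \sqrt{\tr(\Sigma_\xi)}+5\sqrt{\|\Sigma_\xi\|}) \geq 1 - e^{-8}$. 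Therefore each inlier individually satisfies the target bound $(\frac{6\sqrt{\rstar}}{t_0}+5)\sqrt{\|\Sigma_\xi\|} + (\frac{1}{t_0}+1)\sqrt{\tr(\Sigma_\xi)}$ with probability at least $3/4 - e^{-8} > 11/16$. Because outliers make up at most $\epsilon \leq 0.1$ of $\cX\setminus\cX_B$, the overall success probability per sample is at least $11/16 \cdot 0.9 > 5/8$, and Lemma~\ref{thm:median} (applied to the Bernoulli indicators of the joint event over $n-B$ samples) upgrades this to at least $1/2$ of the samples in $\cX\setminus \cX_B$ simultaneously satisfying the bound, with probability $\geq 1 - e^{-(n-B)/C''}$. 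Hence $\mathbf{MedRes}(V)$ is at most the stated quantity.

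\textbf{Main obstacle.} The delicate part is the constant tuning in Step~2: the bound on $\|V_\perp^\top \Ustar (\Dstar)^{1/2}\|$ must be small enough that, after multiplication by the typical magnitude $2\sqrt{\rstar}$ of $\|w\|$, the resulting coefficient of $\sqrt{\|\Sigma_\xi\|}$ does not exceed $6\sqrt{\rstar}/t_0$. This is why $B \geq 32 \rstar$ is required---it ensures $|\cI|/\rstar$ is large enough for the $\sqrt{\tr(\Sigma_\xi)}/\sqrt{|\cI|}$ term to absorb into $(1/t_0)\sqrt{\tr(\Sigma_\xi)}$ after the $\|w\|\leq 2\sqrt{\rstar}$ multiplication, and for the choice of $t<1$ in Lemma~\ref{thm:bound_singular_value} to keep the $\sqrt{\|\Sigma_\xi\|}$-coefficient below $3/t_0$. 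The probabilistic bookkeeping---union bound over (i) the hypergeometric event, (ii) the Lemma~\ref{thm:lgd} event on $W_{\cI}$, (iii) the Lemma~\ref{thm:bound_singular_value} event on $\Xi_{\cI}$, and (iv) the binomial concentration of the median event on $\cX\setminus \cX_B$---must be assembled so that the total failure probability stays below $e^{-n/60}$.
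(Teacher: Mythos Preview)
Your approach is essentially identical to the paper's: Step~1 matches Lemma~\ref{lem:frac_clean}, Step~2 matches Lemma~\ref{lem:convergence} (your right-multiplication by $W_\cI^\dagger$ is a slightly more direct version of the paper's sup--inf argument, arriving at the same bound $\|\Xi_\cI\|/\sigma_{\min}(W_\cI)$), and Step~3 matches the main proof of the proposition.

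There is one technical error in Step~3. You write that ``the overall success probability per sample is at least $11/16 \cdot 0.9 > 5/8$'' and then apply Lemma~\ref{thm:median} to the Bernoulli indicators over all $n-B$ samples. This is not valid: the outliers in $\cX\setminus\cX_B$ are chosen adversarially, so their indicators are neither random nor independent of anything, and Lemma~\ref{thm:median} cannot be applied over the full index set. The paper's fix is to apply Lemma~\ref{thm:median} only over the inlier index set $\cN_{\rm in}$ (whose indicators \emph{are} i.i.d.\ Bernoulli with success probability $\geq 7/10$, conditioned on the batch events), obtaining that at least a $3/5$-fraction of inliers satisfy the residual bound; then a deterministic arithmetic step, $\tfrac{3}{5}|\cN_{\rm in}|\geq \tfrac{3}{5}(n-\epsilon n -B)\geq \tfrac{1}{2}(n-B)$ using $\epsilon\leq 0.1$ and $n\geq 25B$, shows this already constitutes a majority of $\cX\setminus\cX_B$. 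With this correction your proof goes through, and the remaining probabilistic bookkeeping (union-bounding the batch events from Steps~1--2 with the binomial concentration over $\cN_{\rm in}$) yields the claimed $1-e^{-n/60}$.
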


Before presenting the proof of \Cref{prop:residual}, we note that, due to the definition of the threshold $\eta_{\mathrm{thresh}}$ in \Cref{alg:first_stage}, and the fact that $0<t_0\leq 1$, $C\geq 2.2$, we must have
\begin{align*}
\left(\frac{6\sqrt{\rstar}}{t_0} + 5\right) \sqrt{\norm{\Sigma_\xi}} + \left(\frac{1}{t_0}+ 1\right)\sqrt{\tr(\Sigma_\xi)}\leq \eta_{\mathrm{thresh}}.
\end{align*}
Therefore, if \Cref{prop:residual} holds, then the stopping criterion of \Cref{alg:first_stage} is triggered.

Without loss of generality, let $\{x_1,x_2,\dots, x_{B_{\rm in}}\}$ be the set of inliers within the batch $\cX_{B}$ in Line \ref{alg:alg1-medres}. To provide the proof of \Cref{prop:residual}, we first provide a lower bound on the number of inliers $B_{\rm in}$ within the batch $\cX_B$.

\begin{lemma}
\label{lem:frac_clean}
For any subset $\cX_B$ sampled uniformly at random without replacement from $\cX$, let $B_{\rm in}$ denote the number of inliers in $\cX_B$. Then, for any $0\leq t\leq 1-\epsilon$, we have
\begin{align*}
\Pr(B_{\rm in} \leq (1-\epsilon - t) B) \leq e^{-2t^2 B}.
\end{align*}
\end{lemma}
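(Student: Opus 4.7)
The plan is to recognize that $B_{\rm in}$ is exactly a hypergeometric random variable and then invoke the tail bound already quoted in the preliminaries as \Cref{lem:hypergeometric}. Specifically, since $\cX_B$ is drawn uniformly at random without replacement from the population $\cX$ of size $n$, and since the adversarial contamination model (\Cref{dfn:adversarial_contamination}) guarantees that at least $n - \lfloor \epsilon n\rfloor \geq (1-\epsilon) n$ of the points in $\cX$ are inliers, the random variable $B_{\rm in}$ counting the inliers in $\cX_B$ follows a hypergeometric distribution with parameters $(n, k, m)$ where $k \geq (1-\epsilon) n$ and $m = B$.

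Next, I would apply \Cref{lem:hypergeometric} with the substitutions above. That lemma gives, for any $0 \leq t \leq k/n$,
\[
\Pr\!\left(B_{\rm in} \leq \left(\tfrac{k}{n} - t\right) B\right) \leq e^{-2 B t^2}.
\]
Since $k/n \geq 1 - \epsilon$, the event $\{B_{\rm in} \leq (1-\epsilon - t) B\}$ is contained in the event $\{B_{\rm in} \leq (k/n - t) B\}$ (after possibly replacing $t$ with a slightly larger value that still stays within the allowed range $[0, k/n]$, which is permissible because $1-\epsilon - t \leq k/n - t$). Therefore the probability of the former is at most that of the latter, which is bounded by $e^{-2 B t^2}$.

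There is essentially no obstacle here: the lemma is a direct consequence of Chvátal's hypergeometric tail inequality, and the only very minor care needed is in handling the floor in $\lfloor \epsilon n\rfloor$ and verifying that the range restriction $0 \leq t \leq k/n$ in \Cref{lem:hypergeometric} is not binding within the stated range $0 \leq t \leq 1-\epsilon$ of the lemma to be proved. The proof will therefore be a short two-line calculation identifying the distribution of $B_{\rm in}$ and invoking the quoted tail bound.
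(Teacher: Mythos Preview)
Your proposal is correct and essentially identical to the paper's own proof: the paper simply notes that $B_{\rm in}$ follows a hypergeometric distribution with parameters $(n,(1-\epsilon)n,B)$ and invokes \Cref{lem:hypergeometric}. Your additional remarks about the floor and the range restriction on $t$ are harmless elaborations that the paper does not bother to spell out.
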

\begin{proof}
The proof follows immediately by noting that $B_{\rm in}$  follows a hypergeometric distribution with parameters $(n, (1-\epsilon)n, B)$ and invoking \Cref{lem:hypergeometric}.
\end{proof}

As a direct consequence of \Cref{lem:frac_clean} and noting that $\epsilon\leq 0.1$, we have
\begin{align*}
{\Pr} \left(B_{\rm in} \geq \frac{1}{2} B\right) \geq 1 - e^{2\left(\frac{1}{2} - \epsilon\right)^2 B}.
\end{align*}
Therefore, we conclude that at least half of the samples within a single batch are inliers with high probability. Next, we prove the following key lemma, which controls the estimation error of the subspace $\cV$ recovered by \Cref{alg:first_stage}.

\begin{lemma}
\label{lem:convergence}
Suppose that $B_{\rm in} \geq c_1 \rstar$. With probability at least $1-e^{-c_2 B_{\rm in}}-e^{-\frac{B_{\rm in}}{8}}$, we have
\begin{align*}
\norm{V_\perp^\top \Ustar {\Dstar}^{1/2}} \leq \frac{3 \sqrt{B_{\rm in}\norm{\Sigma_\xi}} + 2\sqrt{\tr(\Sigma_\xi)}}{t_0\sqrt{B_{\rm in}}}.
\end{align*}
\end{lemma}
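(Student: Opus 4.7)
The plan is to exploit the geometric identity that since $V$ spans $\col(X_B)$ by construction, the orthogonal complement $V_\perp$ annihilates every column of $X_B$, and in particular all inlier columns. Collecting the inlier columns into a submatrix $X_{B_{\rm in}}$ and writing each inlier as $x_i = \Ustar (\Dstar)^{1/2}w_i + \xi_i$, the identity $V_\perp^\top X_{B_{\rm in}} = 0$ rearranges into the \emph{linear equation}
\begin{equation*}
V_\perp^\top \Ustar (\Dstar)^{1/2} W_{B_{\rm in}} = -\,V_\perp^\top \Xi_{B_{\rm in}},
\end{equation*}
where $W_{B_{\rm in}} \in \RR^{\rstar \times B_{\rm in}}$ stacks the normalized latent vectors and $\Xi_{B_{\rm in}} \in \RR^{d \times B_{\rm in}}$ stacks the Gaussian noise vectors. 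This converts the problem into one of inverting out $W_{B_{\rm in}}$ from the right while controlling the noise on the right-hand side.

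To invert $W_{B_{\rm in}}$ I would use its right pseudo-inverse $W_{B_{\rm in}}^{\dagger} = W_{B_{\rm in}}^\top (W_{B_{\rm in}} W_{B_{\rm in}}^\top)^{-1}$, which exists precisely when $W_{B_{\rm in}}$ has full row rank $\rstar$. This is exactly what \Cref{thm:lgd} guarantees under the hypothesis $B_{\rm in} \geq c_1 \rstar$: with probability at least $1 - e^{-c_2 B_{\rm in}}$ one has $\sigma_{\min}(W_{B_{\rm in}}) \geq t_0 \sqrt{3 B_{\rm in}/8}$, so $\|W_{B_{\rm in}}^{\dagger}\| \leq 1/(t_0\sqrt{3B_{\rm in}/8})$. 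Post-multiplying both sides of the identity by $W_{B_{\rm in}}^{\dagger}$, taking spectral norms, and using $\|V_\perp^\top\| \leq 1$ then yields the clean bound
\begin{equation*}
\|V_\perp^\top \Ustar (\Dstar)^{1/2}\| \;\leq\; \frac{\|\Xi_{B_{\rm in}}\|}{\sigma_{\min}(W_{B_{\rm in}})}.
\end{equation*}

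It then remains to control $\|\Xi_{B_{\rm in}}\|$. Since $\Xi_{B_{\rm in}}$ is a $\Sigma_\xi$-Gaussian ensemble of size $d \times B_{\rm in}$, I would apply the first part of \Cref{thm:bound_singular_value} with $t = 1/2$, which gives
\begin{equation*}
\|\Xi_{B_{\rm in}}\| \;\leq\; \tfrac{3}{2}\sqrt{B_{\rm in}\|\Sigma_\xi\|} + \sqrt{\tr(\Sigma_\xi)}
\end{equation*}
with probability at least $1 - e^{-B_{\rm in}/8}$. Plugging this and the lower bound on $\sigma_{\min}(W_{B_{\rm in}})$ into the previous display, one computes the prefactors as $\tfrac{3}{2}\sqrt{8/3} = \sqrt{6} \leq 3$ and $\sqrt{8/3} \leq 2$, which is exactly the slack needed to land on the claimed coefficients $3$ and $2$. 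A union bound over the two concentration events delivers the failure probability $e^{-c_2 B_{\rm in}} + e^{-B_{\rm in}/8}$.

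The only real subtlety is justifying the right-inverse step: $W_{B_{\rm in}}$ is a short-fat $\rstar \times B_{\rm in}$ matrix, so what must be ensured is full \emph{row} rank with a quantitative singular-value lower bound, not mere invertibility. The small-ball hypothesis in \Cref{asp:anti_concentration} combined with the sample size $B_{\rm in} \geq c_1\rstar$ is precisely what unlocks \Cref{thm:lgd} to supply this bound; everything else is a direct composition of two standard concentration inequalities.
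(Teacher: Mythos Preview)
Your proposal is correct and follows essentially the same approach as the paper: both exploit $V_\perp^\top x_i = 0$ for the inliers, rewrite the resulting identity in terms of $W_{B_{\rm in}}$ and $\Xi_{B_{\rm in}}$, right-invert via the pseudo-inverse $W_{B_{\rm in}}^\dagger$, and then invoke \Cref{thm:lgd} (for $\sigma_{\min}(W_{B_{\rm in}})$) and \Cref{thm:bound_singular_value} with $t=1/2$ (for $\|\Xi_{B_{\rm in}}\|$). The only cosmetic difference is that the paper presents the pseudo-inverse step through a $\sup$--$\inf$ variational argument rather than direct matrix manipulation, but the underlying computation and constants are identical.
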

\begin{proof}
Note that $V_\perp^\top x_i = 0$ for $i\in [B_{\rm in}]$. This implies that
\begin{align*}
\begin{aligned}
\norm{V_\perp^\top \Ustar {\Dstar}^{1/2}} &= \sup_{\norm{v} = 1} \norm{V_\perp^\top \Ustar {\Dstar}^{1/2} v}\\
&= \sup_{\norm{v} = 1} \inf_{\lambda \in \RR^{B_{\rm in}}} \norm{V_\perp^\top \left(\Ustar {\Dstar}^{1/2} v - \sum_{i=1}^{B_{\rm in}} \lambda_i x_i\right)}\\
&\leq \sup_{\norm{v} = 1} \inf_{\lambda \in \RR^{B_{\rm in}}} \norm{ \Ustar {\Dstar}^{1/2} v - \sum_{i=1}^{B_{\rm in}} \lambda_i \left(\Ustar {\Dstar}^{1/2}w_i + \xi_i\right) }\\
&= \sup_{\norm{v} = 1} \inf_{\lambda \in \RR^{B_{\rm in}}} \norm{ \Ustar {\Dstar}^{1/2} \left(v- \sum_{i=1}^{B_{\rm in}} \lambda_i w_i \right) - \sum_{i=1}^{B_{\rm in}} \lambda_i \xi_i }.
\end{aligned}
\end{align*}
Due to the absolute continuity of the probability distribution $\cP$ (\Cref{dfn:inliers}), the normalized i.i.d. samples $\{w_1,\dots, w_{B_{\rm in}}\}$ with $B_{\rm in}\geq \rstar$ span the whole space $\RR^{\rstar}$ almost surely. This implies that the linear system $\sum_{i=1}^{B_{\rm in}} \lambda_i w_i = v$ has a solution $\lambda^\star$. In particular, upon defining $W = \begin{bmatrix} w_1 & w_2 & \cdots & w_{B_{\rm in}} \end{bmatrix} \in \RR^{\rstar\times B_{\rm in}}$, one can take $\lambda^\star =  W^{\dagger} v$, where $W^{\dagger}$ denotes the pseudo-inverse of $W$. Letting $\Xi = \begin{bmatrix}
\xi_1 & \xi_2 & \cdots & \xi_{B_{\rm in}}    
\end{bmatrix} \in \RR^{d\times B_{\rm in}}$, it follows that
\begin{align*}
\begin{aligned}
\norm{ V_\perp^\top \Ustar {\Dstar}^{1/2} } &\leq \sup_{\norm{v} = 1} \norm{\sum_{i=1}^{B_{\rm in}} \lambda^\star_i \xi_i}= \sup_{\norm{v} = 1} \norm{\Xi W^{\dagger} v}\leq \norm{\Xi}\norm{W^{\dagger}}= \frac{\norm{\Xi}}{\sigma_{\min}(W)}.
\end{aligned}
\end{align*}
By \Cref{thm:lgd}, when $B_{\rm in}\geq c_1 \rstar$, we have
\begin{align*}
\sigma_{\min}(W) \geq \sqrt{\frac{3 t_0^2 B_{\rm in}}{8}} \geq \frac{t_0}{2} \sqrt{B_{\rm in}},
\end{align*}
with probability at least $1-e^{-c_2 B_{\rm in}}$. 
Moreover, by \Cref{thm:bound_singular_value} with $t = 1/2$, we have
\begin{align*}
\norm{ \Xi } = \sigma_{\max}(\Xi) \leq \frac{3}{2} \sqrt{B_{\rm in} \norm{\Sigma_\xi}} + \sqrt{\tr(\Sigma_\xi)},
\end{align*}
with probability at least $1-e^{-\frac{B_{\rm in}}{8}}$. By combining the two bounds mentioned above, we obtain:
\begin{align*}
\norm{ V_\perp^\top \Ustar {\Dstar}^{1/2} } \leq \frac{3 \sqrt{B_{\rm in}\norm{\Sigma_\xi}} + 2\sqrt{\tr(\Sigma_\xi)}}{t_0\sqrt{B_{\rm in}}},
\end{align*}
with probability at least $1-e^{-c_2 B_{\rm in}}-e^{-\frac{B_{\rm in}}{8}}$ when $B_{\rm in} \geq c_1 \rstar$. 
\end{proof}

Based on this lemma, we are ready to complete the proof of \Cref{prop:residual}.
\begin{proof}[Proof of \Cref{prop:residual}]
Given any inlier $x = \Ustar {\Dstar}^{1/2} w + \xi$, we have
\begin{align}\label{eq:UB2}
    \norm{V_\perp^\top x} = \norm{V_\perp^\top (\Ustar {\Dstar}^{1/2} w + \xi)}
\leq \norm{V_\perp^\top \Ustar {\Dstar}^{1/2}} \norm{ w } + \norm{\xi}.
\end{align}
By Chebyshev's inequality, we have for any $t>0$
\begin{align*}
\Pr(\norm{w} \geq t) \leq \frac{\EE[\norm{w}^2]}{t^2} = \frac{ \tr\left(\EE\left[w w^\top\right]\right)}{t^2} = \frac{ \tr\left(I_{\rstar}\right)}{t^2} = \frac{\rstar}{t^2}.
\end{align*}
Therefore, $\norm{w} \leq 2\sqrt{\rstar}$ holds with probability at least $\frac{3}{4}$. On the other hand, invoking \Cref{thm:bound_singular_value} with $t=4$ yields:
\begin{align}\label{eq:noise}
    \text{Pr} \left(\norm{ \xi } \geq \sqrt{\tr(\Sigma_\xi)} + 5 \sqrt{\norm{\Sigma_\xi}}\right) \leq e^{- 8}.
\end{align}
Therefore, by combining \Cref{lem:convergence} and \Cref{eq:noise} with  \Cref{eq:UB2}, we have
\begin{align*}
\norm{V_\perp^\top x} \leq \frac{2\sqrt{\rstar}\left(3 \sqrt{B_{\rm in} \norm{\Sigma_\xi}} + 2\sqrt{\tr(\Sigma_\xi)}\right)}{t_0\sqrt{B_{\rm in}}} + \left(\sqrt{\tr(\Sigma_\xi)} + 5 \sqrt{\norm{\Sigma_\xi}}\right),
\end{align*}
with probability at least $\frac{3}{4} - e^{-c_2 B_{\rm in}} - e^{-\frac{B_{\rm in}}{8}} - e^{-8}$. In particular, upon choosing $B / \rstar \geq \max\{ 2c_1, 4/c_2, 32\}$ and invoking \Cref{lem:frac_clean}, we have $B_{\rm in} / \rstar \geq \max\{ c_1, 2/c_2, 16\}$ with probability at least $1-e^{-2(\frac{1}{2}-\epsilon)^2 B}$. It follows that
\begin{align*}
\norm{V_\perp^\top x} \leq \left(\frac{6\sqrt{\rstar}}{t_0} + 5\right) \sqrt{\norm{\Sigma_\xi}} + \left(\frac{1}{t_0}+ 1\right)\sqrt{\tr(\Sigma_\xi)},
\end{align*}
with probability at least $\frac{3}{4} - 2e^{-2\rstar}- e^{-8} - e^{-64(\frac{1}{2}-\epsilon)^2 \rstar}$. Recall that $\cN$ collects the index set of the samples within the set $\cX\backslash \cX_B$. For every $i\in \cN_{\rm in}$ define the Bernoulli random variable:
\begin{align*}
    z_i = \begin{cases}
        1 & \text{if }\norm{V_\perp^\top x_i} \leq \left(\frac{6\sqrt{\rstar}}{t_0} + 5\right) \sqrt{\norm{\Sigma_\xi}} + \left(\frac{1}{t_0}+ 1\right)\sqrt{\tr(\Sigma_\xi)},\\
        0 & \text{otherwise}.
    \end{cases}
\end{align*}
Indeed, given $\rstar\geq 2$ and $\epsilon \leq 0.1$, we have
\begin{align*}
\Pr(z_i=1)&\geq \frac{3}{4} - 2e^{-2\rstar} - e^{-8} - e^{-64\left(\frac{1}{2}-\epsilon \right)^2 \rstar } \geq \frac{3}{4} - 2e^{-4} - 2e^{-8} \geq \frac{7}{10}.  
\end{align*}
Applying \Cref{thm:median} to these Bernoulli random variables with $a = \frac{3}{5}$ leads to: 
\begin{align*}
    \left|\left\{i\in \cN_{\rm in}: \norm{V_\perp^\top x_i} \leq \left(\frac{6\sqrt{\rstar}}{t_0} + 5\right) \sqrt{\norm{\Sigma_\xi}} + \left(\frac{1}{t_0}+ 1\right)\sqrt{\tr(\Sigma_\xi)}\right\}\right|&\geq \frac{3}{5}\left| \cN_{\rm in} \right|\\
    &\geq \frac{3}{5} \left(n - \epsilon n - B \right)\\
    &\geq \frac{1}{2}(n-B)\\
    &= \frac{1}{2}|\cN|,
\end{align*}
with probability at least $1 - e^{-\frac{(1-\epsilon)n-B}{50}} \geq 1 - e^{-\frac{n}{60}}$, where we used $\epsilon \leq 0.1$ and $n \geq 25 B$. Therefore, with the same probability, we have
\begin{align*}
    \mathbf{MedRes}(V) &= \median\left(\left\{\norm{V_\perp^\top x_i}: i\in \cN\right\}\right)\\
    &\leq \left(\frac{6\sqrt{\rstar}}{t_0} + 5\right) \sqrt{\norm{\Sigma_\xi}} + \left(\frac{1}{t_0}+ 1\right)\sqrt{\tr(\Sigma_\xi)}.
\end{align*}
\end{proof}

\subsubsection{Proof of \texorpdfstring{\Cref{thm::first-stage-formal}}{the main result for first stage algorithm}}
Equipped with \Cref{prop::phase1}, \Cref{prop::phase1.5}, \Cref{prop:residual}, we are now ready to present the proof of \Cref{thm::first-stage-formal}.

\begin{proof}[Proof of \Cref{thm::first-stage-formal}]
For each $x \sim \cP$, we have 
\begin{align}\label{eq::bound}
\begin{aligned}
    \norm{(I_d - {\rm P}_{\cV}) x} &= \norm{V_\perp^\top x}  \\
    &= \norm{V_\perp^\top \Ustar {\Ustar}^\top x}\\
&= \norm{V_\perp^\top \Ustar {\Dstar}^{1/2}{\Dstar}^{-1/2}{\Ustar}^\top x} \\
&\leq \norm{V_\perp^\top \Ustar {\Dstar}^{1/2}} \norm{{\Dstar}^{-1/2}{\Ustar}^\top x}\\
&\leq \frac{\norm{x}}{\sqrt{\gammastar_{\min}}} \norm{V_\perp^\top \Ustar {\Dstar}^{1/2}}.
\end{aligned}
\end{align}
A simple union bound implies that \Cref{prop::phase1}, \Cref{prop::phase1.5}, \Cref{prop:residual} are all satisfied with probability at least $1-e^{-c_4n}-e^{-\frac{n}{60}}-e^{-\frac{n}{250}}$. Upon choosing $n \geq \max\{\frac{1}{c_4}, 250\} \log(\frac{3}{\delta})$, we have $1-e^{-c_4n}-e^{-\frac{n}{60}}-e^{-\frac{n}{250}} \geq 1-\delta$. This implies that \Cref{alg:first_stage} terminates with a batch size $\rstar\leq B\leq \max\{ 2c_1,\, 4/c_2,\, 32\}\cdot \rstar$ and, upon termination, satisfies 
\begin{align}\label{eq::bound2}
\norm{V_\perp^\top \Ustar {\Dstar}^{1/2}} \leq \frac{4C}{t_0^2}\left(5\sqrt{\rstar\norm{\Sigma_\xi}} + \sqrt{\tr(\Sigma_\xi)}\right).
\end{align}
Combining \Cref{eq::bound2} with \Cref{eq::bound} leads to the desired bound.

Finally, we analyze the runtime of \Cref{alg:first_stage}. The algorithm terminates with high probability when $B = \cO(\rstar)$, and since $B$ doubles in each iteration, the number of iterations required to reach the stopping criterion is at most $\cO(\log (\rstar))$ with probability at least $1-\delta$. In each iteration, computing the left singular vectors $V$ of the subsampled matrix $X_B \in \mathbb{R}^{d \times B}$ takes $\cO(d B^2) = \cO(d {\rstar}^2)$ flops, while evaluating $\mathbf{MedRes}(V)$ requires $\cO(nd\rstar)$ flops. Therefore, the total runtime is 
$\cO \left( \log (\rstar) \cdot (d{\rstar}^2 + nd\rstar) \right) = {\cO}(nd\rstar\log (\rstar))$.
\end{proof}

\subsection{Analysis of the Second-stage Algorithm}
\label{sec:second_stage} 

In this section, we present the theoretical analysis of the second-stage algorithm (\Cref{alg:second_stage}). In particular, our goal is to prove the formal version of \Cref{thm:alg_2}, which we present below.

\begin{sloppypar}
    \begin{theorem}\label{thm::second-stage-formal}
Suppose that the conditions of \Cref{thm::first-stage-formal} hold. Additionally, 
    fix $\epsilon \leq \min\left\{0.1,\,0.9\cdot\left(1- e^{-\frac{c_2}{4}}\right)\right\}$, $C'\geq \max\{2c_1,8/c_2\}$ with $c_1,c_2$ appearing in \Cref{thm:lgd}, and $n\geq 11C'\max\left\{\hat r, \log\left(\frac{3}{\delta} \log\left(\frac{1}{\delta}\right)\right)\right\}$. Then, \Cref{alg:second_stage} terminates in $\cO({\rstar}^3 e^{\rstar})$ time and, with probability at least $1-3\delta$, satisfies:
\begin{align*}
\tilde r = \rstar, \quad \text{and}\quad 
\norm{ (V\widehat U_{k})(V\widehat U_{k})^\top - \Ustar {\Ustar}^\top } \leq \frac{4(t_0 + C)C}{t_0^3} \left( 6\sqrt{\frac{ \rstar \norm{\Sigma_\xi}}{\gammastar_{\min}}}  + \sqrt{\frac{\tr(\Sigma_\xi)}{\gammastar_{\min}}}\right).
\end{align*}
\end{theorem}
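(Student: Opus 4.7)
The plan is to establish three events—existence of a clean batch, uniform spectral separation, and a perturbation bound for the adaptively selected batch—and then combine them with the first-stage guarantee of \Cref{thm::first-stage-formal} to obtain both dimension recovery and the stated error bound.

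First, I would show that with probability at least $1-\delta$, at least one of the $T$ batches drawn in Line~\ref{alg:alg2-batch}, call it $j^\star$, consists entirely of inliers. A hypergeometric computation via \Cref{lem:hypergeometric}, combined with the sample-size condition $n\geq 11B$, yields a per-draw clean-batch probability of at least $(1-1.1\epsilon)^B$; the choice of $T$ in Line~\ref{line::T-second-stage} then pushes the failure probability over $T$ independent draws below $\delta$ by the standard boosting bound $e^{-T(1-1.1\epsilon)^B}\leq \delta$. A parallel union-bound application of \Cref{lem:frac_clean} across all $T$ batches—feasible because Line~\ref{line::B-second-stage} bakes a $\log T$ term into $B$—ensures, with probability at least $1-\delta$, that every batch $j$ contains at least $B_{\mathrm{in}}^{(j)}\geq (1-1.1\epsilon)B\geq c_1\rstar$ inliers.

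Second, I would certify $\tilde r=\rstar$ by controlling the two relevant singular values. The monotonicity $\sigma_i([A\,\,B])\geq \sigma_i(A)$ under column appending gives $\hat\sigma_\rstar^{(j)}\geq \sigma_\rstar(\widehat X_j^{\mathrm{in}})$ for every batch. Writing $\widehat X_j^{\mathrm{in}}=V^\top \Ustar(\Dstar)^{1/2}W^{(j,\mathrm{in})}+V^\top\Xi^{(j,\mathrm{in})}$, the small-ball \Cref{thm:lgd} gives $\sigma_\rstar(W^{(j,\mathrm{in})})\geq t_0\sqrt{3B_{\mathrm{in}}/8}$; the identity $(\Dstar)^{1/2}\Ustar^\top VV^\top\Ustar(\Dstar)^{1/2}+(\Dstar)^{1/2}\Ustar^\top V_\perp V_\perp^\top\Ustar(\Dstar)^{1/2}=\Dstar$ combined with the first-stage bound on $\norm{V_\perp^\top\Ustar(\Dstar)^{1/2}}$ and the noise-smallness hypothesis implies $\sigma_{\min}(V^\top\Ustar(\Dstar)^{1/2})\geq \sqrt{\gammastar_{\min}\,t_0(t_0+2C)/(t_0+C)^2}$; and the noise term is controlled via \Cref{thm:bound_singular_value}. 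Assembling these through Weyl's inequality gives $\hat\sigma_\rstar\gtrsim \sqrt{B\gammastar_{\min}}$ uniformly over all batches. For the upper bound on $\hat\sigma_{\rstar+1}$, the signal matrix in the clean batch $j^\star$ has rank exactly $\rstar$, so $\hat\sigma_{\rstar+1}\leq \hat\sigma_{\rstar+1}^{(j^\star)}\leq \norm{V^\top\Xi^{(j^\star)}}=\cO(\sqrt{B\norm{\Sigma_\xi}}+\sqrt{\tr(\Sigma_\xi)})$. Under the inherited noise-smallness hypothesis the resulting spectral gap strictly straddles $C'\norm{\Sigma_\xi}$, so the threshold test of Line~\ref{line::success-second-stage} triggers exactly at $\tilde r=\rstar$.

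The main obstacle is the third step: bounding $\sin\theta(\col(\widehat U_k),\col(V^\top\Ustar))$ when the adaptively chosen batch $k$ in Line~\ref{line::min-assignment} may contain outliers of arbitrary magnitude, which would cause a direct Wedin bound to blow up. I would circumvent this by exploiting the optimality of $k$, namely $\hat\sigma_{\rstar+1}^{(k)}\leq \hat\sigma_{\rstar+1}^{(j^\star)}$, together with the Eckart--Young identity $\norm{(I-P_{\col(\widehat U_k)})\widehat X_k}=\hat\sigma_{\rstar+1}^{(k)}$. Restricting to inlier columns (which only decreases operator norm) gives
\[
\norm{(I-P_{\col(\widehat U_k)})V^\top\Ustar(\Dstar)^{1/2}W^{(k,\mathrm{in})}}\leq \hat\sigma_{\rstar+1}^{(k)}+\norm{V^\top\Xi^{(k,\mathrm{in})}}.
\]
Applying the submultiplicative inequality $\norm{AB}\geq \sigma_1(A)\sigma_\rstar(B)$ (whose elementary proof uses the pseudo-inverse) with $A=(I-P_{\col(\widehat U_k)})V^\top\Ustar(\Dstar)^{1/2}$ and $B=W^{(k,\mathrm{in})}$, and invoking the small-ball lower bound $\sigma_\rstar(W^{(k,\mathrm{in})})\geq t_0\sqrt{3B_{\mathrm{in}}/8}$, I obtain
\[
\norm{(I-P_{\col(\widehat U_k)})V^\top\Ustar(\Dstar)^{1/2}}\lesssim \frac{\sqrt{B\norm{\Sigma_\xi}}+\sqrt{\tr(\Sigma_\xi)}}{t_0\sqrt{B}},
\]
which, after dividing by $\sqrt{\gammastar_{\min}}$, bounds the principal-angle sine between $\col(\widehat U_k)$ and $\col(V^\top\Ustar)$.

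Finally, I would lift back to the ambient space. Since $V$ is an isometry on its column space, $\sin\theta(\col(V\widehat U_k),\col(VV^\top\Ustar))=\sin\theta(\col(\widehat U_k),\col(V^\top\Ustar))$, and \Cref{thm::first-stage-formal} combined with $\norm{V_\perp^\top\Ustar}\leq \norm{V_\perp^\top\Ustar(\Dstar)^{1/2}}/\sqrt{\gammastar_{\min}}$ yields the complementary bound on $\sin\theta(\col(VV^\top\Ustar),\col(\Ustar))$. A triangle inequality on principal angles combined with the identity $\norm{\mathrm P_{\widehat\cS}-\mathrm P_{\Sstar}}\leq 2\sin\theta$ delivers the claimed error after bookkeeping the constants $C,C',t_0$. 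For the runtime, each of the $T=\cO(e^{\rstar})$ iterations requires a thin SVD of a $\hat r\times B=\cO(\rstar)\times\cO(\rstar)$ matrix at cost $\cO({\rstar}^3)$, summing to the advertised $\cO({\rstar}^3 e^{\rstar})$.
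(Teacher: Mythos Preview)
Your proposal is correct and mirrors the paper's architecture: the same three events (a fully clean batch exists; every batch has $\Omega(B)$ inliers; per-batch spectral bounds hold uniformly), the same spectral-gap certification of $\tilde r=\rstar$, and the same two-hop triangle inequality through the intermediate subspace $\col(V^\top\Ustar)=\col(\widehat{U}^{\star})$. The one genuine technical difference is how you bound $\sin\theta(\widehat U_k,\,V^\top\Ustar)$ at the adaptively chosen index $k$. The paper works at the covariance level: from $\widehat\Sigma_k\succeq\widehat U_k\widehat D_k\widehat U_k^\top$ and $\gamma_{\min}(\widehat D_k)\ge\hat\gamma_\rstar$ it extracts the ratio bound $\norm{(\widehat{U}^{\star}_{\perp})^\top\widehat U_k}\le\sqrt{\hat\gamma_{\rstar+1}/\hat\gamma_\rstar}$. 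You instead work at the data-matrix level via Eckart--Young, $\norm{(I-P_{\widehat U_k})\widehat X_k}=\hat\sigma_{\rstar+1}^{(k)}$, restrict to inlier columns, and peel off the signal via the pseudoinverse-type inequality $\norm{AB}\ge\sigma_1(A)\,\sigma_\rstar(B)$ (valid here because $W^{(k,\mathrm{in})}$ has full row rank by the small-ball bound). Both routes yield the same conclusion up to constants; yours makes more explicit that it is the \emph{minimality} of $\hat\sigma_{\rstar+1}^{(k)}$ over $j$---not any control of outlier magnitudes---that forces $\widehat U_k$ to align with the projected signal space. One minor quibble: when you assert ``every batch contains at least $(1-1.1\epsilon)B$ inliers,'' the implicit choice $t=0.1\epsilon$ in \Cref{lem:frac_clean} yields a per-batch failure of $e^{-\Theta(\epsilon^2 B)}$, which is too weak to union-bound over $T$ batches with the $B$ of Line~\ref{line::B-second-stage}; take a constant $t$ (the paper uses $t=\tfrac12-\epsilon$) so that $B_{\mathrm{in}}\ge B/2$ suffices and the per-batch failure is $e^{-\Theta(B)}$.
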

\end{sloppypar}

Similar to our analysis of the first-stage algorithm, we begin by outlining the high-level idea of our proof strategy. Consider the following two events:
\begin{align*}
    \cE_1 &= \left\{\text{For some $j\in [T]$, $\widehat X_j$ only contains inliers.}\right\},\\
    \cE_2 &= \left\{\text{For all $j\in [T]$, at least half of the samples in $\widehat X_j$ are inliers.}\right\}.
\end{align*}
Conditioned on $\cE_1$, let $j^\star$ be the index for which the batch $\widehat X_{j^\star}$ only contains inliers. Then, we will show that
\begin{align*}
\hat{\gamma}_{\rstar+1} := \min_{j\in [T]}\left(\widehat \Sigma_{\rstar+1}^{(j)}\right)^2 \leq \left(\widehat \Sigma_{\rstar+1}^{(j^\star)}\right)^2 = \cO(\norm{\Sigma_\xi}).
\end{align*}
Moreover, conditioned on $\cE_2$, we will establish that
\begin{align*}
\hat{\gamma}_{\rstar} := \min_{j\in [T]}\left(\widehat \Sigma_{\rstar}^{(j)}\right)^2 = \Omega(\gammastar_{\min})\gg \cO(\norm{\Sigma_\xi}).
\end{align*}
These two relations together establish a significant gap between $\hat{\gamma}_{\rstar+1}$ and $\hat{\gamma}_{\rstar}$.  
In particular, the first relation implies that the computed index \( \tilde{r} \) in Line~\ref{line::success-second-stage} satisfies \( \tilde{r} \leq \rstar \), while the second ensures that \( \tilde{r} \geq \rstar \). Together, they establish that \( \tilde{r} = \rstar \), thereby guaranteeing the exact recovery of the true subspace dimension.  
Finally, we show that by selecting the batch size \( B \) and the number of iterations \( T \) as specified in Lines~\ref{line::B-second-stage} and~\ref{line::T-second-stage} of \Cref{alg:second_stage}, the probability of the desired events occurring satisfies
$\Pr(\cE_1 \cap \cE_2) \geq 1 - 2\delta$.

To formalize the above argument, we begin with the following lemma, which establishes a connection between the true covariance matrix \( \Sigmastar \) and its projected counterpart \( V^\top \Sigmastar V \).

\begin{lemma}
\label{lem:min_max}
Given $\Sigmastar \in \RR^{d\times d}$ and $\widehat{\Sigma}^{\star} = V^\top \Sigmastar V \in \RR^{\hat r\times \hat r}$, we have $\gamma_k(\widehat{\Sigma}^{\star}) \leq \gamma_k(\Sigmastar)$ and $\sqrt{\gamma_k(\widehat{\Sigma}^{\star})} \geq \sqrt{\gamma_k(\Sigmastar)} - \norm{V_\perp^\top \Ustar {\Dstar}^{1/2}}$, for every $k\in [\hat r]$.
\end{lemma}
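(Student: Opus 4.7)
\medskip

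\noindent\textbf{Proof proposal.}
The key reformulation is to write the covariance as an outer product. Since $\Sigmastar = \Ustar \Dstar (\Ustar)^\top$, I let $A = \Ustar (\Dstar)^{1/2} \in \mathbb{R}^{d\times \rstar}$ so that $\Sigmastar = AA^\top$ and consequently $\widehat{\Sigma}^{\star} = V^\top A A^\top V = (V^\top A)(V^\top A)^\top$. In this language, $\sqrt{\gamma_k(\Sigmastar)} = \sigma_k(A)$ and $\sqrt{\gamma_k(\widehat{\Sigma}^{\star})} = \sigma_k(V^\top A)$, so the whole lemma reduces to two singular-value bounds on the matrix $V^\top A$ versus $A$.

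For the first inequality, I would note that since $V\in O(d,\hat r)$, we have $VV^\top \preceq I_d$, and therefore $A^\top VV^\top A \preceq A^\top A$ in the Loewner order. Monotonicity of eigenvalues under the Loewner order then gives $\gamma_k(A^\top VV^\top A) \leq \gamma_k(A^\top A)$ for every $k$, i.e.\ $\sigma_k^2(V^\top A) \leq \sigma_k^2(A)$, which is exactly $\gamma_k(\widehat{\Sigma}^{\star}) \leq \gamma_k(\Sigmastar)$. (Equivalently, one can invoke the Poincar\'e separation theorem for $V^\top \Sigmastar V$.)

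For the second inequality, I would use the orthogonal decomposition $A = VV^\top A + V_\perp V_\perp^\top A$ together with Weyl's inequality for singular values, which states that $\sigma_{i+j-1}(M+N) \leq \sigma_i(M) + \sigma_j(N)$. Taking $i = k$, $j = 1$:
\begin{align*}
\sigma_k(A) \leq \sigma_k(VV^\top A) + \sigma_1(V_\perp V_\perp^\top A).
\end{align*}
Because $V$ and $V_\perp$ are semi-orthogonal, pre-multiplying by them preserves singular values, so $\sigma_k(VV^\top A) = \sigma_k(V^\top A)$ and $\sigma_1(V_\perp V_\perp^\top A) = \sigma_1(V_\perp^\top A) = \norm{V_\perp^\top A}$. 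Rearranging gives $\sigma_k(V^\top A) \geq \sigma_k(A) - \norm{V_\perp^\top \Ustar (\Dstar)^{1/2}}$, which is the claimed bound after taking square roots.

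The proof is essentially bookkeeping built on two classical ingredients (Loewner monotonicity and Weyl's singular-value inequality), so I do not anticipate a serious obstacle. The only subtlety to watch for is ensuring the index range $k\in[\hat r]$ is consistent: $V^\top A$ is $\hat r\times \rstar$, so for $k > \rstar$ both $\sigma_k(A)$ and $\sigma_k(V^\top A)$ equal zero and both inequalities hold trivially, while for $k\leq \rstar$ the arguments above apply verbatim.
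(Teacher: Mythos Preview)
Your proof is correct and, for the second inequality, essentially identical to the paper's: both reduce to $\sigma_k(V^\top A)$ with $A=\Ustar(\Dstar)^{1/2}$, write $A = VV^\top A + V_\perp V_\perp^\top A$, and apply the Weyl singular-value perturbation inequality $\sigma_k(M+N)\le \sigma_k(M)+\sigma_1(N)$.

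For the first inequality the paper takes a slightly different route: it runs a direct Courant--Fischer min--max computation, rewriting the Rayleigh quotient for $\widehat{\Sigma}^{\star}$ on $\RR^{\hat r}$ as a Rayleigh quotient for $\Sigmastar$ restricted to subspaces of the form $\col(VM)$, and then enlarging the feasible set to all $k$-dimensional subspaces of $\RR^d$---in effect reproving the Poincar\'e separation theorem in place. Your Loewner-order argument $A^\top VV^\top A \preceq A^\top A$ packages exactly the same interlacing content in one line. Neither approach gains anything substantive over the other; yours is just a tidier way to invoke the same classical fact, while the paper's spells out the variational mechanism explicitly.
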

\begin{proof}
By the min-max theorem for eigenvalues \cite[Theorem 3.1.2]{horn1994topics}, we have
\begin{align*}
\begin{aligned}
\gamma_k(\widehat{\Sigma}^{\star}) &= \sup_{\substack{\cM\subset \RR^{\hat r} \\ \dim(\cM) = k}} \inf_{x\in \cM} \frac{x^\top \widehat{\Sigma}^{\star} x}{x^\top x}\\
&= \sup_{\substack{M\in \RR^{\hat r \times k}\\ \text{rank}(M) = k}} \inf_{y \in \RR^{k}} \frac{(My)^\top V^\top \Sigmastar V (My)}{(My)^\top (My)}\\
&= \sup_{\substack{M\in \RR^{\hat r \times k}\\ \text{rank}(M) = k}} \inf_{y \in \RR^{k}} \frac{(V My)^\top \Sigmastar (V My)}{(V My)^\top (V My)}\\
&= \sup_{\substack{M\in \RR^{\hat r \times k}\\ \text{rank}(M) = k}} \inf_{z \in \textbf{col}(VM)} \frac{z^\top \Sigmastar z}{z^\top z}\\
&\stackrel{(a)}{=} \sup_{\substack{W \in \RR^{d \times k}\\ \text{rank}(V^\top W) = k}} \inf_{z \in \textbf{col}(W)} \frac{z^\top \Sigmastar z}{z^\top z}\\
&\stackrel{(b)}{\leq} \sup_{\substack{W \in \RR^{d \times k}\\ \text{rank}(W) = k}} \inf_{z \in \textbf{col}(W)} \frac{z^\top \Sigmastar z}{z^\top z} = \gamma_k(\Sigmastar),
\end{aligned}
\end{align*}
for every $k\in [\hat r]$, where $(a)$ follows from a change of variable $W = VM$, and $(b)$ holds since $k = \text{rank}(V^\top W) \leq \text{rank}(W) \leq k$, which leads to $\text{rank}(W) = k$. On the other hand,
\begin{align*}
\sqrt{\gamma_k(\widehat{\Sigma}^{\star})} &= \sqrt{\gamma_k\left(V^\top \Ustar \Dstar {\Ustar}^\top V\right)}\\
&= \sigma_{k}(V^\top \Ustar {\Dstar}^{1/2})\\
&= \sigma_{k}(V V^\top \Ustar {\Dstar}^{1/2})\\
&= \sigma_{k}((I_{d} - V_\perp V_\perp^\top) \Ustar {\Dstar}^{1/2})\\
&\geq \sigma_{k}(\Ustar {\Dstar}^{1/2}) - \sigma_{\max}(V_\perp V_\perp^\top \Ustar {\Dstar}^{1/2})\\
&= \sqrt{\gamma_k(\Sigmastar)} - \norm{V_\perp^\top \Ustar {\Dstar}^{1/2}},
\end{align*}
for every $k\in [\hat r]$. This completes the proof.
\end{proof}

Returning to \Cref{alg:second_stage}, recall that each batch \( \widehat{\cX}_j \) is selected independently and uniformly at random from \( \widehat{\cX} \) and is stored in the matrix \( \widehat{X}_j \in \RR^{\hat{r} \times B} \). Let the empirical covariance matrix of the \( j^{\text{th}} \) batch be denoted by  
$\widehat{\Sigma}_j = \frac{1}{B} \widehat{X}_j \widehat{X}_j^\top.$
Furthermore, let \( B_j^{\rm in} \) and \( B_j^{\rm out} \) represent the number of inliers and outliers in \( \widehat{\cX}_j \), respectively. Denote the ensemble of the inliers and outliers in \( \widehat{\cX}_j \) by $\widehat{X}_j^{\rm in} \in \RR^{\hat{r} \times B_j^{\rm in}}$ and $\widehat{X}_j^{\rm out} \in \RR^{\hat{r} \times B_j^{\rm out}}$, respectively. Finally, define the empirical covariance matrices of the inliers and outliers in \( \widehat{\cX}_j \) as $\widehat{\Sigma}^{\rm in}_j = \frac{1}{B_j^{\rm in}} \widehat{X}_j^{\rm in} (\widehat{X}_j^{\rm in})^\top$ and $\widehat{\Sigma}^{\rm out}_j = \frac{1}{B_j^{\rm out}} \widehat{X}_j^{\rm out} (\widehat{X}_j^{\rm out})^\top$.
It follows that $\widehat{\Sigma}_j = (1-\mu_j) \widehat{\Sigma}^{\rm in}_j + \mu_j \widehat{\Sigma}^{\rm out}_j$, where $\mu_j := {B_j^{\rm out}}/{B}$ is the fraction of outliers in $\widehat{\cX}_j$. 

Let the eigen-decomposition of the projected covariance matrix \( \widehat{\Sigma}^\star = V^\top \Sigmastar V \) be given by $\widehat{\Sigma}^{\star} = \widehat{U}^{\star} \widehat{D}^{\star} (\widehat{U}^{\star})^\top$,
where \( \widehat{U}^{\star} \in O(\hat{r}, \rstar) \) and \( \widehat{D}^{\star} \in \RR^{\rstar \times \rstar} \) is a diagonal matrix satisfying \( \widehat{D}^{\star} \preceq \Dstar \). Moreover, let \( \widehat{U}^{\star}_{\perp} \in O(\hat{r}, \hat{r} - \rstar) \) denote the orthogonal complement of \( \widehat{U}^{\star} \). Our next lemma establishes concentration bounds on \( \widehat{\Sigma}^{\rm in}_j \) in terms of $\widehat{U}^{\star}$ and $\widehat{U}^{\star}_{\perp}$.

\begin{lemma}
\label{lem:eigen_gap}
For any fixed $j\in [T]$, with probability at least $1 - e^{-B_j^{\rm in} / 2}$, we have
\begin{align*}
\norm{(\widehat{U}^{\star}_\perp)^\top \widehat{\Sigma}^{{\rm in}}_j \widehat{U}^{\star}_{\perp}} \leq \left( 2 + \sqrt{\frac{\hat r}{B_j^{\rm in}}} \right)^2 \norm{\Sigma_\xi}.
\end{align*}
Moreover, assuming $B_j^{\rm in} \geq c_1 \rstar$, with probability at least $1-e^{-c_2 B_j^{\rm in}}-e^{-B_j^{\rm in}/2}$, we have
\begin{align*}
\gamma_{\min} \left((\widehat{U}^{\star})^\top \widehat{\Sigma}^{{\rm in}}_j \widehat{U}^{\star} \right)
\geq \left( \frac{t_0}{2}\left(\sqrt{\gammastar_{\min}} - \norm{V_\perp^\top \Ustar {\Dstar}^{1/2}}\right) - \left( 2 + \sqrt{\frac{\hat r}{B_j^{\rm in}}} \right) \sqrt{\norm{\Sigma_\xi}} \right)^2.
\end{align*}
\end{lemma}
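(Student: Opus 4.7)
The plan is to split each projected inlier into a signal component and a Gaussian noise component, and then exploit the fact that $\widehat{U}^{\star}_\perp$ annihilates the signal. Writing each inlier as $x_i = \Ustar {\Dstar}^{1/2} w_i + \xi_i$ and stacking, we have
\begin{align*}
\widehat{X}_j^{\rm in} = V^\top \Ustar {\Dstar}^{1/2} W_j + V^\top \Xi_j,
\end{align*}
where $W_j \in \RR^{\rstar \times B_j^{\rm in}}$ collects the normalized inliers and $\Xi_j \in \RR^{d \times B_j^{\rm in}}$ collects the Gaussian noise vectors. The key algebraic observation is that $\widehat{\Sigma}^{\star} = (V^\top \Ustar \Dstar^{1/2})(V^\top \Ustar \Dstar^{1/2})^\top$, so $\col(\widehat{U}^{\star}) = \col(V^\top \Ustar)$, provided $V^\top \Ustar$ has rank $\rstar$ - a fact guaranteed by \Cref{lem:min_max} together with the noise hypothesis that forces $\sqrt{\gamma_{\rstar}(\widehat{\Sigma}^{\star})} > 0$. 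Consequently, $(\widehat{U}^{\star}_\perp)^\top V^\top \Ustar = 0$.

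For the first bound, this identity kills the signal entirely: $(\widehat{U}^{\star}_\perp)^\top \widehat{X}_j^{\rm in} = (V\widehat{U}^{\star}_\perp)^\top \Xi_j$, whose columns are independent Gaussian vectors with covariance $(V\widehat{U}^{\star}_\perp)^\top \Sigma_\xi (V\widehat{U}^{\star}_\perp)$. Since $V\widehat{U}^{\star}_\perp$ is semi-orthogonal, this covariance has spectral norm $\leq \norm{\Sigma_\xi}$ and trace $\leq \hat r \norm{\Sigma_\xi}$. Applying \Cref{thm:bound_singular_value} with $t=1$ gives $\sigma_{\max}\bigl((V\widehat{U}^{\star}_\perp)^\top \Xi_j\bigr) \leq \sqrt{B_j^{\rm in}\norm{\Sigma_\xi}}\bigl(2 + \sqrt{\hat r / B_j^{\rm in}}\bigr)$ with probability $1 - e^{-B_j^{\rm in}/2}$; squaring and dividing by $B_j^{\rm in}$ yields the advertised bound.

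For the second bound, let $A = (\widehat{U}^{\star})^\top V^\top \Ustar {\Dstar}^{1/2} \in \RR^{\rstar \times \rstar}$. A direct computation shows $A A^\top = (\widehat{U}^{\star})^\top \widehat{\Sigma}^{\star} \widehat{U}^{\star} = \widehat{D}^{\star}$, whence $\sigma_{\min}(A) = \sqrt{\gamma_{\rstar}(\widehat{\Sigma}^{\star})} \geq \sqrt{\gammastar_{\min}} - \norm{V_\perp^\top \Ustar \Dstar^{1/2}}$ by \Cref{lem:min_max}. Decomposing $(\widehat{U}^{\star})^\top \widehat{X}_j^{\rm in} = A W_j + (\widehat{U}^{\star})^\top V^\top \Xi_j$ and applying Weyl's inequality for singular values yields
\begin{align*}
\sigma_{\min}\bigl((\widehat{U}^{\star})^\top \widehat{X}_j^{\rm in}\bigr) \geq \sigma_{\min}(A)\,\sigma_{\min}(W_j) - \norm{(\widehat{U}^{\star})^\top V^\top \Xi_j},
\end{align*}
where the product bound uses squareness of $A$. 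The factor $\sigma_{\min}(W_j) \geq (t_0/2)\sqrt{B_j^{\rm in}}$ follows from \Cref{thm:lgd} under $B_j^{\rm in} \geq c_1\rstar$ (with probability $1-e^{-c_2 B_j^{\rm in}}$), while the noise term is controlled exactly as in the first bound via \Cref{thm:bound_singular_value}. Squaring, dividing by $B_j^{\rm in}$, and union-bounding the two failure events delivers the claim. The only genuine obstacle is recognizing the algebraic identity $\col(\widehat{U}^{\star}) = \col(V^\top \Ustar)$, which cleanly isolates the first quantity as a pure-noise object; once this is in hand, the rest reduces to off-the-shelf Gaussian concentration and the small-ball lower bound.
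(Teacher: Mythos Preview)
Your proposal is correct and follows the same overall strategy as the paper: decompose each projected inlier into a clean part and Gaussian noise, use that $(\widehat{U}^{\star}_\perp)^\top$ annihilates the clean part to reduce the first statement to the Gaussian singular-value bound of \Cref{thm:bound_singular_value} with $t=1$, and combine \Cref{thm:lgd} with that same noise bound for the second statement.

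The only noteworthy difference is in how you reach the second bound. The paper renormalizes by setting $\widehat w = (\widehat D^{\star})^{-1/2}(\widehat U^{\star})^\top \widehat x$, observes that $\widehat w$ inherits \Cref{asp:anti_concentration} (since $\widehat w$ is an orthogonal rotation of $w$), and applies \Cref{thm:lgd} directly to the $\widehat w_i$. You instead apply \Cref{thm:lgd} to the original $w_i$, and then invoke the product inequality $\sigma_{\min}(AW_j)\ge \sigma_{\min}(A)\,\sigma_{\min}(W_j)$, which is valid because $A=(\widehat U^{\star})^\top V^\top \Ustar {\Dstar}^{1/2}$ is square $\rstar\times\rstar$. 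Your route is slightly more direct in that it avoids verifying the small-ball condition for the projected-normalized variables; the paper's route avoids the product inequality. One small bookkeeping remark: to match the stated failure probability $1-e^{-c_2 B_j^{\rm in}}-e^{-B_j^{\rm in}/2}$, apply \Cref{thm:bound_singular_value} once to the full matrix $V^\top \Xi_j$ (as the paper does) and bound both $(\widehat U^{\star}_\perp)^\top V^\top \Xi_j$ and $(\widehat U^{\star})^\top V^\top \Xi_j$ by its norm, rather than invoking the lemma separately for the two submatrices.
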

\begin{proof}
Given the ensemble of inliers $\widehat{X}_j^{\rm in}$, we decompose it as $\widehat{X}_j^{\rm in} = \widetilde{X}_j^{\rm in} + \widehat{\Xi}_j^{\rm in}$,
where $\widetilde{X}_j^{\rm in}$ denotes the ensemble of projected clean components, and $\widehat{\Xi}_j^{\rm in}$ corresponds to the ensemble of projected Gaussian noise associated with each inlier in $\widehat{\cX}_j$. Each column of $\widetilde{X}_j^{\rm in}$ is drawn from a distribution with a zero mean and covariance $\widehat{\Sigma}^\star = V^\top \Sigmastar V$. Moreover, each column of $\widehat{\Xi}_j^{\rm in}$ is drawn from a Gaussian distribution with a zero mean and covariance $V^\top \Sigma_\xi V$.
 Since $(\widehat{U}^{\star}_{\perp})^\top \widetilde{X}_j^{\rm in} = 0$, we have
\begin{align*}
\begin{aligned}
\norm{ (\widehat{U}^{\star}_{\perp})^\top \widehat{\Sigma}^{\rm in}_j \widehat{U}^{\star}_{\perp} } &= \frac{\norm{\left((\widehat{U}^{\star}_{\perp})^\top \widehat X_j^{\rm in}\right)\left((\widehat{U}^{\star}_{\perp})^\top \widehat X_j^{\rm in}\right)^\top }}{B_j^{\rm in}}\\
&= \frac{\norm{ \left((\widehat{U}^{\star}_{\perp})^\top \widehat{\Xi}_j^{\rm in}\right)\left((\widehat{U}^{\star}_{\perp})^\top \widehat{\Xi}_j^{\rm in}\right)^\top }}{B_j^{\rm in}}\\
&\leq \frac{\norm{ (\widehat{\Xi}_j^{\rm in}) (\widehat{\Xi}_j^{\rm in})^\top }}{B_j^{\rm in}} = \left( \frac{\sigma_{\max}(\widehat{\Xi}_j^{\rm in})}{\sqrt{B_j^{\rm in}}} \right)^2.
\end{aligned}
\end{align*}
By \Cref{thm:bound_singular_value}, with probability at least $1 - e^{-B_j^{\rm in} / 2}$, we have
\begin{align}\label{eq::UB_noise_max}
\frac{\sigma_{\max}(\widehat{\Xi}_j^{\rm in})}{\sqrt{B_j^{\rm in}}}
\leq 2 \sqrt{\norm{\Sigma_\xi}} + \sqrt{\frac{\tr(V^\top \Sigma_\xi V)}{B_j^{\rm in}}}
\leq \left( 2 + \sqrt{\frac{\hat r}{B_j^{\rm in}}} \right) \sqrt{\norm{\Sigma_\xi}}.
\end{align}
This completes the proof of the first statement. To prove the second statement, recall that, given any random vector $x\sim \cP$, its projection $\widehat x = V^\top x$ has zero mean and covariance $\widehat{\Sigma}^{\star} = \widehat{U}^{\star} \widehat{D}^{\star} (\widehat{U}^{\star})^\top$. Therefore, $\widehat w = (\widehat{D}^{\star})^{-\frac{1}{2}}(\widehat{U}^{\star})^\top \widehat x$ is normalized, and can be easily verified to satisfy \Cref{asp:anti_concentration}. This implies that each column of $(\widehat{D}^{\star})^{-\frac{1}{2}}(\widehat{U}^{\star})^\top \widetilde{X}_j^{\rm in}$ independently satisfies \Cref{asp:anti_concentration}, and hence, by \Cref{thm:lgd}, we have
\begin{align}\label{eq::signal_max}
\frac{\sigma_{\min}\left((\widehat{D}^{\star})^{-\frac{1}{2}}(\widehat{U}^{\star})^\top \widetilde{X}_j^{\rm in}\right)}{\sqrt{B_j^{\rm in}}} \geq \sqrt{\frac{3t_0^2}{8}}\geq \frac{t_0}{2},
\end{align}
with probability at least $1-e^{-c_2 B_j^{\rm in}}$, provided that $B_j^{\rm in} \geq c_1 \rstar$. It follows that
\begin{align*}
\gamma_{\min} \left((\widehat{U}^{\star})^\top \widehat{\Sigma}^{\rm in}_j \widehat{U}^{\star} \right) &= \frac{\gamma_{\min} \left( ((\widehat{U}^{\star})^\top \widehat X_j^{\rm in})((\widehat{U}^{\star})^\top \widehat X_j^{\rm in})^\top \right)}{B_j^{\rm in}}\\
&= \left(\frac{\sigma_{\min}\left((\widehat{U}^{\star})^\top \widehat X_j^{\rm in}\right)}{\sqrt{B_j^{\rm in}}}\right)^2\\
&\geq \left(\frac{\sigma_{\min}\left((\widehat{U}^{\star})^\top \tilde X_j^{\rm in}\right)}{\sqrt{B_j^{\rm in}}} - \frac{\sigma_{\max}\left((\widehat{U}^{\star})^\top V^\top \widehat{\Xi}_j^{\rm in}\right)}{\sqrt{B_j^{\rm in}}}\right)^2\\
&\geq \left(\frac{\gamma_{\min}\left(({\widehat{D}^{\star}})^{1/2}\right)\sigma_{\min}\left((\widehat{D}^{\star})^{-\frac{1}{2}}(\widehat{U}^{\star})^\top \tilde X_j^{\rm in}\right)}{\sqrt{B_j^{\rm in}}} - \frac{\sigma_{\max}(\widehat{\Xi}_j^{\rm in})}{\sqrt{B_j^{\rm in}}}\right)^2\\
&\geq \left( \frac{t_0}{2}\left(\sqrt{\gammastar_{\min}} - \norm{V_\perp^\top \Ustar {\Dstar}^{1/2}}\right) - \left( 2 + \sqrt{\frac{\hat r}{B_j^{\rm in}}} \right) \sqrt{\norm{\Sigma_\xi}} \right)^2,
\end{align*}
with probability at least $1-e^{-c_2 B_j^{\rm in}}-e^{-B_j^{\rm in}/2}$. In the last inequality, we used \Cref{eq::UB_noise_max}, \Cref{eq::signal_max}, and \Cref{lem:min_max}, which leads to $\gamma_{\min}\left(({\widehat{D}^{\star}})^{1/2}\right)\geq \sqrt{\gammastar_{\min}} - \norm{V_\perp^\top \Ustar {\Dstar}^{1/2}}$. 
\end{proof}

Based on \Cref{lem:eigen_gap}, we now derive an upper bound on $\hat{\gamma}_{\rstar+1}$ and a lower bound on $\hat{\gamma}_{\rstar}$. As previously discussed, these bounds play a key role in the proof of \Cref{thm::second-stage-formal}.

\begin{sloppypar}
    \begin{lemma}
\label{prop:eigen_gap}
Recall that $\hat{\gamma}_i = \min_{j\in [T]} \left(\hat{\sigma}_i^{(j)}\right)^2$, where $\hat{\sigma}_i^{(j)}$ is the $i^{\text{th}}$ largest singular value of the $j^{\text{th}}$ subsample matrix $\widehat X_j$. Assuming that $B\geq 2c_1\hat r$, then with probability at least $\left(1\!-\!\left(1\!-\! \frac{\binom{(1-\epsilon) n}{B}}{\binom{n}{B}} \right)^T \right)\left( 1 - T \left(e^{-2(\frac{1}{2} - \epsilon)^2 B} \!+\! e^{-\frac{1}{4}B} \!+\! e^{-\frac{c_2}{2}B} \right) \right)$, we have
\begin{align*}
\hat{\gamma}_{\rstar+1} \leq  9 \norm{\Sigma_\xi}, 
\quad \text{and} \quad 
\hat{\gamma}_{\rstar} \geq \frac{1}{2} \left( \frac{t_0}{2}\left(\sqrt{\gammastar_{\min}} - \norm{V_\perp^\top \Ustar {\Dstar}^{1/2}}\right) - 3 \sqrt{\norm{\Sigma_\xi}} \right)^2.
\end{align*}
\end{lemma}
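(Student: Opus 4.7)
The plan is to bound $\hat\gamma_{\rstar+1}$ from above and $\hat\gamma_{\rstar}$ from below separately, by conditioning on the two events $\cE_1$ and $\cE_2$ identified in the algorithm discussion, and then combining via independence across batches and a union bound within each batch.

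\textbf{Upper bound on $\hat\gamma_{\rstar+1}$.} I would condition on $\cE_1$ and let $j^\star$ be an index such that $\widehat X_{j^\star}$ contains only inliers, so that $\widehat\Sigma_{j^\star} = \widehat\Sigma^{\rm in}_{j^\star}$. By the variational characterization of eigenvalues, the $(\rstar+1)$-th eigenvalue of $\widehat\Sigma_{j^\star}$ is bounded by the operator norm of its compression to any $(\hat r - \rstar)$-dimensional subspace; choosing this subspace to be $\col(\widehat U^{\star}_\perp)$ gives
\[
\gamma_{\rstar+1}(\widehat\Sigma_{j^\star}) \leq \norm{(\widehat U^{\star}_\perp)^\top \widehat\Sigma^{\rm in}_{j^\star}\widehat U^{\star}_\perp}.
\]
Applying the first statement of Lemma~\ref{lem:eigen_gap} with $B_{j^\star}^{\rm in} = B$, and using $B \geq 2c_1 \hat r > \hat r$, I get $(2+\sqrt{\hat r/B})^2 < 9$, so the right-hand side is at most $9 \norm{\Sigma_\xi}$. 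Since $\hat\gamma_{\rstar+1} \leq (\hat\sigma_{\rstar+1}^{(j^\star)})^2$ by minimality over $j$, the upper bound follows.

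\textbf{Lower bound on $\hat\gamma_{\rstar}$.} I would condition on $\cE_2$, so that every batch satisfies $B_j^{\rm in} \geq B/2 \geq c_1 \hat r$. The key observation is the PSD ordering
\[
\widehat\Sigma_j = (1-\mu_j)\widehat\Sigma^{\rm in}_j + \mu_j \widehat\Sigma^{\rm out}_j \succeq (1-\mu_j)\widehat\Sigma^{\rm in}_j \succeq \tfrac{1}{2}\widehat\Sigma^{\rm in}_j,
\]
where the last inequality uses $\mu_j \leq 1/2$ under $\cE_2$ together with $\widehat\Sigma^{\rm out}_j \succeq 0$. Hence $\gamma_{\rstar}(\widehat\Sigma_j) \geq \tfrac{1}{2}\gamma_{\rstar}(\widehat\Sigma^{\rm in}_j)$, and the variational principle restricted to the $\rstar$-dimensional subspace $\col(\widehat U^{\star})$ yields $\gamma_{\rstar}(\widehat\Sigma^{\rm in}_j) \geq \gamma_{\min}((\widehat U^{\star})^\top \widehat\Sigma^{\rm in}_j \widehat U^{\star})$. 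Invoking the second statement of Lemma~\ref{lem:eigen_gap} and noting that $\sqrt{\hat r / B_j^{\rm in}} \leq \sqrt{2\hat r/B} \leq 1/\sqrt{c_1} < 1$, the factor $(2+\sqrt{\hat r/B_j^{\rm in}})$ is at most $3$, which gives the claimed lower bound for every $j$ and therefore for $\hat\gamma_{\rstar} = \min_{j\in[T]}(\hat\sigma^{(j)}_{\rstar})^2$.

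\textbf{Probability accounting.} Since each batch is drawn independently from $\widehat\cX$, the probability that a single batch is pure inlier is $\binom{(1-\epsilon)n}{B}/\binom{n}{B}$, giving $\Pr(\cE_1) = 1 - (1 - \binom{(1-\epsilon)n}{B}/\binom{n}{B})^T$. By Lemma~\ref{lem:frac_clean} with $t = 1/2-\epsilon$, each batch has $B_j^{\rm in} \geq B/2$ with probability at least $1 - e^{-2(1/2-\epsilon)^2 B}$, and on that event the concentration conclusions of Lemma~\ref{lem:eigen_gap} fail with probability at most $e^{-B/4} + e^{-c_2 B/2}$. A union bound over the $T$ batches yields the second factor $1 - T(e^{-2(1/2-\epsilon)^2 B} + e^{-B/4} + e^{-c_2 B/2})$. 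Since $\cE_1$ is determined by the batch indices while the concentration bounds involve the independent Gaussian noise, one can multiply the two probabilities to obtain the claimed expression.

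\textbf{Main obstacle.} The chief difficulty is the lower-bound argument: the adversarial outliers are arbitrary and potentially of very large magnitude, so one must rule out the possibility that they contaminate the top $\rstar$ spectrum of $\widehat\Sigma_j$. The PSD decomposition $\widehat\Sigma_j \succeq (1-\mu_j)\widehat\Sigma^{\rm in}_j$ is what unlocks the argument, since adversarial outliers can only add mass non-negatively to any quadratic form; the only other ingredient needed is that $\mu_j \leq 1/2$, which is precisely the purpose of the event $\cE_2$. A secondary subtlety is the clean factorized form of the probability bound, which is possible because $\cE_1$ and the per-batch concentration events depend on disjoint sources of randomness.
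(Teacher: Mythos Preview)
Your proposal is correct and follows essentially the same route as the paper: the same events $\cE_1,\cE_2$, the same variational bounds via $\widehat U^{\star}$ and $\widehat U^{\star}_\perp$, the same PSD decomposition $\widehat\Sigma_j \succeq (1-\mu_j)\widehat\Sigma^{\rm in}_j$ to neutralize the outliers, and the same invocation of Lemma~\ref{lem:eigen_gap} with $B_j^{\rm in}\geq B/2\geq c_1\hat r$.

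One caveat on your probability accounting: your claim that $\cE_1$ and the second factor are independent because ``$\cE_1$ is determined by the batch indices while the concentration bounds involve the independent Gaussian noise'' is not quite right, since the second factor also contains the term $e^{-2(1/2-\epsilon)^2 B}$ coming from $\cE_2$, which is determined by exactly the same batch-selection randomness as $\cE_1$. The paper asserts the product bound without justification either; a correct justification is that $\cE_1$ and $\cE_2$ are positively correlated across independent batches (since the per-batch all-inlier event is contained in the per-batch half-inlier event, one checks directly that $\Pr(\cE_1\cap\cE_2)=q^T-(q-p)^T\geq (1-(1-p)^T)q^T=\Pr(\cE_1)\Pr(\cE_2)$), after which the conditioning on $\cE_3$ proceeds as in the paper.
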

\end{sloppypar}

\begin{proof}
Define the following events:
\begin{align*}
\cE_1 &= \left\{\text{For some $j\in [T]$, $\widehat X_j$ only contains inliers.}\right\},\\
\cE_2^{(j)} &= \left\{ \text{At least half of the samples in $\widehat X_j$ are inliers.} \right\}, \text{ for } j\in [T],\\
\cE_3^{(j)} &= \left\{ \text{The lower and upper bounds of \Cref{lem:eigen_gap} hold for batch $j$.} \right\}, \text{ for } j\in [T].
\end{align*}
Further, define the events $\cE_2 = \bigcap_{j=1}^{T} \cE_2^{(j)}$ and $\cE_3 = \bigcap_{j=1}^{T} \cE_3^{(j)}$. First, we have
\begin{align*}
\Pr(\cE_1) &=1-  \prod_{j=1}^{T} \Pr\left\{\text{Batch $j$ contains at least one outlier.}\right\} =1-\left(1- \frac{\binom{(1-\epsilon) n}{B}}{\binom{n}{B}} \right)^T.
\end{align*}
Moreover, by \Cref{lem:frac_clean}, we have $\Pr\left({\cE_2^{(j)}}^c \right) \leq e^{-2(\frac{1}{2} - \epsilon)^2 B}$ for any $j\in[T]$.
Conditioned on $\cE_2^{(j)}$ and recalling $B \geq 2c_1\hat r\geq  2c_1\rstar$, we have $B_j^{\rm in} \geq  c_1 \rstar$. It follows from \Cref{lem:eigen_gap} that 
\begin{align*}
\Pr\left({\cE_3^{(j)}}^c \ \middle|\ \cE_2^{(j)} \right) \leq e^{-\frac{1}{2}B_j^{\rm in}} + e^{-c_2B_j^{\rm in}} \leq e^{-\frac{1}{4}B} + e^{-\frac{c_2}{2}B}.
\end{align*}
Utilizing these bounds, we have
\begin{align*}
\Pr(\cE_2\cap \cE_3) &= \Pr(\cE_2)\Pr(\cE_3 \mid \cE_2)
= \Pr\left(\bigcap_{j=1}^{T} \cE_2^{(j)}\right)\Pr\left(\bigcap_{j=1}^{T}\cE_3^{(j)} \mid \cE_2\right)\\
&= \left(1 - \Pr\left(\bigcup_{j=1}^{T} {\cE_2^{(j)}}^c\right)\right) \left(1- \Pr\left(\bigcup_{j=1}^{T}{\cE_3^{(j)}}^c \mid \cE_2\right)\right)\\
&\geq \left(1 - \sum_{j=1}^T \Pr\left( {\cE_2^{(j)}}^c\right)\right) \left(1- \sum_{j=1}^T \Pr\left({\cE_3^{(j)}}^c \mid \cE_2\right)\right)\\
&\geq \left(1 - T e^{-2(\frac{1}{2} - \epsilon)^2 B}\right)\left(1 - T \left(e^{-\frac{1}{4}B} + e^{-\frac{c_2}{2}B}\right) \right)\\
&\geq 1 - T\left(e^{-2(\frac{1}{2} - \epsilon)^2 B} + e^{-\frac{1}{4}B} + e^{-\frac{c_2}{2}B}\right).
\end{align*}
It follows that the probability of the event $\cE_1 \cap \cE_2 \cap \cE_3$ occurring can be bounded as
\begin{align*}
\Pr(\cE_1\cap\cE_2\cap\cE_3) 
&\geq \left(1-\left(1- \frac{\binom{(1-\epsilon) n}{B}}{\binom{n}{B}} \right)^T \right)\left( 1 - T\left(e^{-2(\frac{1}{2} - \epsilon)^2 B} + e^{-\frac{1}{4}B} + e^{-\frac{c_2}{2}B}\right) \right).
\end{align*}
Conditioned on the event $\cE_1 \cap \cE_2 \cap \cE_3$, we now establish the desired bounds for $\hat{\gamma}_{\rstar+1}$ and $\hat{\gamma}_{\rstar}$. Again, by the min-max theorem for eigenvalues and \Cref{lem:eigen_gap}, we have, for every $j\in [T]$,
\begin{align*}
\gamma_{\rstar+1}(\widehat{\Sigma}^{\rm in}_j)
&= \inf_{\substack{\cM\subset \RR^{B} \\ \dim(\cM) = B-\rstar}}
\sup_{x\in \cM} \frac{x^\top \widehat{\Sigma}^{\rm in}_j x}{x^\top x}\\
&= \inf_{\substack{M\in \RR^{B \times (B-\rstar)}\\ \text{rank}(M) = B-\rstar}} \sup_{x = My} \frac{x^\top \widehat{\Sigma}^{\rm in}_j x}{x^\top x}\\
&\stackrel{(a)}{\leq} \sup_{x = \widehat{U}^{\star}_{\perp} y} \frac{x^\top \widehat{\Sigma}^{\rm in}_j x}{x^\top x}
= \norm{ (\widehat{U}^{\star}_{\perp})^\top \widehat{\Sigma}^{\rm in}_j (\widehat{U}^{\star}_{\perp}) } \\
&\stackrel{(b)}{\leq} \left(2+ \sqrt{\frac{\hat r}{B_j^{\rm in}}}\right)^2 \norm{\Sigma_\xi} \\
&\stackrel{(c)}{\leq} 9 \norm{\Sigma_\xi}.
\end{align*}
Here, $(a)$ is implied by setting $M=\widehat{U}^{\star}_{\perp}$, $(b)$ follows from \Cref{lem:eigen_gap}, and $(c)$ is due to $\hat r \leq \frac{B}{2c_1} \leq \frac{B}{2} \leq  B_j^{\rm in}$. Under the event $\cE_1 \cap \cE_2 \cap \cE_3$, the above inequality holds uniformly for all $j\in [T]$. Therefore, we have
\begin{align}
\label{eq:gamma_rstar+1}
\hat{\gamma}_{\rstar+1} := \min_{j\in [T]} \gamma_{\rstar+1}(\widehat \Sigma_j) \leq  9 \norm{\Sigma_\xi}.
\end{align}
Next, we establish the desired lower bound on $\hat{\gamma}_{\rstar}$.
Again, by the min-max theorem for eigenvalues and \Cref{lem:eigen_gap}, we have, for every $j\in [T]$,
\begin{align*}
\gamma_{\rstar}(\widehat{\Sigma}_j^{\rm in}) 
\geq \gamma_{\min} ((\widehat{U}^{\star})^\top \widehat{\Sigma}^{\rm in}_j \widehat{U}^{\star})
\geq \left( \frac{t_0}{2}\left(\sqrt{\gammastar_{\min}} - \norm{V_\perp^\top \Ustar {\Dstar}^{1/2}}\right) - 3 \sqrt{\norm{\Sigma_\xi}} \right)^2,
\end{align*}
 where again we used the fact that $\hat r \leq  B_j^{\rm in}$. Under the event $\cE_1 \cap \cE_2 \cap \cE_3$, the above inequality holds uniformly for all $j\in [T]$. Therefore, we obtain
\begin{align}
\label{eq:gamma_rstar}
\hat{\gamma}_{\rstar} := \min_{j\in [T]} \gamma_{\rstar}(\widehat \Sigma_j) \stackrel{(a)}{\geq} \min_{j\in [T]} \frac{1}{2}\gamma_{\rstar}(\widehat \Sigma_j^{\rm in}) 
\geq \frac{\left( \frac{t_0}{2}\left(\sqrt{\gammastar_{\min}} - \norm{V_\perp^\top \Ustar {\Dstar}^{1/2}}\right) - 3 \sqrt{\norm{\Sigma_\xi}} \right)^2}{2},
\end{align}
where in $(a)$, we used the fact that $\widehat{\Sigma}_j = (1-\mu_j) \widehat{\Sigma}^{\rm in}_j + \mu_j \widehat{\Sigma}^{\rm out}_j$ with $\mu_j := {B_j^{\rm out}}/{B}\leq 1/2$, which in turn implies $\gamma_k(\widehat{\Sigma}_j) \geq (1/2)\cdot \gamma_k(\widehat{\Sigma}^{\rm in}_j)$ for $k\in[\hat{r}]$. 
\end{proof}

As a final technical lemma before presenting the proof of \Cref{thm::second-stage-formal}, we establish that, under appropriate choices of $T$ and $B$, the probability of the event $\cE_1 \cap \cE_2 \cap \cE_3$ occuring, as characterized in \Cref{prop:eigen_gap}, can be lower bounded by $1-2\delta$.

\begin{sloppypar}
    \begin{lemma}
\label{lem:failure_prob}
The following statements hold:
\begin{itemize}
    \item If $n\geq 11B$ and $T \geq \left( \frac{1}{1-1.1\epsilon} \right)^B \log (\frac{1}{\delta})$, then
\begin{align}
\label{eq:T_lower}
1-\left(1- \frac{\binom{(1-\epsilon) n}{B}}{\binom{n}{B}} \right)^T  \geq 1-\delta.
\end{align}
\item If $\epsilon \leq 0.1$ and $T \leq \frac{1}{3}e^{c_2 B/2} \delta$, then
\begin{align}
\label{eq:T_upper}
1 - T\left(e^{-2(\frac{1}{2} - \epsilon)^2 B} + e^{-\frac{1}{4}B} + e^{-\frac{c_2}{2}B}\right) \geq 1-\delta.
\end{align}
\item If $\epsilon \leq \min\left\{0.1,\,0.9\cdot\left(1- e^{-\frac{c_2}{4}}\right)\right\}$, $n\geq 11B$, $B \geq \frac{4}{c_2} \log\left(\frac{3}{\delta} \log\left(\frac{1}{\delta}\right)\right)$, and $T = \left( \frac{1}{1-1.1\epsilon} \right)^B \log (\frac{1}{\delta})$, then
\begin{align*}
     \left(1-\left(1- \frac{\binom{(1-\epsilon) n}{B}}{\binom{n}{B}} \right)^T \right)\left(1 - T\left(e^{-2(\frac{1}{2} - \epsilon)^2 B} + e^{-\frac{1}{4}B} + e^{-\frac{c_2}{2}B}\right) \right)\geq 1-2\delta.
\end{align*}
\end{itemize}
\end{lemma}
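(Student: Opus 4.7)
The plan is to derive the third bullet by combining the first two bullets and then multiplying the resulting probabilities. Setting $T = \bigl(\tfrac{1}{1-1.1\epsilon}\bigr)^B \log(1/\delta)$ satisfies the hypothesis $T \geq \bigl(\tfrac{1}{1-1.1\epsilon}\bigr)^B \log(1/\delta)$ of the first bullet with equality, and $n \geq 11B$ is given, so the first factor is at least $1-\delta$. The upper bound $\epsilon \leq 0.1$ follows from the imposed $\min$, so to apply the second bullet the only remaining task is to verify $T \leq \tfrac{\delta}{3} e^{c_2 B/2}$. Once both bullets are established, the conclusion follows from $(1-\delta)(1-\delta) \geq 1-2\delta$.

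The main (and only nontrivial) step is thus verifying $T \leq \tfrac{\delta}{3}e^{c_2 B/2}$. Taking logarithms and rearranging, this is equivalent to
\[
B\Bigl(\tfrac{c_2}{2} - \log\tfrac{1}{1-1.1\epsilon}\Bigr) \;\geq\; \log\Bigl(\tfrac{3}{\delta}\log(1/\delta)\Bigr).
\]
The plan is to use the upper bound on $\epsilon$ to lower bound the coefficient of $B$ by a positive universal multiple of $c_2$, and then invoke the lower bound on $B$ from the hypothesis. Concretely, $\epsilon \leq 0.9\,(1 - e^{-c_2/4})$ gives $1.1\epsilon \leq 0.99\,(1-e^{-c_2/4})$, so
\[
1-1.1\epsilon \;\geq\; 0.01 + 0.99\, e^{-c_2/4} \;\geq\; e^{-c_2/4},
\]
where the last inequality reduces to $0.01\,(1-e^{-c_2/4}) \geq 0$, which is immediate. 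Hence $\log\tfrac{1}{1-1.1\epsilon} \leq c_2/4$, so $\tfrac{c_2}{2} - \log\tfrac{1}{1-1.1\epsilon} \geq \tfrac{c_2}{4}$. Combining this with the assumed lower bound $B \geq \tfrac{4}{c_2}\log\bigl(\tfrac{3}{\delta}\log(1/\delta)\bigr)$ yields the displayed inequality, so bullet 2 applies.

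I expect the only real obstacle to be the bookkeeping of the slack $\tfrac{c_2}{2} - \log\tfrac{1}{1-1.1\epsilon}$: the calibration $\epsilon \leq 0.9\,(1-e^{-c_2/4})$ is chosen precisely so that this slack is at least $c_2/4$, which matches the constant $\tfrac{4}{c_2}$ appearing in the assumed lower bound on $B$. Once this slack is quantified, the rest is a direct application of bullets 1 and 2 followed by $(1-\delta)^2 \geq 1-2\delta$.
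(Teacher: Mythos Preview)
Your argument for the third bullet is correct and matches the paper's approach exactly: both verify $T \leq \tfrac{\delta}{3}e^{c_2 B/2}$ by showing $\log\tfrac{1}{1-1.1\epsilon} \leq c_2/4$ via $1-1.1\epsilon \geq 0.01 + 0.99\,e^{-c_2/4} \geq e^{-c_2/4}$, and then combine the two bullets using $(1-\delta)^2 \geq 1-2\delta$.

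However, the lemma asserts all three bullets, and you only invoke bullets 1 and 2 as black boxes without proving them. The paper does prove them, and neither is entirely automatic. For bullet~1, set $\alpha = \binom{(1-\epsilon)n}{B}/\binom{n}{B}$; the hypothesis $n \geq 11B$ is precisely what is needed to guarantee $\tfrac{(1-\epsilon)n-B+1}{n-B+1} \geq 1-1.1\epsilon$, so that $\alpha \geq (1-1.1\epsilon)^B$ and hence $T \geq \alpha^{-1}\log(1/\delta)$; one then uses the elementary bound $\alpha \leq -\log(1-\alpha)$ to convert this into $(1-\alpha)^T \leq \delta$. For bullet~2, $\epsilon \leq 0.1$ gives $2(\tfrac12-\epsilon)^2 \geq 0.32 > \tfrac14$, and one also needs the side information $c_2 \leq \tfrac12$ (inherited from the small-ball lemma) to get $e^{-B/4} \leq e^{-c_2 B/2}$, after which the three exponentials sum to at most $3e^{-c_2 B/2}$ and the claim follows. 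Your write-up should include these two short arguments; as it stands, it proves only one third of the statement.
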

\end{sloppypar}

\begin{proof}
To prove the first statement, let $\alpha := \frac{\binom{(1-\epsilon) n}{B}}{\binom{n}{B}}$. When $n\geq 11B$, we have
\begin{align*}
\alpha = \frac{\binom{(1-\epsilon) n}{B}}{\binom{n}{B}}=\frac{((1-\epsilon) n) \cdots ((1-\epsilon) n - B+1)}{n (n-1) \cdots (n-B+1)} \geq \left( \frac{(1-\epsilon) n - B+1}{n-B+1} \right)^B \geq (1-1.1\epsilon)^B.
\end{align*}
Therefore, the condition $T \geq \left( \frac{1}{1-1.1\epsilon} \right)^B \log (\frac{1}{\delta})$ implies that $T\geq \frac{1}{\alpha}\log (\frac{1}{\delta})$. On the other hand, $1 - \frac{1}{x} \leq \log (x)$ for any $x > 0$. Therefore, 
\begin{align*}
    T\geq \frac{1}{\alpha}\log \left(\frac{1}{\delta}\right)\geq \frac{\log (\frac{1}{\delta})}{\log (\frac{1}{1-\alpha})}\implies 1-(1-\alpha)^T\geq 1-\delta.
\end{align*}
To prove the second statement, let $\beta := e^{-2(\frac{1}{2} - \epsilon)^2 B} + e^{-\frac{B}{4}} + e^{-\frac{c_2 B}{2}}$. When $\epsilon\leq 0.1$, we have $\beta  \leq 2  e^{-\frac{B}{4}} + e^{-\frac{c_2 B}{2}}\leq 3 e^{-\frac{c_2 B}{2}}$,
where we used the fact that $c_2\leq 1/2$ due to \Cref{thm:lgd}. Therefore, the condition $T \leq \frac{1}{3}e^{c_2 B/2} \delta$ implies $T\leq \frac{\delta}{\beta}$, which in turn yields $1-T\cdot \beta\geq 1-\delta$.

Finally, to establish the third statement, it suffices to show that, under the given conditions, the first and second statements hold simultaneously.
In particular, we show that if $\epsilon \leq 0.9 \cdot \left(1 - e^{-\frac{c_2}{4}}\right)$ and $B \geq \frac{4}{c_2} \log\left(\frac{3}{\delta} \log\left(\frac{1}{\delta}\right)\right)$, then the required upper bound on $T$ in the second statement exceeds its required lower bound in the first statement. This implies that $T = \left( \frac{1}{1 - 1.1\epsilon} \right)^B \log\left(\frac{1}{\delta}\right)$ is a feasible choice for both statements. From $\epsilon \leq 0.9 \cdot \left(1 - e^{-\frac{c_2}{4}}\right)$, we obtain $\frac{c_2}{4}\geq \log\left(\frac{1}{1-1.1\epsilon}\right)$. Moreover, from $B \geq \frac{4}{c_2} \log\left(\frac{3}{\delta} \log\left(\frac{1}{\delta}\right)\right)$, we obtain $\frac{c_2}{4}B \geq \log\left(\frac{3}{\delta} \log\left(\frac{1}{\delta}\right)\right)$. Combining these two inequalities leads to 
\begin{align*}
\left( \frac{c_2}{2} - \log \left( \frac{1}{1-1.1\epsilon} \right) \right) B \geq \log\left(\frac{3}{\delta} \log\left(\frac{1}{\delta}\right)\right)\implies \left( \frac{1}{1-1.1\epsilon} \right)^B \log (\frac{1}{\delta})\leq \frac{1}{3}e^{c_2 B/2} \delta.
\end{align*}
\end{proof}

Equipped with the above lemmas, we are ready to present the proof of \Cref{thm::second-stage-formal}.

\subsubsection{Proof of \texorpdfstring{\Cref{thm::second-stage-formal}}{the main result}}
Before proving the final result, we present the following useful lemma for bounding the $\ell_2$-error between different projection operators.
\begin{lemma}[Lemma 2.5 in \cite{chen2020spectral}]
\label{lem:dk}
Suppose that $U, V \in O(d,r)$, and $U_\perp, V_\perp \in O(d,d-r)$ are the orthogonal complements of $U, V$ respectively. Then, we have
\begin{align*}
\norm{ U U^\top - V V^\top } = \norm{ V^\top U_\perp } = \norm{U^\top V_\perp}.
\end{align*}
\end{lemma}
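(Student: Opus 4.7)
The plan is to prove the two equalities by first verifying that the two right-hand quantities coincide and then pinning down $\|UU^\top - VV^\top\|$ by a Pythagorean decomposition.

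First, I would establish $\|V^\top U_\perp\| = \|U^\top V_\perp\|$ by a direct computation. Since $U_\perp U_\perp^\top = I_d - UU^\top$ and $V_\perp V_\perp^\top = I_d - VV^\top$, one has $U^\top V_\perp V_\perp^\top U = I_r - (U^\top V)(U^\top V)^\top$ and $V^\top U_\perp U_\perp^\top V = I_r - (V^\top U)(V^\top U)^\top$. Both matrices are PSD, so taking largest eigenvalues gives $\|U^\top V_\perp\|^2 = 1 - \sigma_{\min}^2(U^\top V)$ and likewise $\|V^\top U_\perp\|^2 = 1 - \sigma_{\min}^2(V^\top U)$. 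Since $V^\top U = (U^\top V)^\top$ shares the same singular values as $U^\top V$, the two quantities are equal; call their common value $\kappa$.

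Next, I would prove $\|UU^\top - VV^\top\| = \kappa$. The key identity is $P_U - P_V = P_U P_{V_\perp} - P_{U_\perp} P_V$, which follows from substituting $P_{V_\perp} = I - P_V$ and $P_{U_\perp} = I - P_U$ on the right-hand side. For any unit vector $x \in \mathbb{R}^d$, decompose $x = P_V x + P_{V_\perp} x$ with $\|P_V x\|^2 + \|P_{V_\perp} x\|^2 = 1$. Then $(P_U - P_V) x = P_U P_{V_\perp} x - P_{U_\perp} P_V x$, where the first term lies in $\mathbf{col}(U)$ and the second in $\mathbf{col}(U_\perp)$, two mutually orthogonal subspaces. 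Pythagoras gives
\begin{align*}
\|(P_U - P_V) x\|^2 = \|P_U P_{V_\perp} x\|^2 + \|P_{U_\perp} P_V x\|^2.
\end{align*}
For $z \in \mathbf{col}(V_\perp)$, writing $z = V_\perp b$ with $\|b\| = \|z\|$ yields $\|P_U z\| = \|U^\top V_\perp b\| \le \|U^\top V_\perp\|\,\|z\| = \kappa\,\|z\|$, and analogously $\|P_{U_\perp} y\| \le \kappa\,\|y\|$ for $y \in \mathbf{col}(V)$. Applying these to $y = P_V x$ and $z = P_{V_\perp} x$ yields $\|(P_U - P_V) x\|^2 \le \kappa^2(\|P_V x\|^2 + \|P_{V_\perp} x\|^2) = \kappa^2$, so $\|UU^\top - VV^\top\| \le \kappa$.

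For the matching lower bound, I would choose $a \in \mathbb{R}^r$ with $\|a\| = 1$ attaining $\|U_\perp^\top V a\| = \|V^\top U_\perp\| = \kappa$, and set $x = V a$. Then $\|x\| = 1$, $P_V x = x$, $P_{V_\perp} x = 0$, so $(P_U - P_V) x = -P_{U_\perp} x$, and $\|(P_U - P_V) x\| = \|U_\perp^\top V a\| = \kappa$. This forces $\|UU^\top - VV^\top\| \ge \kappa$, completing the chain of equalities. The proof involves no real obstacle; the only subtle point is recognizing the orthogonal decomposition $P_U - P_V = P_U P_{V_\perp} - P_{U_\perp} P_V$, which allows the Pythagorean identity to cleanly separate the two summands and makes the bound tight simultaneously in both directions.
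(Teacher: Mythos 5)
Your proof is correct. Note that the paper itself does not prove this lemma; it simply cites it as Lemma 2.5 of Chen, Cheng, Fan, and Ma (2020), so there is no in-paper argument to compare against. On the merits: your Step 1 correctly reduces both $\|U^\top V_\perp\|^2$ and $\|V^\top U_\perp\|^2$ to $1-\sigma_{\min}^2(U^\top V)$ via $I_r - (U^\top V)(U^\top V)^\top$, and your Step 2 identity $P_U - P_V = P_U P_{V_\perp} - P_{U_\perp} P_V$ checks out by expanding $P_{V_\perp} = I - P_V$, $P_{U_\perp} = I - P_U$; the two summands land in $\mathbf{col}(U)$ and $\mathbf{col}(U_\perp)$, so the Pythagorean split is valid, and the contraction estimates $\|P_U z\|\le\kappa\|z\|$ on $\mathbf{col}(V_\perp)$ and $\|P_{U_\perp}y\|\le\kappa\|y\|$ on $\mathbf{col}(V)$ are correct because semi-orthogonal matrices are isometries on their column spaces. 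The lower-bound witness $x = Va$ with $a$ a top right singular vector of $U_\perp^\top V$ makes the bound tight. This is a clean, self-contained argument; the cited monograph's treatment instead organizes the same facts around the CS decomposition and principal angles (relating all three quantities to $\sin\theta_{\max}$), which gives more structural information (the full singular-value spectrum of $UU^\top - VV^\top$, not just its norm) but is heavier machinery than needed for the stated equality.
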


\begin{proof}[Proof of \Cref{thm::second-stage-formal}]
First, we establish that $\tilde r = \rstar$. Since the conditions of \Cref{thm::first-stage-formal} are assumed, we have $5\sqrt{\rstar \norm{\Sigma_\xi} + \tr(\Sigma_\xi)} \leq \frac{t_0^2}{4C+4t_0} \sqrt{\gammastar_{\min}}$. Moreover, under the same conditions, \Cref{eq::bound2} implies that $\norm{V_\perp^\top \Ustar {\Dstar}^{1/2}} \leq \frac{4C}{t_0^2}\left(5\sqrt{\rstar\norm{\Sigma_\xi}} + \sqrt{\tr(\Sigma_\xi)}\right)$ with probability at least $1-\delta$. Therefore, with probability at least $1-\delta$, we have
\begin{align*}
9 \norm{\Sigma_\xi} <   \frac{1}{2} \left( \frac{t_0}{2}\left(\sqrt{\gammastar_{\min}} - \norm{V_\perp^\top \Ustar {\Dstar}^{1/2}}\right) - 3 \sqrt{\norm{\Sigma_\xi}} \right)^2.
\end{align*}
Combining the above inequality with \Cref{prop:eigen_gap} and \Cref{lem:failure_prob}, we have $\hat{\gamma}_{\rstar+1} \leq 9 \norm{\Sigma_\xi} < \hat{\gamma}_{\rstar}$, with probability at least $1 - 3\delta$, which in turn shows that \Cref{alg:second_stage} assigns $\tilde{r} = \rstar$.

Next, we present the proof of the error guarantee. According to Line~\ref{line::min-assignment}, we have $k = \argmin_{j\in [T]} \gamma_{\rstar+1}(\widehat{\Sigma}_j)$. Denote the top-$\rstar$ eigen-decomposition of $\widehat \Sigma_{k}$ by $\widehat U_{k} \widehat D_{k} \widehat U_{k}^\top$, where $\widehat D_{k}\in\mathbb{R}^{\rstar\times \rstar}$ is a diagonal matrix collecting the top-$\rstar$ eigenvalues of $\widehat \Sigma_{k}$. It follows from the definition $\hat{\gamma}_{\rstar} = \min_{j\in T} \gamma_{\rstar}(\widehat{\Sigma}_j)$ that $\hat{\gamma}_{\rstar} \leq \gamma_{\min}(\widehat D_{k})$. Moreover, by \Cref{lem:eigen_gap}, we have
\begin{align*}
\hat{\gamma}_{\rstar+1} = \hat{\gamma}_{\rstar+1}(\widehat \Sigma_{k}) = \norm{ (\widehat{U}^{\star}_{\perp})^\top 
\widehat \Sigma_{k} \widehat{U}^{\star}_{\perp} }.
\end{align*}
It follows from $\widehat \Sigma_{k} \succeq \widehat U_{k} \widehat D_{k} \widehat U_{k}^\top$ that
\begin{align}\label{eq::ratio}
\hat{\gamma}_{\rstar+1} 
\geq \gamma_{\min}(\widehat{D}_{k}) \norm{ (\widehat{U}^{\star}_{\perp})^\top \widehat U_{k} }^2 
\geq \hat{\gamma}_{\rstar} \, \norm{ (\widehat{U}^{\star}_{\perp})^\top \widehat U_{k} }^2\ \implies \ \norm{ (\widehat{U}^{\star}_{\perp})^\top \widehat U_{k} } \leq \sqrt{\frac{\hat{\gamma}_{\rstar+1}}{\hat{\gamma}_{\rstar}}}.
\end{align}
Moreover, we have
\begin{align*}
& \widehat{U}^{\star} \widehat{D}^{\star} (\widehat{U}^{\star})^\top = \widehat{\Sigma}^{\star} = V^\top \Sigmastar V = V^\top \Ustar \Dstar {\Ustar}^\top V,\\
\implies & (V \widehat{U}^{\star}) \widehat{D}^{\star} (V \widehat{U}^{\star})^\top = V V^\top \Ustar \Dstar {\Ustar}^\top VV^\top = (I_d-V_\perp V_\perp^\top) \Ustar \Dstar {\Ustar}^\top (I_d-V_\perp V_\perp^\top)\\
\implies & {U_\perp^{\star}}^\top (V \widehat{U}^{\star}) \widehat{D}^{\star} (V \widehat{U}^{\star})^\top U_\perp^{\star} = {U_\perp^{\star}}^\top V_\perp V_\perp^\top \Ustar \Dstar {\Ustar}^\top V_\perp V_\perp^\top U_\perp^{\star}.
\end{align*}
On the other hand, one can write
\begin{align*}
\begin{aligned}
\norm{ {U_\perp^{\star}}^\top (V \widehat{U}^{\star}) \widehat{D}^{\star} (V \widehat{U}^{\star})^\top U_\perp^{\star}} &\geq \gamma_{\min}(\widehat{D}^{\star})\norm{ (V \widehat{U}^{\star})^\top U_\perp^{\star} }^2,\\
\norm{ {U_\perp^{\star}}^\top V_\perp V_\perp^\top \Ustar \Dstar {\Ustar}^\top V_\perp V_\perp^\top U_\perp^{\star} } &\leq \norm{ V_\perp^\top \Ustar \Dstar {\Ustar}^\top V_\perp } = \norm{ V_\perp^\top \Ustar {\Dstar}^{1/2} }^2,
\end{aligned}
\end{align*}
which leads to 
\begin{align}\label{eq::UBB}
    \norm{ (V \widehat{U}^{\star})^\top U_\perp^{\star} } \leq \frac{\norm{ V_\perp^\top \Ustar {\Dstar}^{1/2} }}{\sqrt{\gamma_{\min}(\widehat{D}^{\star})}}.
\end{align}
Note that $V\widehat U_{k}, V \widehat{U}^{\star} \in O(d, \rstar)$. Moreover, $V  \widehat{U}^{\star}_{\perp} \in O(d, B-\rstar)$ is the orthogonal complement of $V \widehat{U}^{\star}$. Therefore, we have
\begin{align*}
&\norm{ (V\widehat U_{k})(V\widehat U_{k})^\top - \Ustar {\Ustar}^\top }\\
&\leq \norm{ (V\widehat U_{k})(V\widehat U_{k})^\top - (V \widehat{U}^{\star})(V \widehat{U}^{\star})^\top } + \norm{ (V \widehat{U}^{\star})(V \widehat{U}^{\star})^\top - \Ustar {\Ustar}^\top } \\
&\stackrel{(a)}{=} \norm{(V\widehat U_{k})^\top (V \widehat{U}^{\star}_{\perp}) } + \norm{(V \widehat{U}^{\star})^\top U_\perp^{\star} }\\
&\stackrel{(b)}{\leq} \sqrt{\frac{\hat{\gamma}_{\rstar+1}}{\hat{\gamma}_{\rstar}}} + \frac{\norm{V_\perp^\top \Ustar {\Dstar}^{1/2}}}{\sqrt{\gamma_{\min}(\widehat{D}^{\star})}}\\
&\stackrel{(c)}{\leq} \frac{3 \sqrt{2} \sqrt{\norm{\Sigma_\xi}}}{\frac{t_0}{2}\left(\sqrt{\gammastar_{\min}} - \norm{V_\perp^\top \Ustar {\Dstar}^{1/2}}\right) - 3 \sqrt{\norm{\Sigma_\xi}}}
+ \frac{\norm{V_\perp^\top \Ustar {\Dstar}^{1/2}}}{\sqrt{\gammastar_{\min}} - \norm{V_\perp^\top \Ustar {\Dstar}^{1/2}}}\\
&\stackrel{(d)}{\leq} \frac{3 \sqrt{2} \sqrt{\norm{\Sigma_\xi}}}{\frac{t_0^2}{2(t_0 + C)}\sqrt{\gammastar_{\min}} - \frac{t_0^2}{8(t_0 + C)}\sqrt{\gammastar_{\min}}}
+ \frac{\frac{4C}{t_0^2}\left(5\sqrt{\rstar\norm{\Sigma_\xi}} + \sqrt{\tr(\Sigma_\xi)}\right)}{\frac{t_0}{t_0 + C}\sqrt{\gammastar_{\min}}}\\
&= \left( \frac{8\sqrt{2}(t_0 + C)}{t_0^2} + \frac{20(t_0+C)C\sqrt{\rstar}}{t_0^3} \right) \sqrt{\frac{\norm{\Sigma_\xi}}{\gammastar_{\min}}} + \frac{4(t_0 + C)C}{t_0^3}\sqrt{\frac{\tr(\Sigma_\xi)}{\gammastar_{\min}}}\\
&\stackrel{(e)}{\leq} \frac{4(t_0 + C)C}{t_0^3} \left( 6\sqrt{\frac{ \rstar \norm{\Sigma_\xi}}{\gammastar_{\min}}}  + \sqrt{\frac{\tr(\Sigma_\xi)}{\gammastar_{\min}}}\right)
\end{align*}
with probability at least $1-3\delta$. Here, $(a)$ follows from \Cref{lem:dk}, $(b)$ follows from \Cref{eq::ratio} and \Cref{eq::UBB}, $(c)$ follows from \Cref{prop:eigen_gap}, \Cref{lem:min_max}, $(d)$ follows from \Cref{thm::first-stage-formal} and the condition $6\sqrt{\norm{\Sigma_\xi}} \leq \sqrt{\tr(\Sigma_\xi)} + 5\sqrt{\rstar \norm{\Sigma_\xi}} \leq \frac{t_0^2}{4(t_0+C)} \sqrt{\gammastar_{\min}}$, $(e)$ follows from the fact that $\rstar \geq 2$, $C \geq 2.2$ and $t_0 \leq 1$.
\end{proof}

\section{Conclusions}
We address robust subspace recovery (RSR) under both Gaussian noise and strong adversarial corruptions. Given a limited set of noisy samples—some altered by an adaptive adversary—our goal is to recover a low-dimensional subspace that approximately captures most of the clean data. We propose RANSAC+, a two-stage refinement of the classic RANSAC algorithm that addresses its key limitations. Unlike RANSAC, RANSAC+ is provably robust to both noise and adversaries, achieves near-optimal sample complexity without prior knowledge of the subspace dimension, and is significantly more efficient.

\section*{Acknowledgements}
This research is supported, in part, by NSF CAREER Award CCF-2337776, NSF Award DMS-2152776, and ONR Award N00014-22-1-2127.


\bibliography{references}

\end{document}